\newtheorem{theorem}{Theorem}
\newcommand\Ba{\bm{a}}
\newcommand\Bg{\bm{g}}
\newcommand\Bh{\bm{h}}
\newcommand\Bm{\bm{m}}
\newcommand\Bu{\bm{u}}
\newcommand\Bv{\bm{v}}
\newcommand\Bw{\bm{w}}
\newcommand\Bx{\bm{x}}
\newcommand\By{\bm{y}}
\newcommand\Bz{\bm{z}}
\newcommand\BA{\bm{A}}
\newcommand\BC{\bm{C}}
\newcommand\BG{\bm{G}}
\newcommand\BH{\bm{H}}
\newcommand\BI{\bm{I}}
\newcommand\BM{\bm{M}}
\newcommand\BN{\bm{N}}
\newcommand\BX{\bm{X}}
\newcommand\BZ{\bm{Z}}
\newcommand\Bla{\bm{\lambda}}
\newcommand\Bmu{\bm{\mu}}
\newcommand\Bnu{\bm{\nu}}
\newcommand\Bth{\bm{\theta}}
\newcommand\Bph{\bm{\phi}}
\newcommand\BDe{\bm{\Delta}}
\newcommand\BPh{\bm{\Phi}}
\newcommand\BPs{\bm{\Psi}}
\newcommand\BSi{\bm{\Sigma}}
\newcommand\BGa{\bm{\Gamma}}
\newcommand\BTh{\bm{\Theta}}
\newcommand\BZe{\bm{0}}
 \newcommand{\dR}{\mathbb{R}}
 \newcommand{\dZ}{\mathbb{Z}}
\newcommand{\rE}{\mathrm{E}}
\newcommand{\rO}{\mathrm{O}} \newcommand{\rP}{\mathrm{P}}
 \newcommand{\cD}{\mathcal{D}}
 \newcommand{\cF}{\mathcal{F}}
 \newcommand{\cL}{\mathcal{L}}
 \newcommand{\cN}{\mathcal{N}}
\newcommand{\cU}{\mathcal{U}} 
 \newcommand{\cX}{\mathcal{X}}
 \newcommand{\Rd}{\mathrm{d}}
\newcommand{\Ro}{\mathrm{o}} \newcommand{\Rp}{\mathrm{p}}
\newcommand\EXP{\mathbf{\mathrm{E}}}
\newcommand\TR{\mathbf{\mathrm{Tr}}}
\renewcommand{\leq}{\leqslant}
\renewcommand{\geq}{\geqslant}
\begin{document}

\title{GANs Trained by a Two Time-Scale Update Rule
Converge to a Local Nash Equilibrium}

\author{
 Martin Heusel
 \And
 Hubert Ramsauer
 \And
 Thomas Unterthiner
 %\And
 %G\"{u}nter Klambauer
 \And
 Bernhard Nessler
 \And
 Sepp Hochreiter\\
  \\
 %Linz Institute of Technology AI Lab \& Institute of Bioinformatics, \\
 LIT AI Lab \& Institute of Bioinformatics, \\
 Johannes Kepler University Linz\\
 A-4040 Linz, Austria\\
\texttt{\{mhe,ramsauer,unterthiner,nessler,hochreit\}@bioinf.jku.at}
}

\maketitle

\begin{abstract}
Generative Adversarial Networks (GANs) excel at creating realistic images with
complex models for which maximum likelihood is infeasible. However, the
convergence of GAN training has still not been proved.
We propose a two time-scale update rule (TTUR) for training GANs with stochastic
gradient descent on arbitrary GAN loss functions.
TTUR has an individual learning rate for both the discriminator
and the generator. Using the theory of stochastic approximation, we
prove that the TTUR converges under mild assumptions to a stationary local Nash equilibrium.
The convergence carries over to the popular Adam optimization, for which we
prove that it follows the dynamics of a heavy ball with friction and thus
prefers flat minima in the objective landscape.
For the evaluation of the performance of GANs at image generation, we introduce
the `Fr\'{e}chet Inception Distance'' (FID) which captures the similarity of
generated images to real ones better than the Inception Score. In experiments,
TTUR improves learning for DCGANs and Improved Wasserstein GANs (WGAN-GP)
outperforming conventional GAN training on CelebA, CIFAR-10, SVHN, LSUN
Bedrooms, and the One Billion Word Benchmark.
\end{abstract}

\section*{Introduction}
\label{sec:introduction}

Generative adversarial networks (GANs) \cite{Goodfellow:14nips} have achieved
outstanding results in generating realistic images
\cite{Radford:15,Ledig:16,Isola:17,Arjovsky:17,Berthelot:17}
and  producing text \cite{Gulrajani:17}. GANs can learn complex generative models
for which maximum likelihood or a variational approximations are infeasible.
Instead of the likelihood, a discriminator network serves as objective for the
generative model, that is, the generator.
GAN learning is a game between the generator, which constructs synthetic data
from random variables, and the discriminator, which separates synthetic data
from real world data. The generator's goal is to construct data in such a way
that the discriminator cannot tell them apart from real world data.
Thus, the discriminator tries to minimize the synthetic-real discrimination
error while the generator tries to maximize this error.
Since training GANs is a game and its solution is a Nash equilibrium, gradient
descent may fail to converge
\cite{Salimans:16,Goodfellow:14nips,Goodfellow:17tutorial}.
Only \textit{local} Nash equilibria
are found, because gradient descent is a local optimization method.
If there exists a local neighborhood around a point in parameter space
where neither the generator nor the discriminator can unilaterally decrease
their respective losses, then we call this point a local Nash equilibrium.

To characterize the convergence properties of training general GANs is still an
open challenge \cite{Goodfellow:14criteria,Goodfellow:17tutorial}.
For special GAN variants, convergence can be proved under certain assumptions
\cite{Lim:17,Grnarova:17,Tolstikhin:17}. A prerequisit for many convergence
proofs is local stability \cite{Kushner:03} which was shown for GANs by
Nagarajan and Kolter \cite{Nagarajan:17} for a min-max GAN setting. However,
Nagarajan and Kolter require for their proof either 
rather strong and unrealistic assumptions or a restriction to a linear discriminator.
Recent convergence proofs for GANs hold for expectations over training samples
or for the number of examples going to infinity \cite{Li:17mmd,Mroueh:17fisher,Liu:17,Arora:17},
thus do not consider mini-batch learning which leads to a stochastic gradient
\cite{Wang:17,Hjelm:17,Mescheder:17,Li:17}.

Recently actor-critic learning has been analyzed using stochastic approximation.
Prasad et al.\ \cite{Prasad:15} showed that
a two time-scale update rule ensures that training reaches a
stationary local Nash equilibrium if the critic learns
faster than the actor.
Convergence was proved via an ordinary
differential equation (ODE), whose stable limit points coincide with stationary
local Nash equilibria. We follow the same approach.
We prove that GANs converge to a local Nash equilibrium when trained by a two
time-scale update rule (TTUR), i.e., when discriminator and generator have
separate learning rates. This also leads to better results in experiments.
The main premise is that the discriminator converges to a local minimum when the
generator is fixed. If the generator changes slowly enough, then the
discriminator still converges, since the generator perturbations are
small.
Besides ensuring convergence, the performance may also improve since the
discriminator must first learn new patterns before they are transferred to the
generator. In contrast, a generator which is overly fast, drives the
discriminator steadily into new regions without capturing its
gathered information.
In recent GAN implementations, the discriminator often learned faster than the
generator. A new objective slowed down the generator to prevent it from
overtraining on the current discriminator \cite{Salimans:16}.
The Wasserstein GAN algorithm uses more update steps for the discriminator than
for the generator \cite{Arjovsky:17}. We compare
TTUR and standard GAN training.
Fig.~\ref{fig:tturvsorig} shows at the left panel a stochastic gradient
example on CelebA for original GAN training (orig), which often leads to
oscillations, and the TTUR. On the right panel an example of a 4 node network
flow problem of Zhang et al.\ \cite{Zhang:07} is shown.
The distance between the actual parameter and its optimum for an one time-scale
update rule is shown across iterates. When the upper bounds on the errors are
small, the iterates return to a neighborhood of the optimal solution, while for
large errors the iterates may diverge (see also Appendix
Section~\ref{sec:equalTime}).

Our novel contributions in this paper are:

\begin{itemize}
\item The two time-scale update rule for GANs,
\item We proof that GANs trained with TTUR converge to a
stationary local Nash equilibrium,
\item The description of Adam as heavy ball with friction and the
  resulting second order differential equation,
\item The convergence of GANs trained with TTUR and Adam to a
stationary local Nash equilibrium,
\item We introduce the ``Fr\'{e}chet Inception Distance'' (FID) to evaluate
GANs, which is more consistent than the Inception Score.
\end{itemize}

\begin{figure} %[H]
\centering
\includegraphics[width=0.49\textwidth, height=3.5cm]{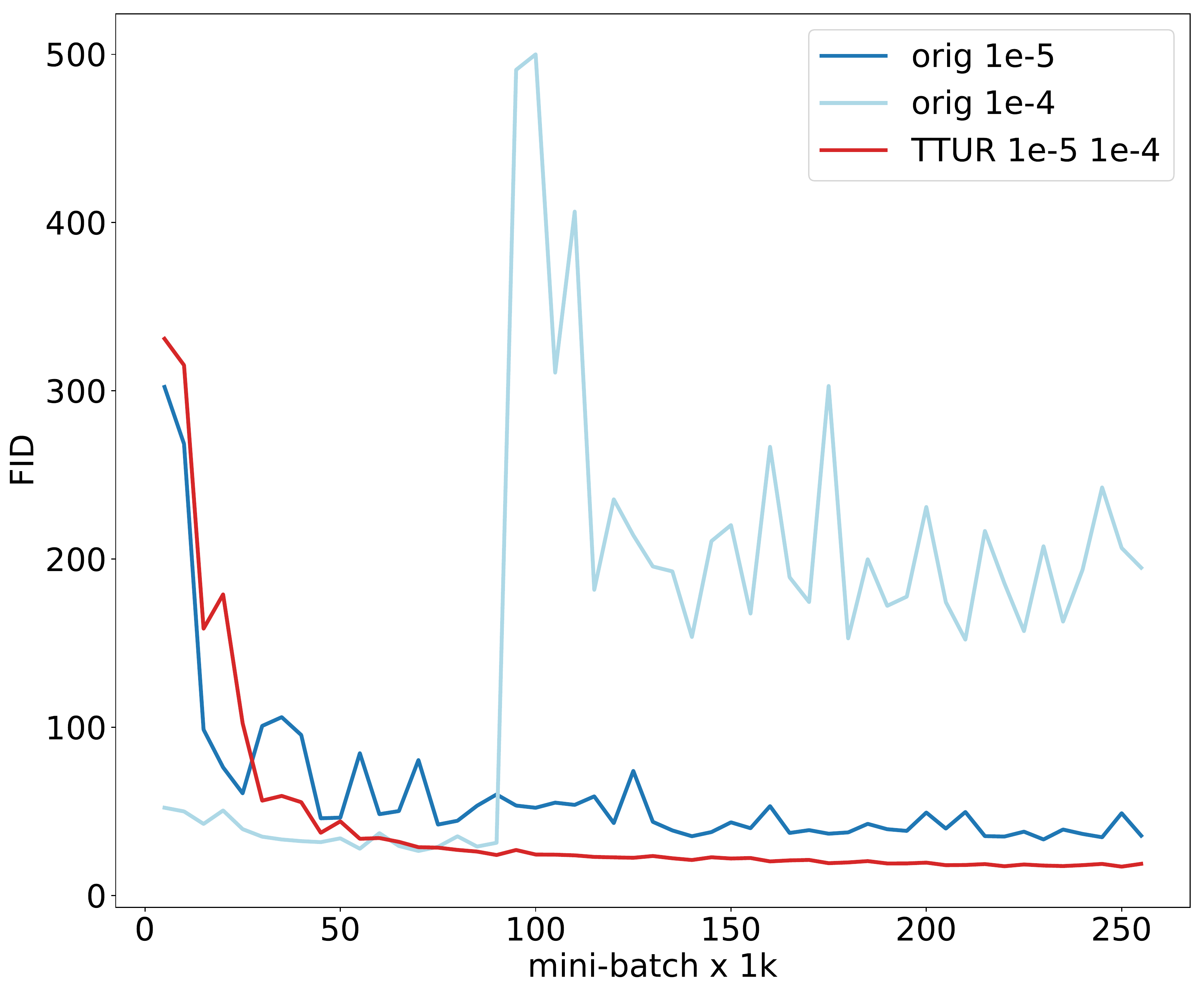}
\includegraphics[width=0.49\textwidth, height=3.5cm]{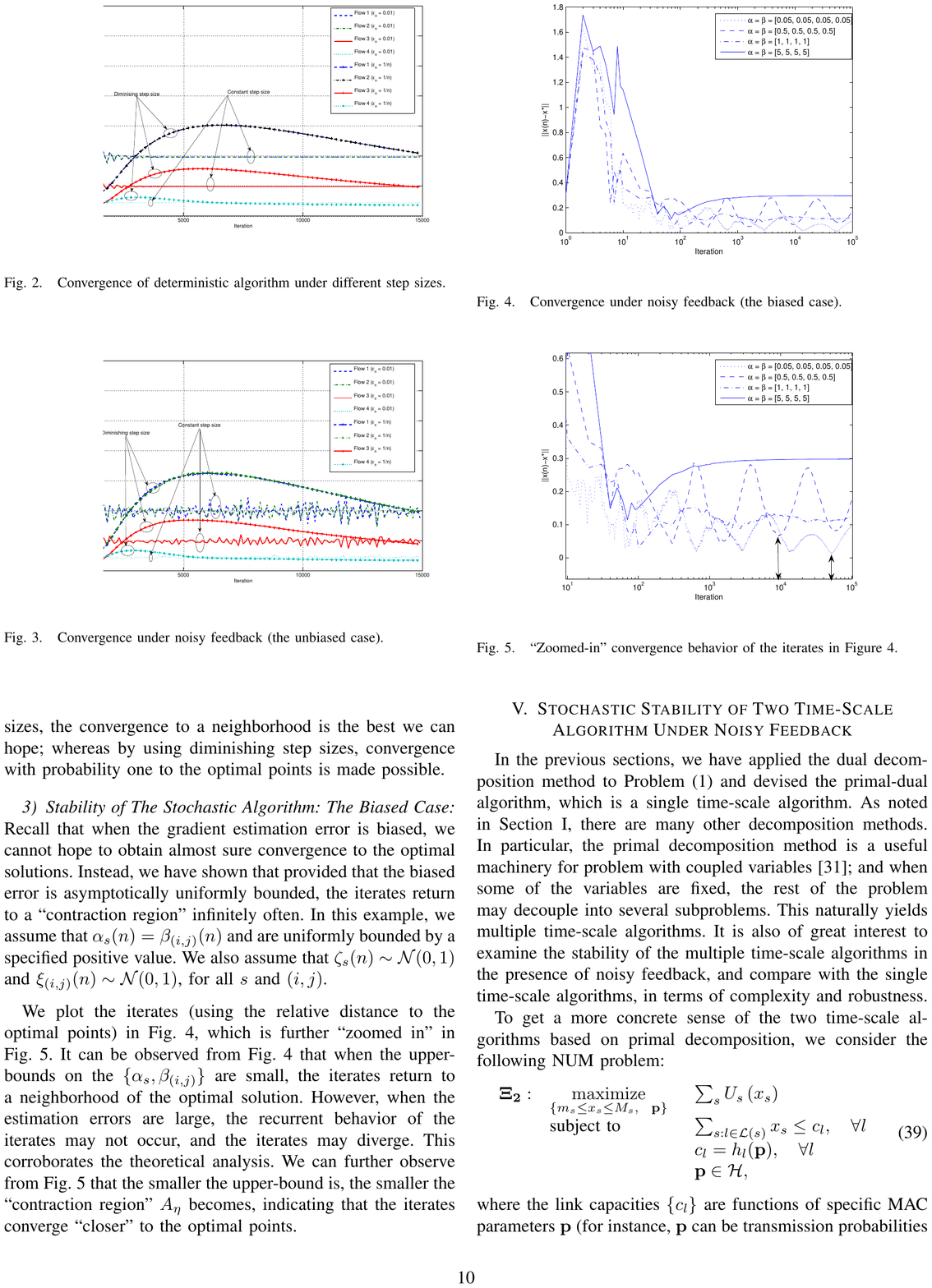}
\caption[Oscillation in GAN training]{Left: Original vs. TTUR GAN training on
CelebA. Right: Figure from Zhang 2007 \cite{Zhang:07} which shows the distance
of the parameter from the optimum for a one time-scale update
of a 4 node network flow problem.
When the upper bounds on the errors $(\alpha,\beta)$ are small,
the iterates oscillate and repeatedly return to a neighborhood of the optimal
solution (see also Appendix
Section~\ref{sec:equalTime}).
However, when the upper bounds on the errors are large, the iterates
typically diverge.
\label{fig:tturvsorig}}
\end{figure}

\section*{Two Time-Scale Update Rule for GANs}
\label{sec:TTUR}

We consider a
discriminator $D(.;\Bw)$ with parameter vector $\Bw$
and a generator $G(.;\Bth)$ with parameter vector $\Bth$.
Learning is based on a
stochastic gradient $\tilde{\Bg}(\Bth,\Bw)$
of the discriminator's loss function $\cL_D$
and a stochastic gradient $\tilde{\Bh}(\Bth, \Bw)$ of the generator's loss function $\cL_G$.
The loss functions $\cL_D$ and $\cL_G$
can be the original as introduced in Goodfellow et al. \cite{Goodfellow:14nips},
its improved versions \cite{Goodfellow:17tutorial}, or recently proposed losses for GANs like the
Wasserstein GAN \cite{Arjovsky:17}. Our setting is not restricted to min-max
GANs, but is valid for all other, more general GANs for which the
discriminator's loss function $\cL_D$ is not necessarily related to the
generator's loss function $\cL_G$.
The gradients
$\tilde{\Bg}\big(\Bth, \Bw\big)$ and $\tilde{\Bh}\big(\Bth, \Bw \big)$
are stochastic, since they use mini-batches of $m$ real world samples
$\Bx^{(i)},1\leq i \leq m$ and $m$ synthetic samples $\Bz^{(i)},1\leq
i \leq m$ which are randomly chosen.
If the true gradients are $\Bg(\Bth,\Bw)=\nabla_{w} \cL_D$ and
$\Bh(\Bth,\Bw)=\nabla_{\theta}\cL_G$, then we can define
$\tilde{\Bg}(\Bth,\Bw)=\Bg(\Bth,\Bw)+\BM^{(w)}$ and
$\tilde{\Bh}(\Bth,\Bw)=\Bh(\Bth,\Bw)+\BM^{(\theta)}$ with random
variables $\BM^{(w)}$ and $\BM^{(\theta)}$.
Thus, the gradients
$\tilde{\Bg}\big(\Bth, \Bw\big)$ and $\tilde{\Bh}\big(\Bth, \Bw \big)$
are stochastic approximations to the true gradients.
Consequently, we analyze convergence of GANs
by two time-scale stochastic approximations algorithms.
For a two time-scale update rule (TTUR),
we use the learning rates $b(n)$ and $a(n)$ for the discriminator and
the generator update, respectively:
\begin{align}
\label{eq:iterBorkar}
\Bw_{n+1}   &=  \Bw_n   +\ b(n)
\left(\Bg\big(\Bth_n, \Bw_n\big)  +  \BM^{(w)}_{n}\right),\
\Bth_{n+1}   =   \Bth_n  +  a(n) \
\left(\Bh\big(\Bth_n, \Bw_n \big)  +  \BM^{(\theta)}_{n}\right)  .
\end{align}
For more details on the following convergence proof and its assumptions
see Appendix Section~\ref{sec:convergence}.
To prove convergence of GANs learned by TTUR, we make the following
assumptions (The actual assumption is ended by $\blacktriangleleft$, the
following text are just comments and explanations):
\begin{enumerate}[label=\textbf{(A\arabic*)}]
\item The gradients $\Bh$ and  $\Bg$ are Lipschitz. $\blacktriangleleft$
Consequently, networks with Lipschitz smooth activation functions like
ELUs ($\alpha=1$) \cite{Clevert:16}
fulfill the assumption but not ReLU networks.
\item $\sum_{n} a(n) = \infty$, $\sum_{n} a^2(n) <  \infty$,
$\sum_{n} b(n) =  \infty$, $\sum_{n} b^2(n) <  \infty$, $a(n) \ = \ \Ro(b(n))$$\blacktriangleleft$
\item The stochastic gradient errors
$\{\BM^{(\theta)}_n\}$ and $\{\BM^{(w)}_n\}$ are martingale
difference sequences w.r.t.\ the increasing $\sigma$-field
$\cF_n = \sigma(\Bth_l, \Bw_l, \BM^{(\theta)}_{l}, \BM^{(w)}_{l},
l \leq n), n \geq 0$ with
$\rE \left[\|\BM^{(\theta)}_n \|^2 \mid \cF_n^{(\theta)} \right]
\leq  B_1$ and
$\rE \left[\|\BM^{(w)}_n \|^2 \mid \cF_n^{(w)} \right]
 \leq  B_2$, where $B_1$ and $B_2$ are positive deterministic
constants.$\blacktriangleleft$
The original Assumption (A3) from Borkar 1997
follows from Lemma~2 in \cite{Bertsekas:00}
(see also \cite{Ramaswamy:16}).
The assumption is fulfilled in the
Robbins-Monro setting, where mini-batches are randomly
sampled and the gradients are bounded.
\item For each $\Bth$, the ODE
$\dot{\Bw}(t) = \Bg\big(\Bth, \Bw(t)\big)$
has a local asymptotically stable attractor
$\Bla(\Bth)$ within a domain of attraction $G_{\theta}$
such that $\Bla$ is Lipschitz.
The ODE
$\dot{\Bth}(t) = \Bh\big(\Bth(t), \Bla(\Bth(t))\big)$
has a local asymptotically stable
attractor $\Bth^{*}$ within a domain of attraction.$\blacktriangleleft$
The discriminator must converge
to a minimum for fixed generator parameters and
the generator, in turn, must converge to a minimum for this fixed discriminator
minimum.
Borkar 1997 required unique global asymptotically stable equilibria \cite{Borkar:97}.
The assumption of global attractors was relaxed to local attractors via
Assumption (A6) and Theorem~2.7
in Karmakar \& Bhatnagar \cite{Karmakar:17}.
See for more details Assumption (A6) in the Appendix
Section~\ref{sec:addnoisecmp}.
Here, the GAN objectives may serve as Lyapunov functions.
These assumptions of locally stable
ODEs can be ensured by an additional weight decay term in the loss function
which increases the eigenvalues of the Hessian. Therefore, problems with a
region-wise constant discriminator that has zero second order derivatives are
avoided. For further discussion see Appendix Section~\ref{sec:background}
(C3).
\item
$\sup_n \| \Bth_n \| < \infty$ and $\sup_n \| \Bw_n \| <  \infty$.$\blacktriangleleft$
Typically ensured by the objective or a weight decay term.
\end{enumerate}
The next theorem has been proved in the seminal paper of Borkar 1997 \cite{Borkar:97}.
\begin{theorem}[Borkar]
\label{th:borkar}
If the assumptions are satisfied,
then the updates Eq.~\eqref{eq:iterBorkar}
converge to $(\Bth^{*}, \Bla(\Bth^{*}))$ a.s.
\end{theorem}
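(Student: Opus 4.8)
The plan is to use the ODE method for stochastic approximation, exploiting the time-scale separation enforced by (A2). The governing principle is that, because $a(n)=\Ro(b(n))$, the discriminator iterate $\Bw_n$ evolves on a fast time-scale while the generator iterate $\Bth_n$ evolves on a slow one; relative to the $\Bw$-dynamics the parameter $\Bth$ appears quasi-static. Concretely, rewriting the $\Bth$-update as $\Bth_{n+1}=\Bth_n+b(n)\,[(a(n)/b(n))(\Bh(\Bth_n,\Bw_n)+\BM^{(\theta)}_n)]$ shows that, measured against the clock $t_n=\sum_{k<n}b(k)$, the per-step increment of $\Bth$ is $\Ro(b(n))$, so $\Bth_n$ is asymptotically frozen on the fast scale.

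First I would analyze the fast iterate. Treating $\Bth$ as constant, the $\Bw$-recursion is a single time-scale stochastic approximation associated with the ODE $\dot{\Bw}(t)=\Bg(\Bth,\Bw(t))$. The Lipschitz hypothesis (A1) guarantees well-posedness of this ODE; the martingale-difference and bounded-conditional-second-moment conditions (A3) together with $\sum_n b^2(n)<\infty$ (A2) ensure that the noise sum $\sum_k b(k)\BM^{(w)}_k$ is an $L^2$-bounded martingale and hence converges a.s.; and boundedness (A5) confines the iterates to a compact set. Interpolating the iterates into a continuous trajectory and applying Gronwall's inequality shows this trajectory is an asymptotic pseudotrajectory of the flow, so by the local asymptotic stability of $\Bla(\Bth)$ in (A4) I would conclude $\|\Bw_n-\Bla(\Bth_n)\|\to 0$ a.s.

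Next I would turn to the slow iterate. Substituting $\Bw_n=\Bla(\Bth_n)+\Be_n$ with $\Be_n\to 0$, and using the Lipschitz continuity of $\Bh$ (A1) and of $\Bla$ (A4), the $\Bth$-recursion becomes a stochastic approximation for the averaged ODE $\dot{\Bth}(t)=\Bh(\Bth(t),\Bla(\Bth(t)))$, with the tracking error $\Be_n$ and the martingale noise absorbed into a vanishing perturbation (again controlled by (A2), (A3), (A5)). The local asymptotic stability of $\Bth^{*}$ in (A4) then yields $\Bth_n\to\Bth^{*}$ a.s., and continuity of $\Bla$ gives $\Bw_n\to\Bla(\Bth^{*})$, establishing the claim.

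The hard part is the interplay of the two time-scales combined with the fact that (A4) provides only \emph{local} attractors. One must show that the slow drift of $\Bth$ does not dislodge the fast iterate from the domain of attraction $G_\theta$, and symmetrically that $\Bth_n$ enters and remains in the domain of attraction of $\Bth^{*}$; this is precisely where boundedness (A5) and the localization arguments (the relaxation of Borkar's global-attractor hypothesis via Assumption (A6) and Theorem~2.7 of Karmakar \& Bhatnagar) do the essential work. The non-autonomous character of the fast ODE, whose frozen parameter $\Bth_n$ still varies slowly, is the technical crux, and is handled by making the time-scale separation quantitative through the ratio $a(n)/b(n)\to 0$.
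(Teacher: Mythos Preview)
Your proposal is correct and follows essentially the same approach as the one the paper attributes to Borkar. Note, however, that the paper does not actually supply a proof of this theorem: it simply cites Borkar~1997 and remarks that ``the main idea of the proof of Borkar is to use $(T,\delta)$ perturbed ODEs according to Hirsch 1989,'' relying on the fact that eventually the slow-scale perturbation is small enough to let the fast iterate converge. Your sketch is exactly this program, phrased in the closely related language of asymptotic pseudotrajectories rather than Hirsch's $(T,\delta)$-perturbed solutions; the two formalisms are equivalent for this purpose. Your identification of the local-attractor issue and the role of the Karmakar--Bhatnagar relaxation also matches the paper's own commentary in assumption~(A4) and the appendix.
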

The solution $(\Bth^{*}, \Bla(\Bth^{*}))$ is a
stationary local Nash equilibrium \cite{Prasad:15}, since
$\Bth^{*}$ as well
as $\Bla(\Bth^{*})$ are local asymptotically stable
attractors with $\Bg\big(\Bth^{*}, \Bla(\Bth^{*})\big)=\BZe$ and
$\Bh\big(\Bth^{*},\Bla(\Bth^{*})\big)=\BZe$.
An alternative approach to the proof of convergence
using the Poisson equation for ensuring a
solution to the fast update rule can be found in the Appendix
Section~\ref{sec:linupnoisemc}.
This approach assumes a linear update function in the fast
update rule which, however,
can be a linear approximation to a nonlinear gradient \cite{Konda:02,Konda:03}.
For the rate of convergence see Appendix Section~\ref{sec:convergenceRate},
where Section~\ref{sec:linur} focuses on linear and
Section~\ref{sec:nonlinur} on non-linear updates.
For equal time-scales it can only be proven that
the updates revisit an environment of the solution infinitely often,
which, however, can be very large \cite{Zhang:07,DiCastro:10}.
For more details on the analysis of
equal time-scales see Appendix Section~\ref{sec:equalTime}.
The main idea of the proof of Borkar
\cite{Borkar:97} is to use
$(T,\delta)$ perturbed ODEs according to Hirsch 1989 \cite{Hirsch:89}
(see also Appendix Section~C of  Bhatnagar, Prasad, \& Prashanth 2013 \cite{Bhatnagar:13}).
The proof relies on the fact that
there eventually is a time point when the perturbation of the slow
update rule is
small enough (given by $\delta$) to allow the fast update rule to converge.
For experiments with TTUR, we aim at
finding learning rates such that the slow update is small enough
to allow the fast to converge.
Typically, the slow update is the generator and the fast update the
discriminator.
We have to adjust the two learning rates such that
the generator does not affect discriminator learning
in a undesired way and perturb it too much.
However, even a larger learning rate for the generator than for the
discriminator may ensure that the discriminator has low perturbations.
Learning rates cannot be translated directly into perturbation since
the perturbation of the discriminator by the generator is
different from the perturbation of the generator by the discriminator.

\section*{Adam Follows an HBF ODE and Ensures TTUR Convergence}
\label{sec:adamhbf}

\begin{figwindow}[3,r,{
\includegraphics[width=0.4\textwidth]{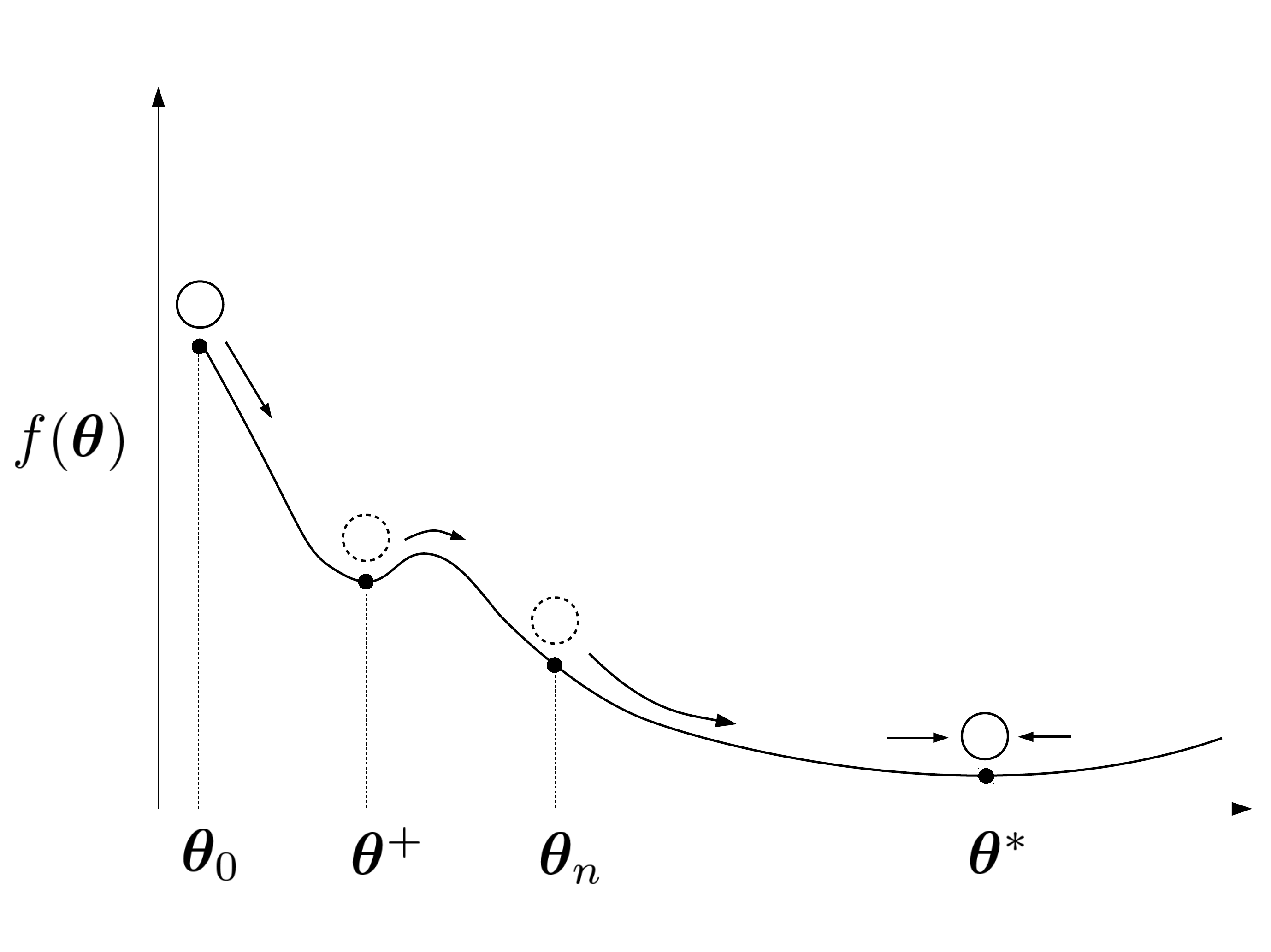}
},[Heavy Ball with Friction]{Heavy Ball with Friction,
where the ball with mass overshoots the local minimum $\Bth^+$ and settles at
the flat minimum $\Bth^*$.
  \label{fig:HBF} }]
In our experiments, we aim at using Adam stochastic approximation  to avoid mode
collapsing. GANs suffer from ``mode collapsing'' where large masses of
probability are mapped onto a few modes that cover only small regions.
While these regions represent meaningful samples, the variety of the real world
data is lost and only few prototype samples are generated.
Different methods have been proposed to avoid mode collapsing
\cite{Che:17,Metz:16}. We obviate mode collapsing by using Adam stochastic
approximation \cite{Kingma:14}. Adam can be described as Heavy Ball with
Friction (HBF) (see below), since it averages over past gradients.
This averaging corresponds to a velocity that makes the generator resistant to
getting pushed into small regions. Adam as an HBF method typically overshoots
small local minima that correspond to mode collapse and can find flat minima
which generalize well \cite{Hochreiter:97nc1}. Fig.~\ref{fig:HBF} depicts the
dynamics of HBF, where the ball settles at a flat minimum. Next, we analyze
whether GANs trained with TTUR converge when using Adam. For more details see
Appendix Section~\ref{sec:adam}.
\end{figwindow}

We recapitulate the Adam update rule at step $n$, with learning rate $a$,
exponential averaging factors $\beta_1$ for the first and $\beta_2$
for the second moment of the gradient $\nabla f(\Bth_{n-1})$:
\begin{align}
\label{eq:adam}
\Bg_n \ &\longleftarrow \  \nabla f(\Bth_{n-1}) \\ \nonumber
\Bm_n \ &\longleftarrow \ (\beta_1/(1-\beta_1^n)) \ \Bm_{n-1} \ + \
((1-\beta_1)/(1-\beta_1^n)) \ \Bg_n \\ \nonumber
\Bv_n \ &\longleftarrow \ (\beta_2/(1-\beta_2^n)) \ \Bv_{n-1} \ + \
          ((1-\beta_2)/(1-\beta_2^n)) \ \Bg_n \odot \Bg_n \\ \nonumber
\Bth_n \ &\longleftarrow \ \Bth_{n-1} \ - \ a \ \Bm_n /(\sqrt{\Bv_n}+\epsilon)
\ ,
\end{align}
where following operations are meant componentwise:
the product $\odot$,
the square root $\sqrt{.}$, and
the division $/$ in the last line.
Instead of learning rate $a$, we introduce the damping
coefficient $a(n)$ with $a(n)=a n^{-\tau}$ for $\tau \in (0,1]$.
Adam has parameters $\beta_1$ for averaging the
gradient and $\beta_2$ parametrized by a positive $\alpha$ for
averaging the squared gradient.
These parameters can be considered as defining a memory for Adam.
To characterize $\beta_1$ and $\beta_2$ in the following, we define
the exponential memory $r(n)=r$  and
the polynomial memory $r(n)=r/\sum_{l=1}^{n} a(l)$ for some positive
constant $r$. The next theorem describes Adam by a differential
equation, which in turn allows to apply the idea of
$(T,\delta)$ perturbed ODEs to TTUR.
Consequently, learning GANs with TTUR and
Adam converges.
\begin{theorem}
\label{th:adam}
If Adam is used with $\beta_1=1-a(n+1)r(n)$,
$\beta_2=1-\alpha a(n+1)r(n)$
and with $\nabla f$ as the full gradient of the lower bounded,
continuously differentiable objective $f$,
then for stationary second moments of the gradient,
Adam follows the differential equation
for Heavy Ball with Friction (HBF):
\begin{align}
\label{eq:Diff}
&\ddot{\Bth}_t \ + \ a(t) \ \dot{\Bth}_t \ + \ \nabla f(\Bth_t)
\ = \ \BZe  \ .
\end{align}
Adam converges for gradients $\nabla f$ that are $L$-Lipschitz.
\end{theorem}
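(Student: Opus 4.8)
The plan is to split the argument into the two claims of Theorem~\ref{th:adam}: first deriving the HBF ODE~\eqref{eq:Diff} from the Adam recursion~\eqref{eq:adam} under the stated parameter choices, and then establishing convergence of the resulting dynamics. I would start from~\eqref{eq:adam} and use the \emph{stationary second moments} hypothesis to treat the normalizer $\sqrt{\Bv_n}+\epsilon$ as (asymptotically) a constant $s$, so that the step reduces to the momentum update $\Bth_n = \Bth_{n-1} - (a(n)/s)\,\Bm_n$ together with the recursion $\Bm_n = \beta_1 \Bm_{n-1} + (1-\beta_1)\Bg_n$ (the bias-correction factors $1-\beta_1^n,\,1-\beta_2^n \to 1$ for large $n$ and contribute only lower-order terms). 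The key algebraic move is to eliminate the velocity variable $\Bm_n$ by writing $\Bm_n = -(s/a(n))(\Bth_n - \Bth_{n-1})$ and substituting into the momentum recursion, turning the first-order-in-$\Bm$ system into a second-order difference equation purely in $\Bth$.

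Plugging in $1-\beta_1 = a(n+1)r(n)$ and $\Bg_n = \nabla f(\Bth_{n-1})$, and using that $a(n)=a\,n^{-\tau}$ is slowly varying so $a(n)/a(n-1)\to 1$, the equation takes the form of a discretized second-order ODE: the second difference $\Bth_n - 2\Bth_{n-1}+\Bth_{n-2}$ is the discrete analogue of $\ddot{\Bth}$, the term proportional to $a(n)r(n)(\Bth_{n-1}-\Bth_{n-2})$ is the discrete friction, and the remaining term carries the gradient. I would then fix the continuous-time embedding $t_n=\sum_{l\le n}a(l)$ and verify that, with this time scale and the chosen memory $r(n)$ (exponential or polynomial), the friction coefficient matches exactly $a(t)$ and the gradient coefficient normalizes to $1$, recovering~\eqref{eq:Diff} in the limit of vanishing step size.

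For the convergence claim I would use the mechanical energy $E(t) = \tfrac12\NRM{\dot{\Bth}_t}^2 + f(\Bth_t)$ as a Lyapunov function. Differentiating along~\eqref{eq:Diff} gives $\dot E(t) = \langle \dot{\Bth}_t,\, \ddot{\Bth}_t + \nabla f(\Bth_t)\rangle = -a(t)\NRM{\dot{\Bth}_t}^2 \le 0$, so $E$ is nonincreasing; since $f$ is lower bounded and the kinetic term is nonnegative, $E$ is bounded below and hence converges, yielding the finite dissipation bound $\int_0^\infty a(t)\NRM{\dot{\Bth}_t}^2\,dt < \infty$. Boundedness of $E$ bounds the velocity, and the $L$-Lipschitz property of $\nabla f$ bounds $\ddot{\Bth}_t$ through the ODE, so $t\mapsto \NRM{\dot{\Bth}_t}^2$ is uniformly continuous; a Barbalat-type argument then forces $\dot{\Bth}_t\to\BZe$ and $\nabla f(\Bth_t)\to\BZe$, i.e.\ the trajectory approaches the stationary set of $f$. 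Finally, because the Adam iterates asymptotically track this ODE by the first half of the proof, the $(T,\delta)$-perturbed-ODE machinery already invoked for Theorem~\ref{th:borkar} transfers convergence of the continuous flow to the discrete sequence.

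The main obstacle I anticipate is in the convergence half rather than the formal ODE matching: the friction coefficient $a(t)=a\,t^{-\tau}$ vanishes as $t\to\infty$, so the classical heavy-ball argument with constant damping does not apply directly. One must check that the damping decays slowly enough ($\int_0^\infty a(t)\,dt=\infty$, guaranteed by $\tau\in(0,1]$ and Assumption (A2)) that the dissipation $\int a(t)\NRM{\dot{\Bth}_t}^2\,dt$ still pins the velocity to zero, and combine this carefully with the Lipschitz bound on $\ddot{\Bth}_t$ so that oscillations of $\NRM{\dot{\Bth}_t}$ cannot persist. The secondary delicate point is justifying the stationary-second-moment reduction in the first half rigorously, i.e.\ controlling the error incurred by replacing $\sqrt{\Bv_n}+\epsilon$ with a constant, which should enter only as a lower-order perturbation absorbed into the same $(T,\delta)$ framework.
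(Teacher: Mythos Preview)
Your proposal is correct and follows the same high-level structure as the paper's proof (identify Adam's first-moment recursion with the discrete HBF system, treat the second-moment normalization as a perturbation, and use the mechanical energy $E(t)=\tfrac12\NRM{\dot\Bth_t}^2+f(\Bth_t)$ as Lyapunov function), but the emphasis is inverted. The paper does almost no explicit work on the ODE derivation or the convergence argument: it simply observes that the update rules~\eqref{eq:hbf1} are precisely the stochastic HBF iterates of Gadat et al.\ and cites their ODE correspondence, and for convergence it cites Theorem~3.1 of Attouch et al.\ (for $\nabla f$ $L$-Lipschitz and $f$ bounded below) and Gadat et al.\ directly, rather than running the Barbalat argument you sketch. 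The Lyapunov function you propose is exactly the one the paper invokes immediately \emph{after} the proof, with $\dot E(t)=-a(t)\NRM{\dot\Bth_t}^2$; the vanishing-damping subtlety you flag is real and is handled in the cited literature rather than in the paper itself.

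Where the paper \emph{does} spend its effort is precisely the step you label ``secondary'': the second-moment normalization. There the paper is more explicit than your sketch. It writes $\Bv_n=\Bv+\Delta\Bv_n$ with $\Delta\Bv_n\propto a(n+1)r(n)$ under the choice $\beta_2=1-\alpha a(n+1)r(n)$, performs a componentwise linear expansion $1/\sqrt{\Bv+\Delta\Bv_n}\approx 1/\sqrt{\Bv}-\Delta\Bv_n/(2\Bv\odot\sqrt{\Bv})$, and packages the resulting correction as an explicit term $\BM^{(v)}_{n+1}=-(\Bm_n\odot\Delta\Bv_n)/(2\Bv\odot\sqrt{\Bv}\,a(n+1)r(n))$ which is a martingale difference (since $\rE[\Bg_l\odot\Bg_l-\Bv]=\BZe$) with bounded second moment, hence subsumable into the noise $\BM_{n+1}$ of~\eqref{eq:hbf1}. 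Note also that $\Bv$ is a vector, so the constant factor $1/\sqrt{\Bv}$ is absorbed as a \emph{componentwise parameter rescaling} of the gradient, not as a scalar into the learning rate as your ``$a(n)/s$'' suggests.
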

\begin{proof}
Gadat et al.\ derived a discrete and stochastic
version of Polyak's Heavy Ball method \cite{Polyak:64},
the Heavy Ball with Friction (HBF) \cite{Gadat:16}:
\begin{align}
\label{eq:hbf1}
\Bth_{n+1} \ &= \ \Bth_n \ - \ a(n+1) \ \Bm_n \ , \\ \nonumber
\Bm_{n+1} \ &= \ \big( 1 \ - \  a(n+1) \ r(n) \big) \ \Bm_n \ + \  a(n+1) \ r(n) \
\big( \nabla f(\Bth_n) \ + \ \BM_{n+1} \big) \ .
\end{align}
These update rules are the first moment update rules
of Adam \cite{Kingma:14}.
The HBF can be formulated as the differential equation
Eq.~\eqref{eq:Diff} \cite{Gadat:16}.
Gadat et al.\ showed that the update rules
Eq.~\eqref{eq:hbf1} converge
for loss functions $f$ with at most quadratic grow and
stated that convergence can be proofed
for $\nabla f$ that are $L$-Lipschitz \cite{Gadat:16}.
Convergence has been proved for continuously differentiable $f$ that
is quasiconvex (Theorem~3 in Goudou \& Munier \cite{Goudou:09}).
Convergence has been proved for $\nabla f$ that is $L$-Lipschitz and
bounded from below (Theorem~3.1 in Attouch et al.\ \cite{Attouch:00}).
Adam normalizes the average $\Bm_n$ by the second moments $\Bv_n$ of
of the gradient $\Bg_n$: $\Bv_n=\rE \left[ \Bg_n \odot \Bg_n\right]$.
$\Bm_n$ is componentwise divided by the square root of
the components of $\Bv_n$.
We assume that the second moments of $\Bg_n$ are stationary,
i.e., $\Bv=\rE \left[ \Bg_n \odot \Bg_n\right]$.
In this case the normalization can be considered as additional noise
since the normalization factor randomly deviates from its mean.
In the HBF interpretation the normalization by $\sqrt{\Bv}$
corresponds to introducing gravitation.
We obtain
\begin{align}
\Bv_n \ &= \ \frac{1  -  \beta_2}{1  - \beta_2^n} \ \sum_{l=1}^{n}
        \beta_2^{n-l} \ \Bg_l \odot \Bg_l \ , \ \
\Delta \Bv_n \ = \  \Bv_n - \Bv \ = \
\frac{1  - \beta_2}{1  - \beta_2^n} \  \sum_{l=1}^{n}
        \beta_2^{n-l} \ \left( \Bg_l \odot \Bg_l - \Bv \right) \ .
\end{align}
For a stationary second moment $\Bv$ and $\beta_2=1-\alpha a(n+1)r(n)$, we have
$\Delta \Bv_n \propto a(n+1)r(n)$.
We use a componentwise
linear approximation to Adam's second moment normalization
$1/\sqrt{\Bv+\Delta \Bv_n} \approx 1/\sqrt{\Bv} - (1/(2 \Bv \odot
\sqrt{\Bv}))  \odot \Delta \Bv_n
+ \rO(\Delta^2 \Bv_n)$, where all operations are meant componentwise.
If we set $\BM_{n+1}^{(v)}=- (\Bm_n \odot \Delta \Bv_n)/(2 \Bv \odot
\sqrt{\Bv}a(n+1)r(n))$, then $\Bm_n / \sqrt{\Bv_n} \approx \Bm_n / \sqrt{\Bv} +
a(n+1)r(n)\BM_{n+1}^{(v)}$ and
$\rE \left[\BM_{n+1}^{(v)} \right] = \BZe$, since $\rE \left[\Bg_l \odot \Bg_l - \Bv \right] = \BZe$.
For a stationary second moment $\Bv$, the random variable $\{\BM^{(v)}_n\}$ is a martingale
difference sequence with a bounded second moment.
Therefore $\{\BM^{(v)}_{n+1}\}$ can be subsumed into  $\{\BM_{n+1}\}$ in update
rules Eq.~\eqref{eq:hbf1}. The factor $1 / \sqrt{\Bv}$ can be componentwise
incorporated into the gradient $\Bg$ which corresponds to rescaling
the parameters without changing the minimum.
\end{proof}
According to Attouch et al.\ \cite{Attouch:00} the energy, that is,
a Lyapunov function, is $E(t) =  1/2 | \dot{\Bth}(t) |^2 + f(\Bth(t))$
and $\dot{E}(t) = - a \ | \dot{\Bth}(t) |^2 < 0$.
Since Adam can be expressed as differential equation and has a
Lyapunov function, the idea of
$(T,\delta)$ perturbed ODEs \cite{Borkar:97,Hirsch:89,Borkar:00}
carries over to Adam.
Therefore the convergence of Adam with TTUR can be proved via two time-scale stochastic
approximation analysis like in Borkar \cite{Borkar:97}
for stationary second moments of the gradient.

In the Appendix we further discuss the convergence of two time-scale
stochastic approximation algorithms with additive noise, linear update
functions depending on Markov chains, nonlinear update functions,
and updates depending on controlled Markov processes. Futhermore, the Appendix
presents work on the rate of convergence for both linear and nonlinear update
rules using similar techniques as the local stability analysis of Nagarajan
and Kolter \cite{Nagarajan:17}.
Finally, we elaborate more on equal time-scale updates, which
are investigated for saddle point problems and actor-critic learning.

\section*{Experiments}
\label{sec:experiments}

\paragraph{Performance Measure.}
Before presenting the experiments, we introduce a quality measure
for models learned by GANs.
The objective of generative learning is that the model
produces data which matches the observed data.
Therefore, each distance between the probability of observing real world data
$p_w(.)$ and the probability of generating model data $p(.)$
can serve as performance measure for generative models.
However, defining appropriate
performance measures for generative models is difficult \cite{Theis:15}.
The best known measure is the likelihood,
which can be estimated by annealed importance sampling \cite{Wu:16}.
However, the likelihood heavily depends on the noise assumptions
for the real data and can be dominated by single samples \cite{Theis:15}.
Other approaches like density estimates have drawbacks, too \cite{Theis:15}.
A well-performing approach to measure the performance of GANs
is the ``Inception Score'' which correlates with human
judgment \cite{Salimans:16}.
Generated samples are fed into an inception model that was trained on
ImageNet. Images with meaningful objects are supposed to have low
label (output) entropy, that is, they belong to few object classes.
On the other hand, the entropy across images should be high, that is,
the variance over the images should be large.
Drawback of the Inception Score is that
the statistics of real world samples are not used and compared to
the statistics of synthetic samples.
Next, we improve the Inception Score.
The equality $p(.)=p_w(.)$ holds except for a non-measurable set
if and only if $\int p(.) f(x) dx=\int p_w(.) f(x) dx$ for
a basis $f(.)$ spanning the function space in which $p(.)$ and $p_w(.)$
live. These equalities of expectations are used to describe distributions
by moments or cumulants, where $f(x)$ are polynomials of the data $x$.
We generalize these polynomials by replacing $x$ by the coding layer of
an inception model in order to obtain vision-relevant features.
For practical reasons we only consider the first two polynomials, that
is, the first two moments: mean and covariance.
The Gaussian is the maximum entropy distribution for given
mean and covariance, therefore we assume the coding units to follow a
multidimensional Gaussian.
The difference of two Gaussians (synthetic and real-world images)
is measured by the Fr\'{e}chet
distance \cite{Frechet:57}
also known as Wasserstein-2 distance \cite{Wasserstein:69}.
We call the Fr\'{e}chet
distance $d(.,.)$ between the Gaussian with mean $(\Bm,\BC)$ obtained
from $p(.)$ and the Gaussian with mean $(\Bm_w,\BC_w)$ obtained
from $p_w(.)$ the ``Fr\'{e}chet Inception Distance'' (FID), which is
given by \cite{Dowson:82}:
\begin{align}
d^2((\Bm,\BC),(\Bm_w,\BC_w))=\|\Bm-\Bm_w\|_2^2+  \TR \bigl(\BC+\BC_w-2\bigl(
\BC\BC_w\bigr)^{1/2}\bigr) \ .
\end{align}
Next we show that the FID is consistent with
increasing disturbances and human judgment.
Fig.~\ref{fig:FIDscore} evaluates the FID for Gaussian noise,
Gaussian blur, implanted black rectangles,
swirled images, salt and pepper noise, and
CelebA dataset contaminated by ImageNet images.
The FID captures the disturbance level very well.
In the experiments we used the FID to evaluate the performance of GANs.
For more details and a comparison between FID and Inception Score
see Appendix Section~\ref{sec:fid}, where we show that FID is \textit{more
consistent} with the noise level than the Inception Score.

\begin{figure} %[H]
\includegraphics[width=0.33\textwidth, height=3cm]{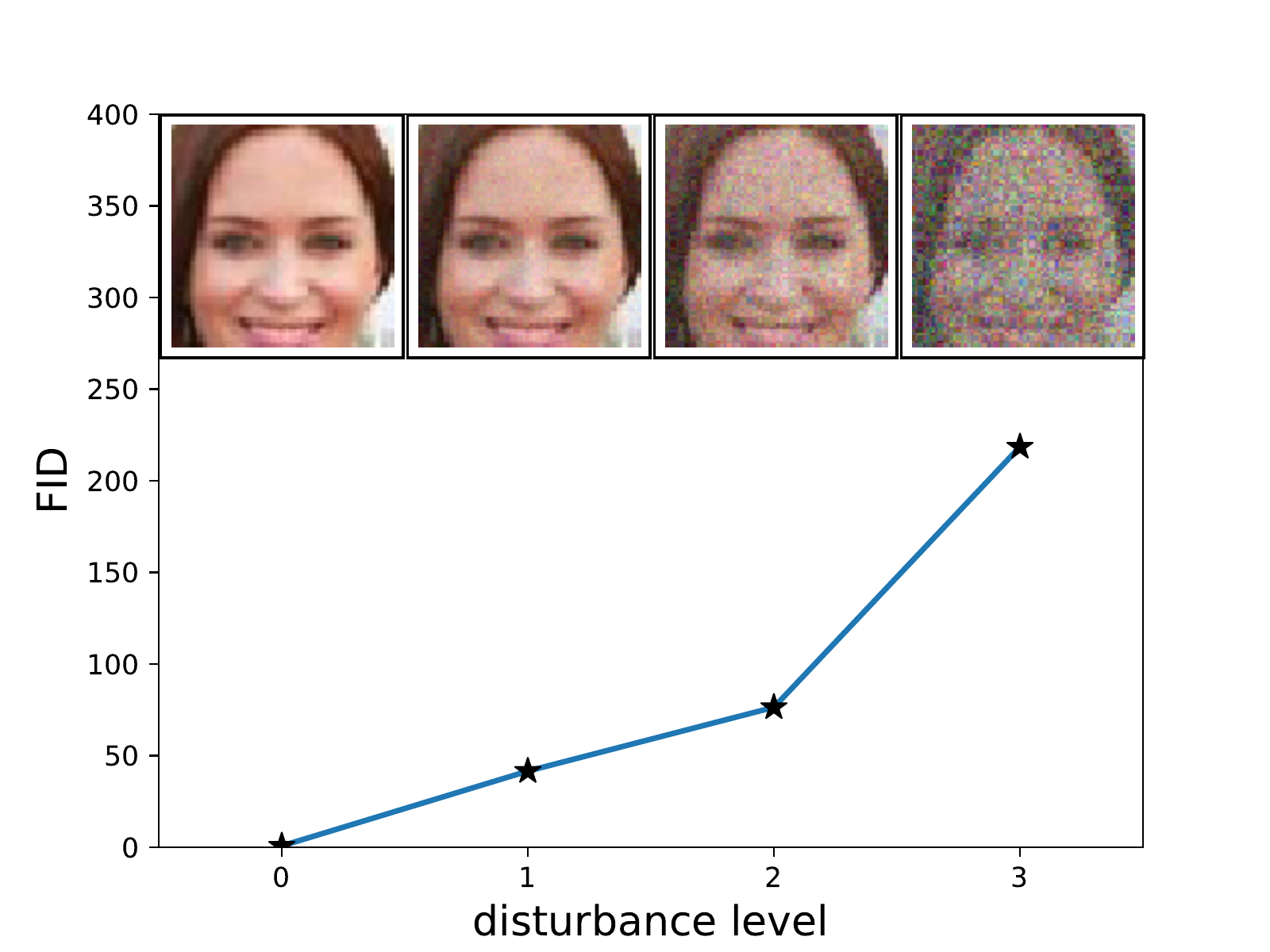}
\includegraphics[width=0.33\textwidth, height=3cm]{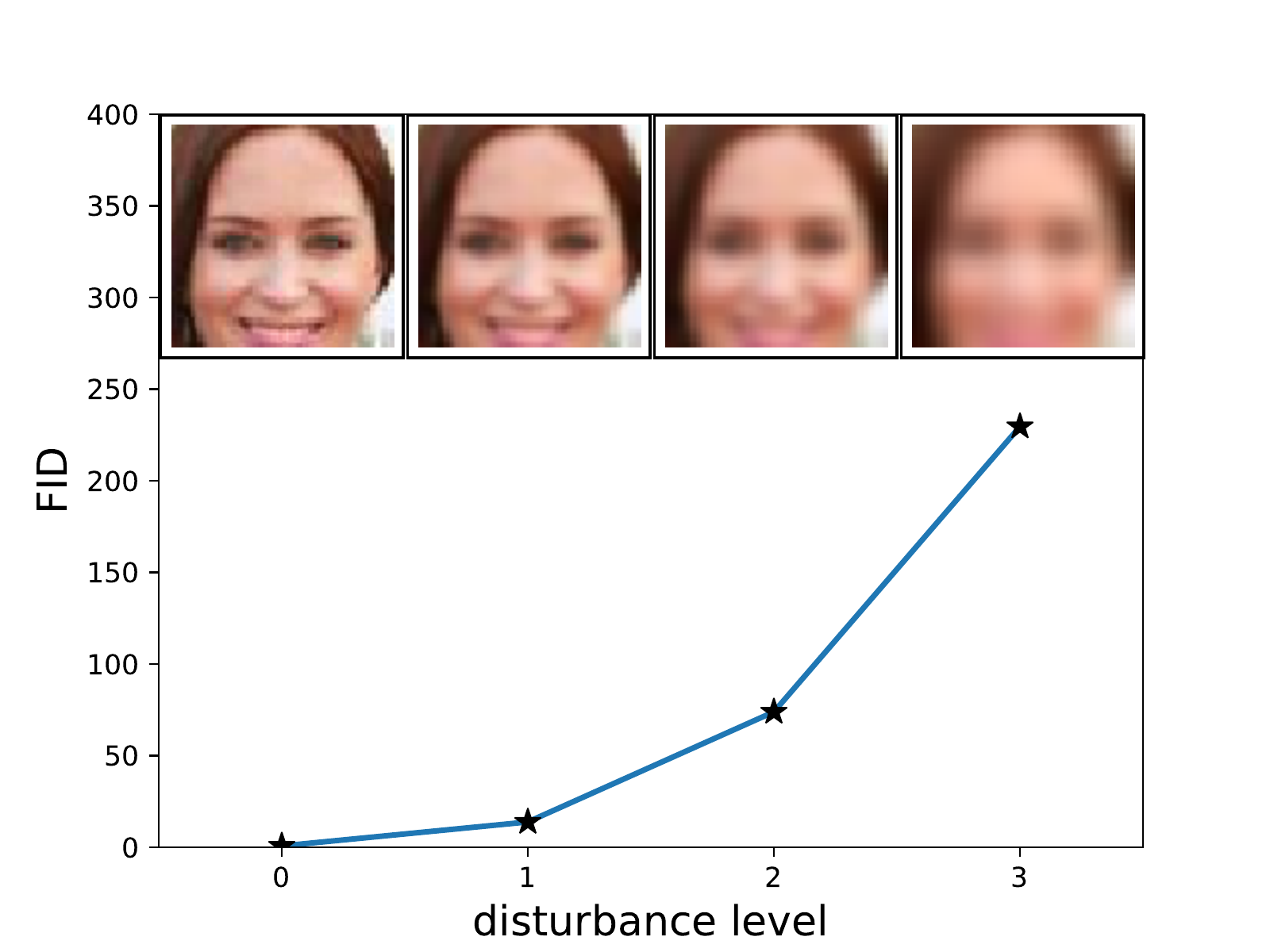}
\includegraphics[width=0.33\textwidth, height=3cm]{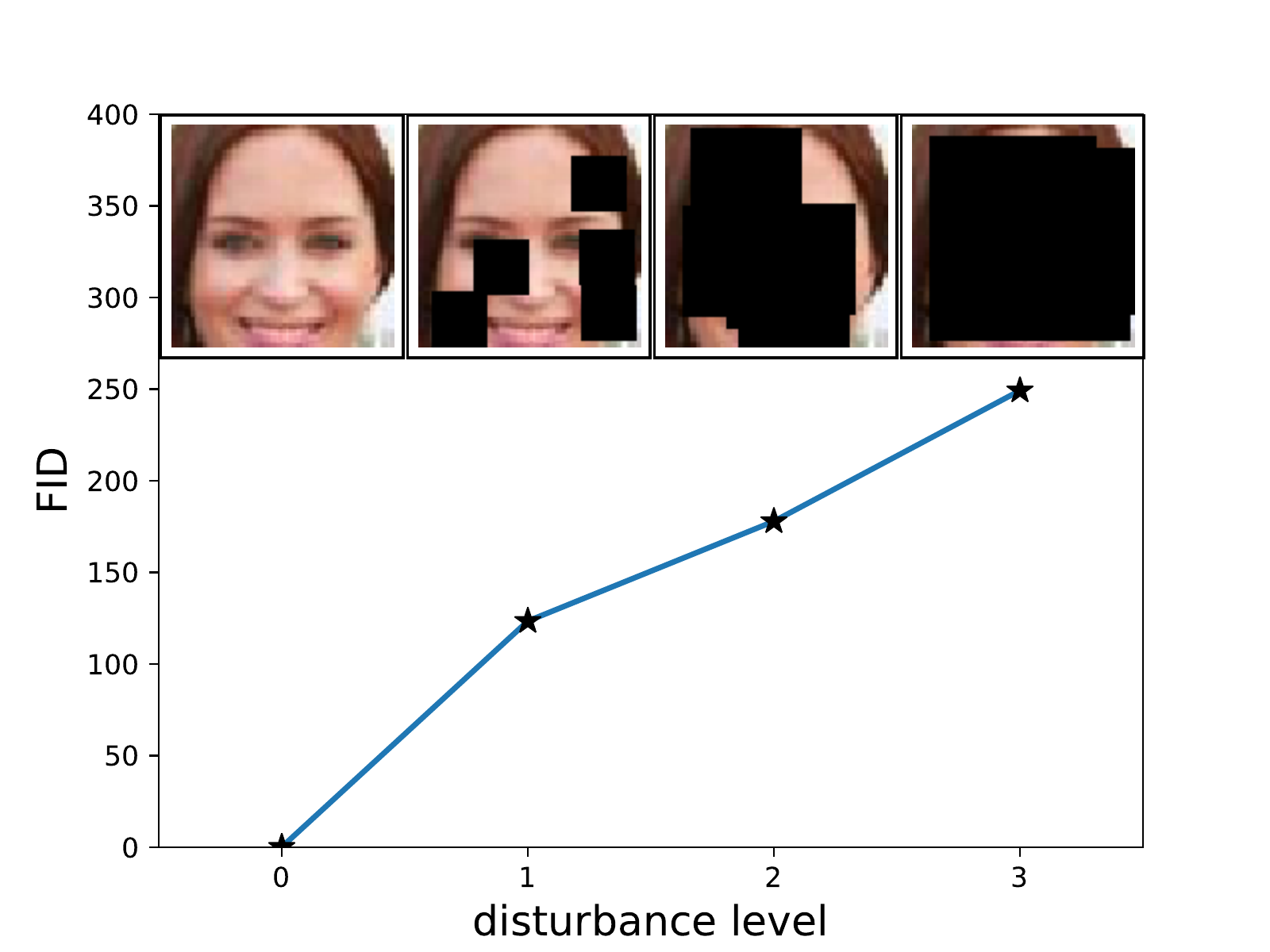}
\includegraphics[width=0.33\textwidth, height=3cm]{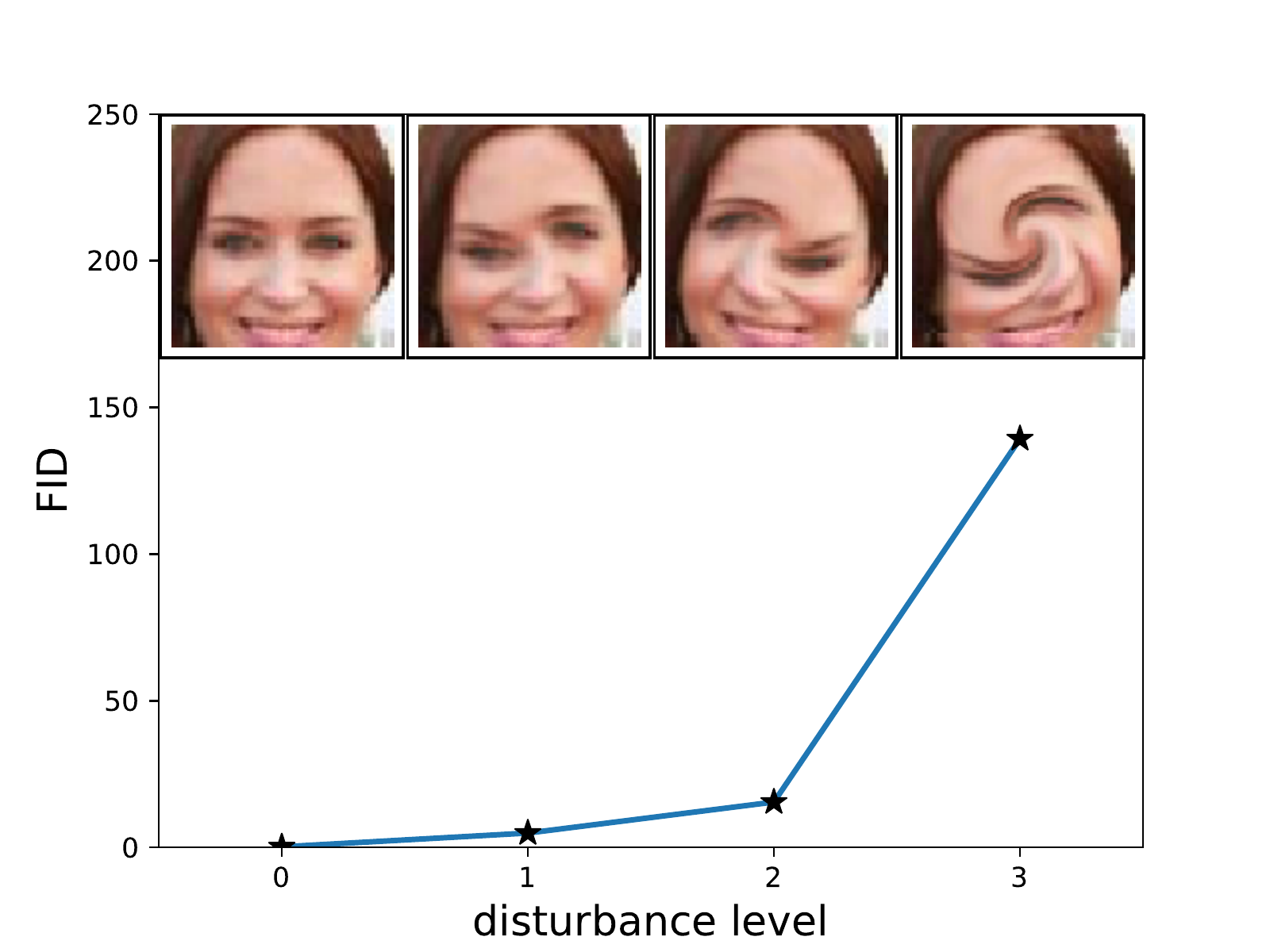}
\includegraphics[width=0.33\textwidth, height=3cm]{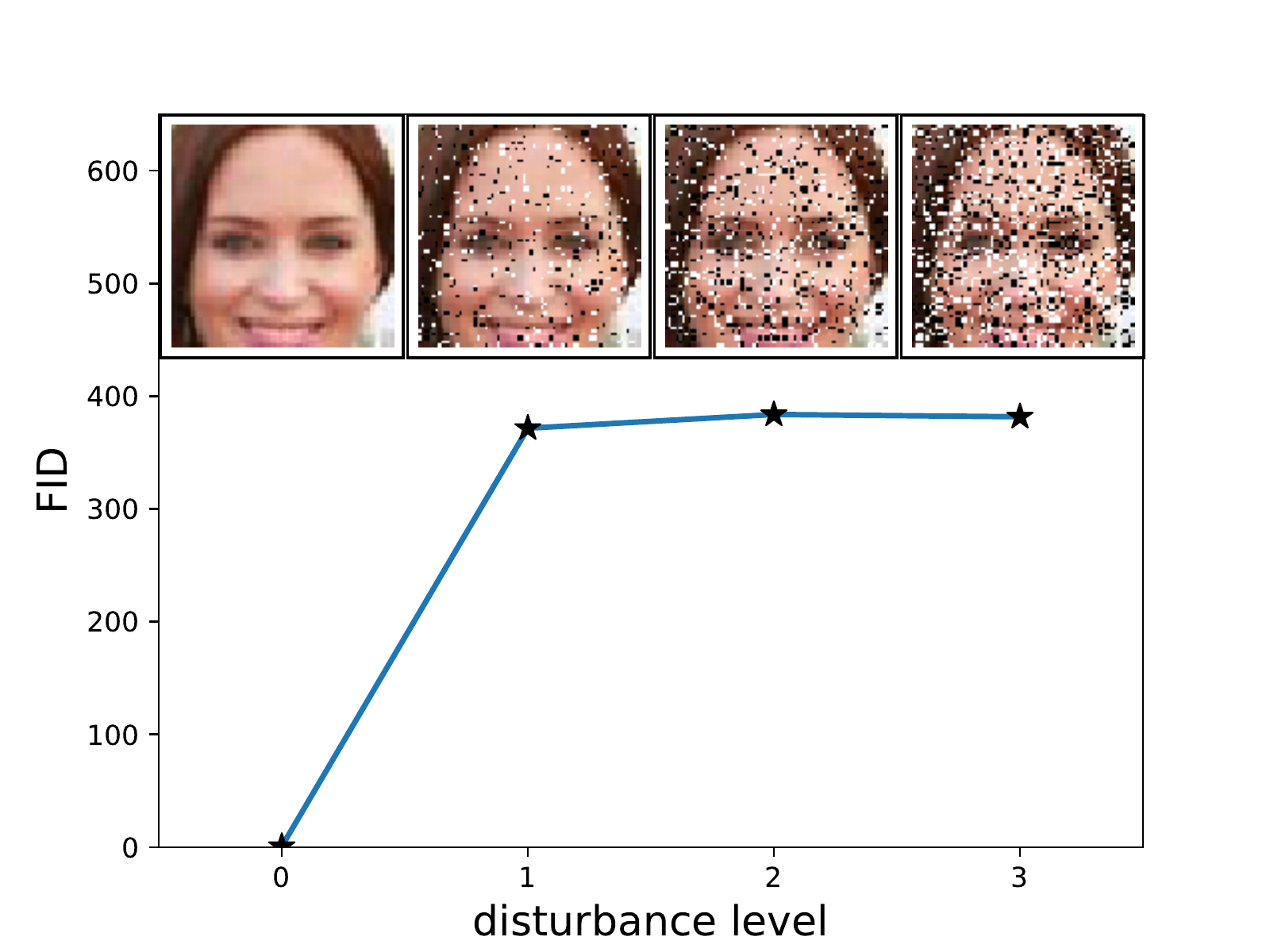}
\includegraphics[width=0.33\textwidth, height=3cm]{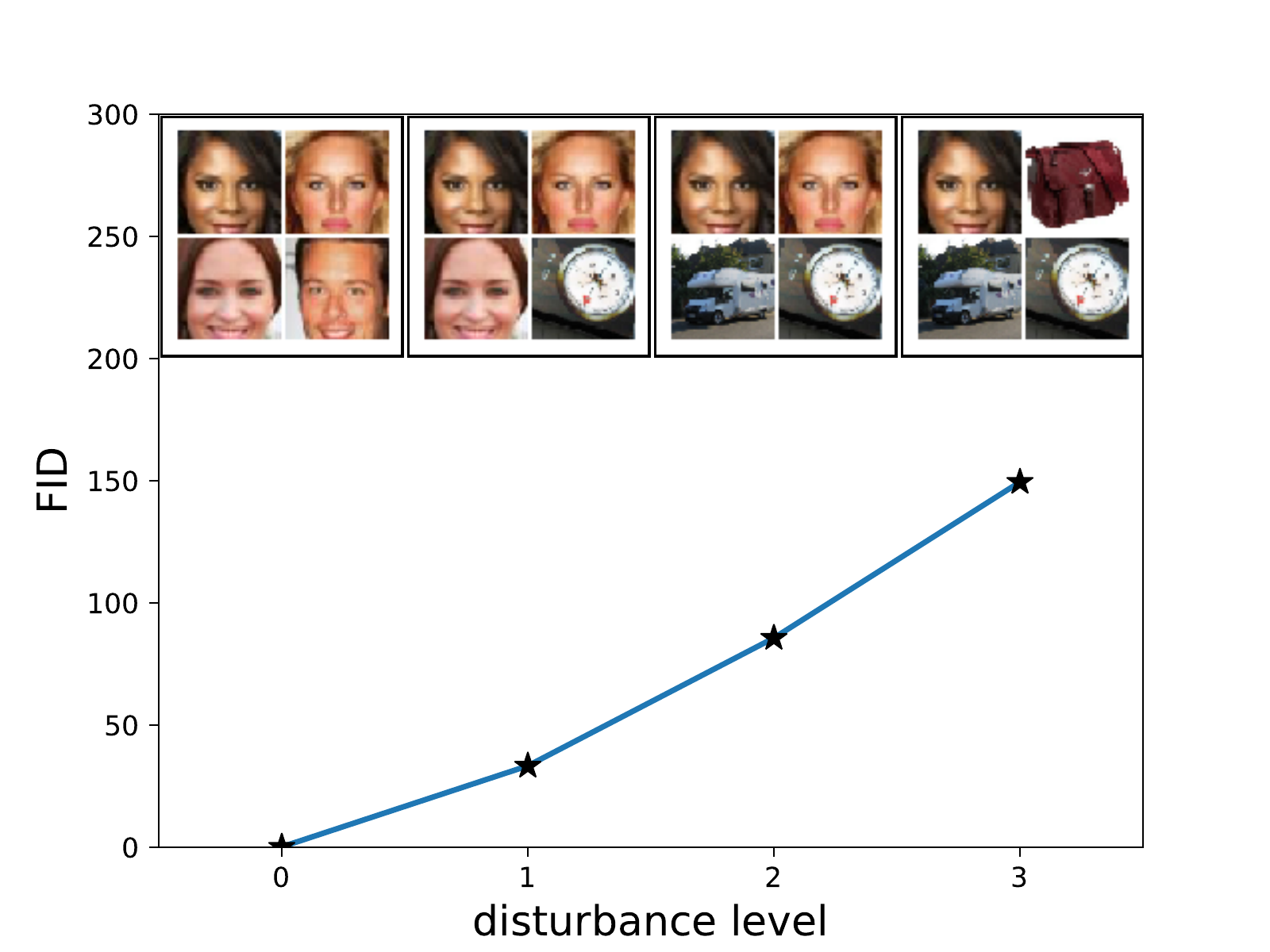}
\caption[FID evaluated for different disturbances]{FID is evaluated for {\bf
upper left:} Gaussian noise, {\bf upper middle:} Gaussian blur, {\bf upper right:} implanted black rectangles, {\bf
lower left:} swirled images, {\bf lower middle:} salt and pepper
noise, and {\bf lower right:} CelebA dataset contaminated by
ImageNet images.
The disturbance level rises from zero and increases to the highest
level. The FID captures the disturbance level very well by
monotonically increasing.
  \label{fig:FIDscore} }
\end{figure}

\paragraph{Model Selection and Evaluation.}

We compare the two time-scale update rule (TTUR) for GANs with the original GAN
training to see whether TTUR improves the convergence speed and performance of
GANs. We have selected Adam stochastic optimization to reduce the risk of mode
collapsing. The advantage of Adam has been confirmed by MNIST experiments, where
Adam indeed considerably reduced the cases for which we observed mode
collapsing. Although TTUR ensures that the discriminator converges during
learning, practicable learning rates must be found for each experiment.
We face a trade-off since the learning rates should be small enough (e.g.\ for
the generator) to ensure convergence but at the same time should be large enough
to allow fast learning. For each of the experiments, the learning rates have
been optimized to be large while still ensuring stable training which is
indicated by a decreasing FID or Jensen-Shannon-divergence (JSD). We further
fixed the time point for stopping training to the update step when the FID or
Jensen-Shannon-divergence of the best models was no longer decreasing. For some
models, we observed that the FID diverges or starts to increase at a certain
time point. An example of this behaviour is shown in Fig.~\ref{fig:dcgan}. The
performance of generative models is evaluated via the Fr\'{e}chet Inception
Distance (FID) introduced above.
For the One Billion Word experiment, the normalized JSD served as performance
measure. For computing the FID, we propagated all images from the training
dataset through the pretrained Inception-v3 model following the computation of
the Inception Score \cite{Salimans:16}, however, we use the last pooling layer
as coding layer. For this coding layer, we calculated the mean $\Bm_w$ and the
covariance matrix $\BC_w$. Thus, we approximate the first and second central
moment of the function given by the Inception coding layer under the real world
distribution. To approximate these moments for the model distribution, we
generate 50,000 images, propagate them through the Inception-v3 model, and
then compute the mean $\Bm$ and the covariance matrix $\BC$.
For computational efficiency, we evaluate the FID every 1,000 DCGAN mini-batch
updates, every 5,000 WGAN-GP outer iterations for the image experiments, and
every 100 outer iterations for the WGAN-GP language model. For the one
time-scale updates a WGAN-GP outer iteration for the image model consists of five
discriminator mini-batches and ten discriminator mini-batches for the language
model, where we follow the original implementation. For TTUR however, the
discriminator is updated only once per iteration. We repeat the training for each single time-scale
(orig) and TTUR learning rate eight times for the image datasets and ten times for the language benchmark.
Additionally to the mean FID training progress we show the minimum and maximum
FID over all runs at each evaluation time-step. For more details,
implementations and further results see Appendix
Section~\ref{sec:apx_experiments} and \ref{sec:soft}.

\paragraph{Simple Toy Data.}

We first want to demonstrate the difference between a single time-scale
update rule and TTUR on a simple toy min/max problem where a saddle point
should be found.
The objective $f(x,y) = (1+x^2)(100-y^2)$ in Fig.~\ref{fig:toy1}
(left) has a saddle point at $(x,y) = (0,0)$ and fulfills assumption A4. The norm
$\|(x,y)\|$ measures the distance of the parameter vector $(x,y)$ to the saddle point.
We update $(x,y)$ by gradient descent in $x$ and gradient ascent in
$y$ using additive Gaussian noise in order to simulate a stochastic update.
The updates should converge to the saddle point $(x,y)=(0,0)$
with objective value $f(0,0)=100$ and the norm $0$.
In Fig.~\ref{fig:toy1} (right),
the first two rows show one time-scale update rules.
The large learning rate in the first row diverges and has large
fluctuations. The smaller
learning rate in the second row converges but slower than
the TTUR in the third row which has slow $x$-updates. TTUR with slow
$y$-updates in the fourth row also converges but slower.

 \begin{figure}[H] \centering
\includegraphics[width=0.49\linewidth]{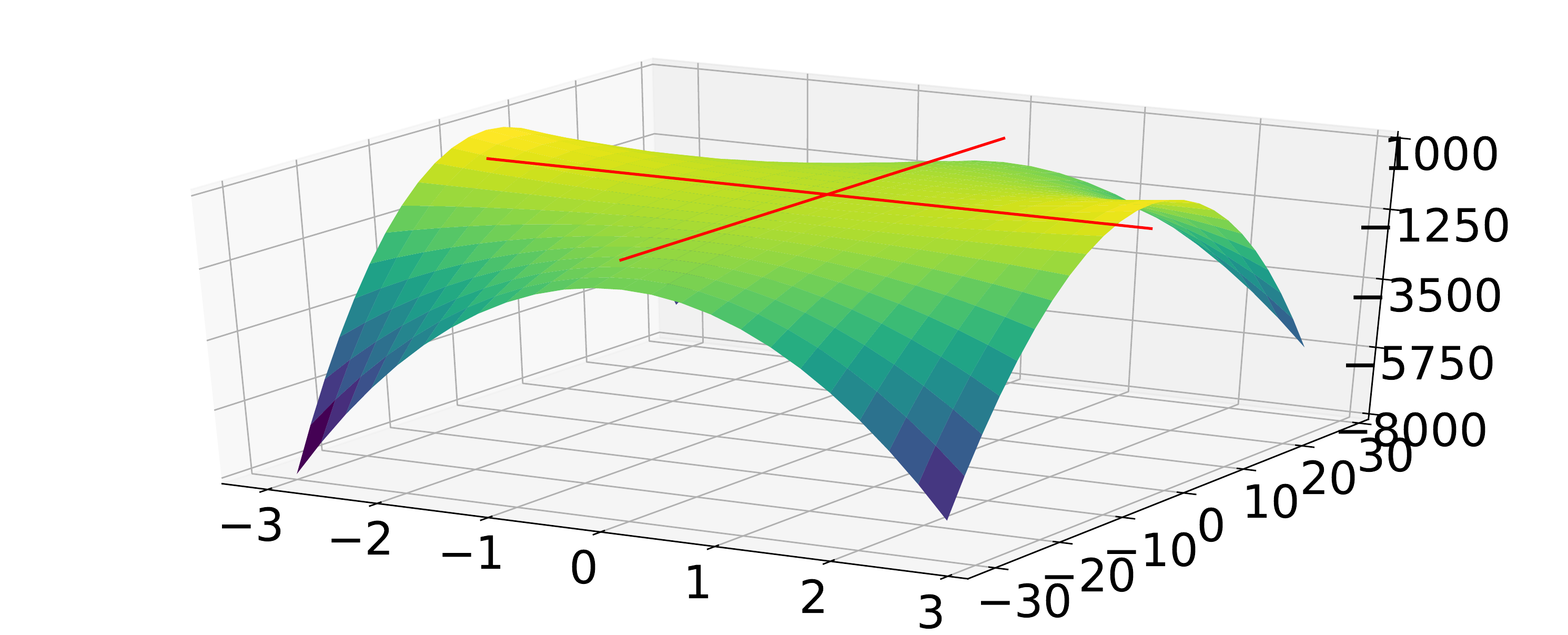}
\includegraphics[width=0.49\linewidth]{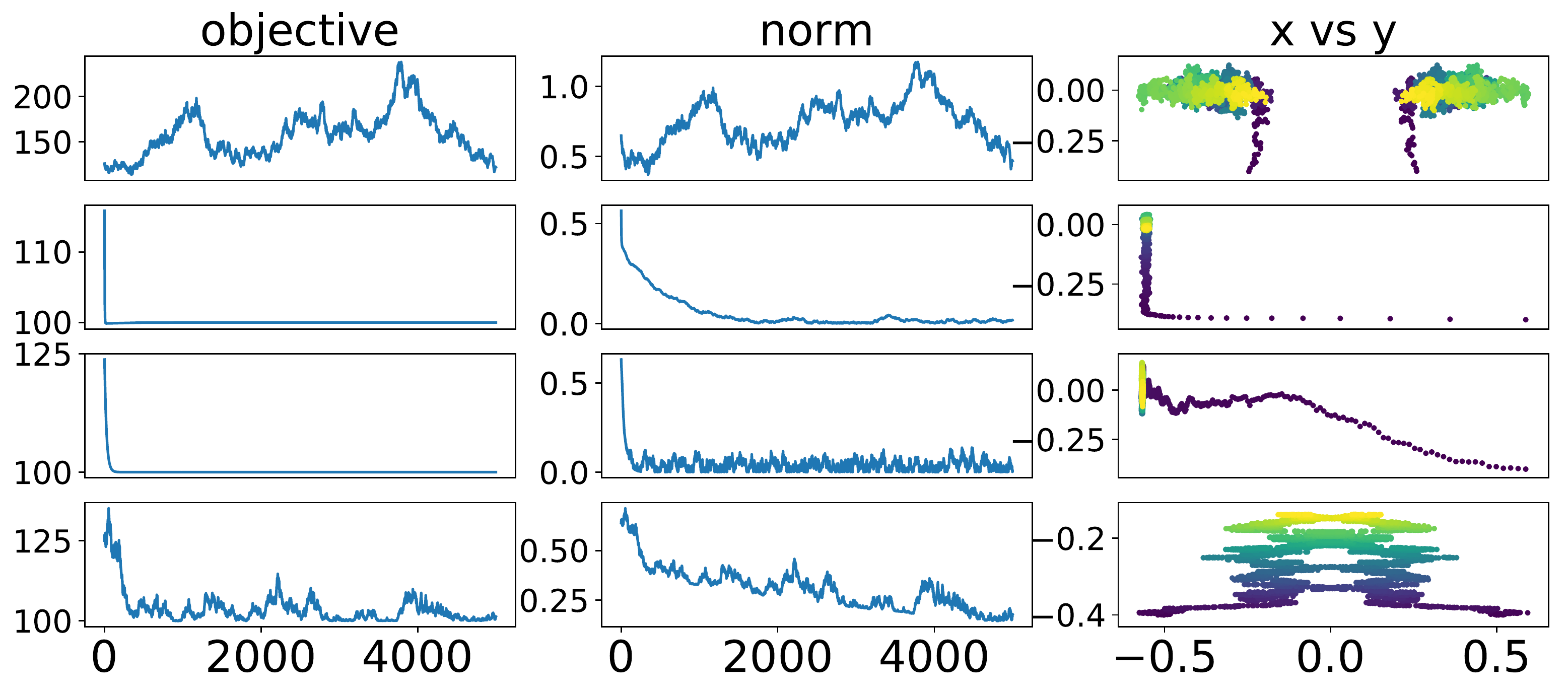}
\caption[TTUR and single time-scale update with toy data.]{{\bf Left:} Plot of
the objective with a saddle point at $(0,0)$. {\bf Right:} Training progress with equal learning rates
of $0.01$ (first row) and $0.001$ (second row)) for $x$ and $y$, TTUR with a
learning rate of $0.0001$ for $x$ vs.\ $0.01$ for $y$ (third row) and a larger
learning rate of $0.01$ for $x$ vs.\ $0.0001$ for $y$ (fourth row). The columns
show the function values (left), norms (middle), and
$(x,y)$ (right). TTUR (third row) clearly converges faster than with equal
time-scale updates and directly moves to the saddle point as shown by the norm
and in the $(x,y)$-plot.
\label{fig:toy1} }
\end{figure}

\paragraph{DCGAN on Image Data.}

We test TTUR for the deep convolutional GAN (DCGAN) \cite{Radford:15} at the
CelebA, CIFAR-10, SVHN and LSUN Bedrooms dataset. Fig.~\ref{fig:dcgan} shows
the FID during learning with the original learning method (orig) and with TTUR.
The original training method is faster at the beginning, but TTUR
eventually achieves better performance. DCGAN trained TTUR
reaches constantly a lower FID than the original method and for CelebA and LSUN
Bedrooms all one time-scale runs diverge. For DCGAN the learning rate of the
generator is larger then that of the discriminator, which, however, does not contradict
the TTUR theory (see the Appendix Section~\ref{sec:lr}). In
Table~\ref{tab:all} we report the best FID with TTUR and
one time-scale training for optimized number of updates and learning
rates. TTUR constantly outperforms standard training and is more stable.

\begin{figure} %[H]
\centering
\includegraphics[width=0.49\linewidth]{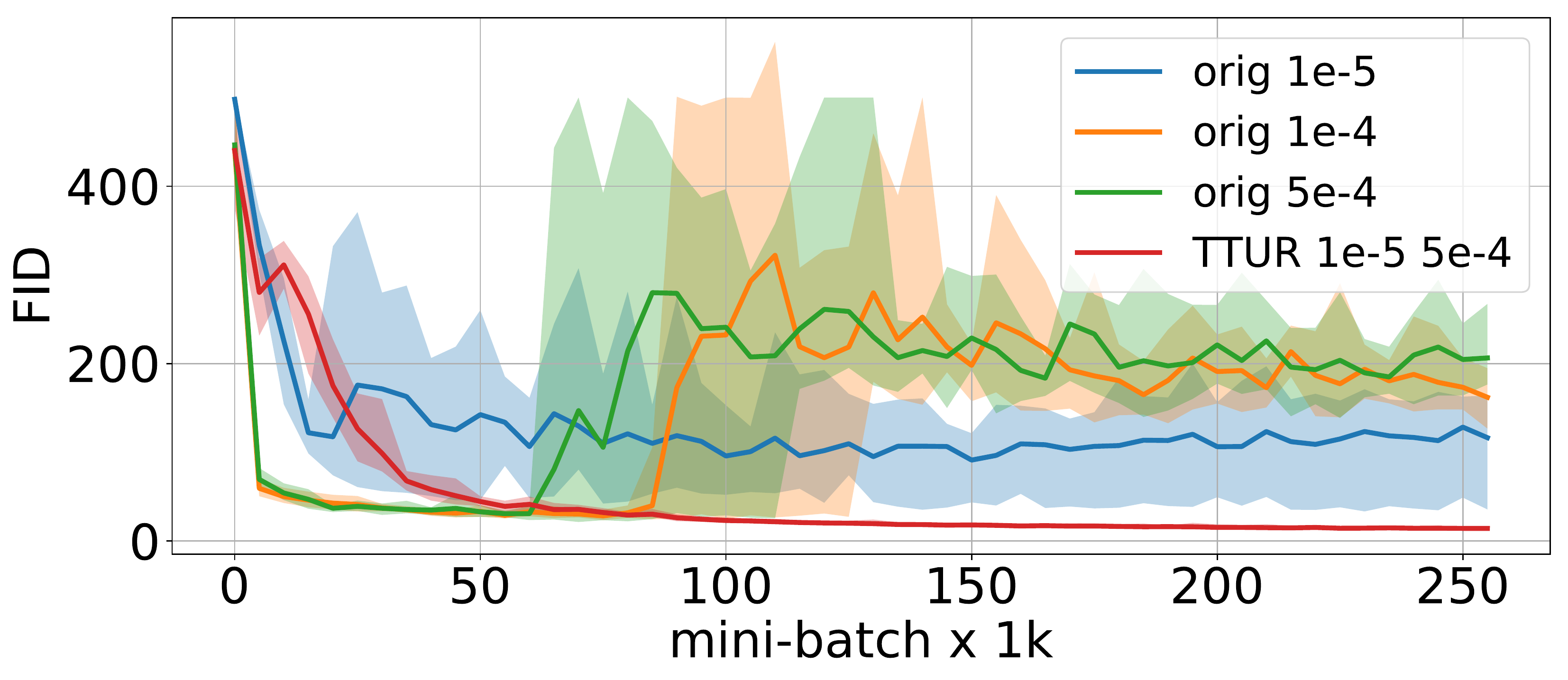}
\includegraphics[width=0.49\linewidth]{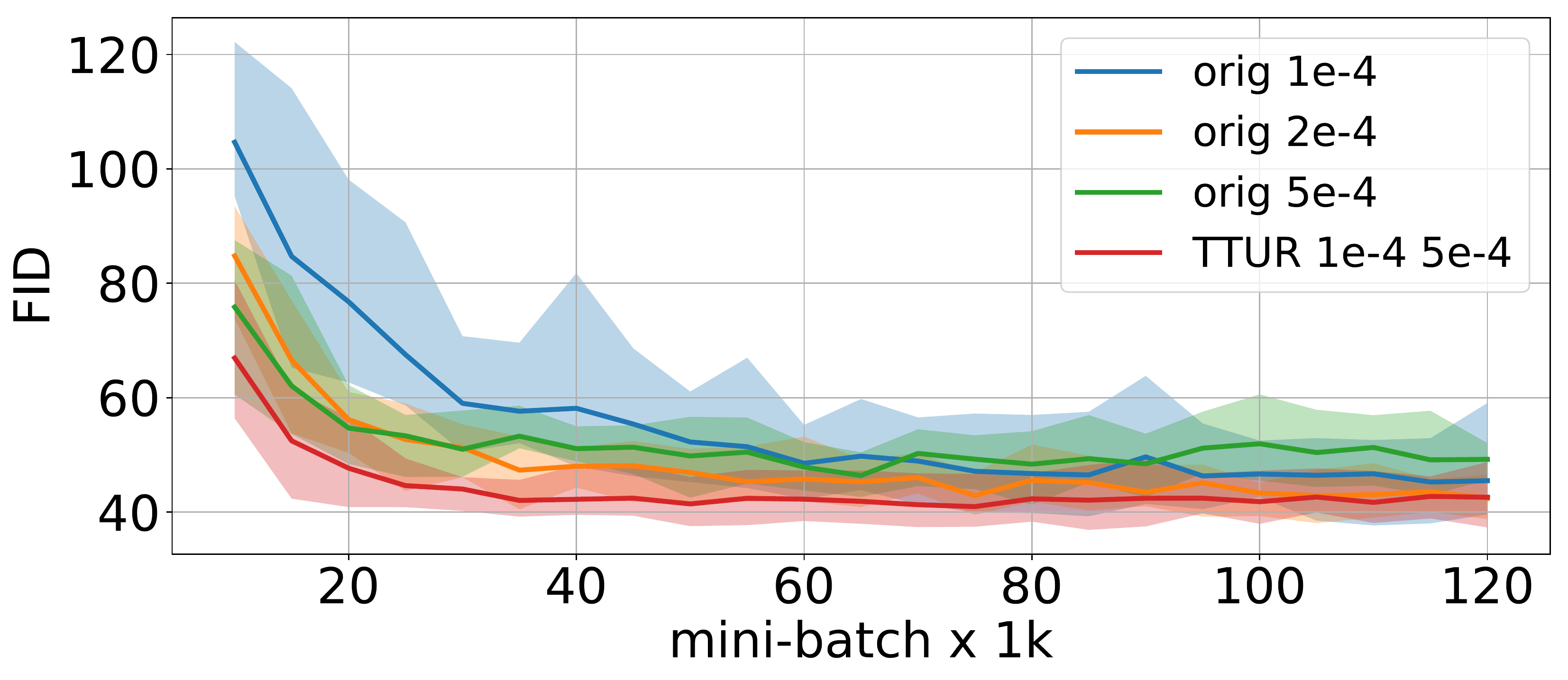}
\includegraphics[width=0.49\linewidth]{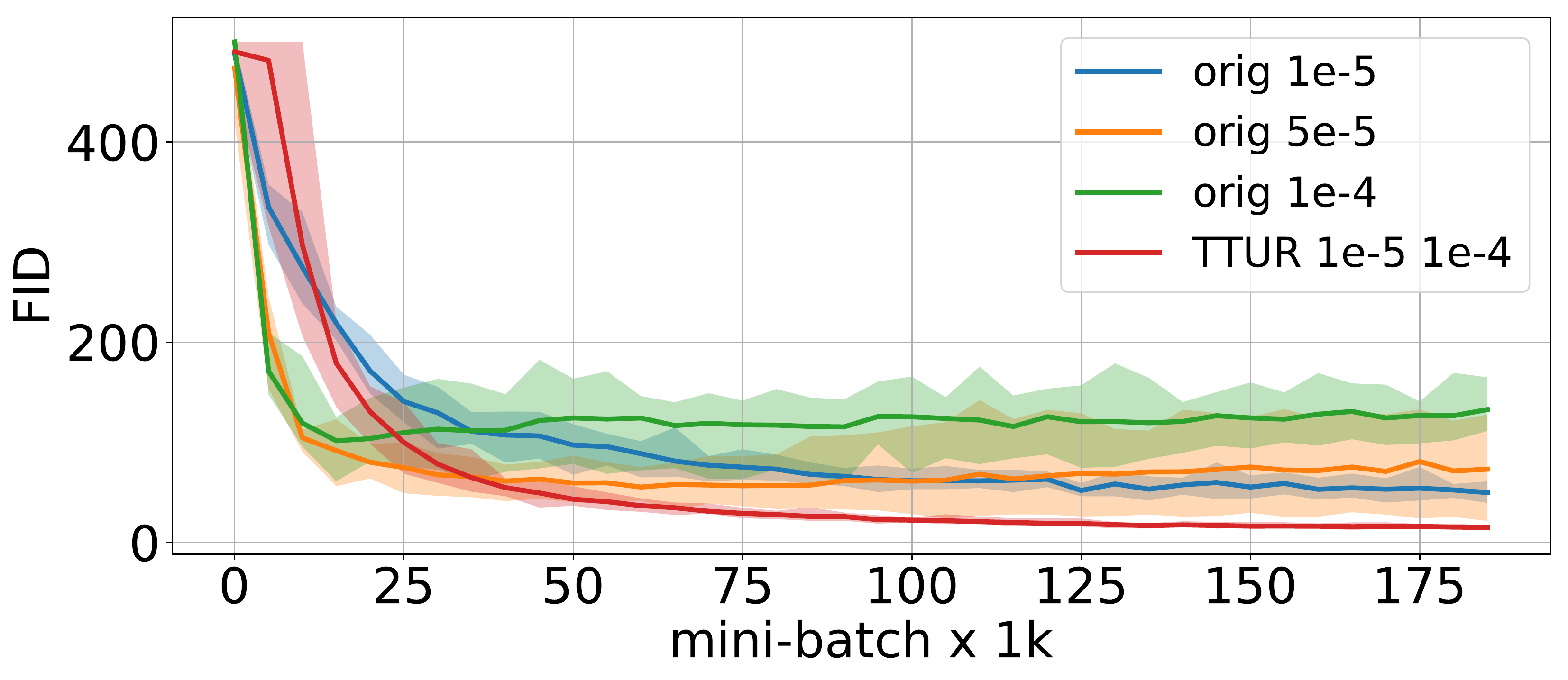}
\includegraphics[width=0.49\linewidth]{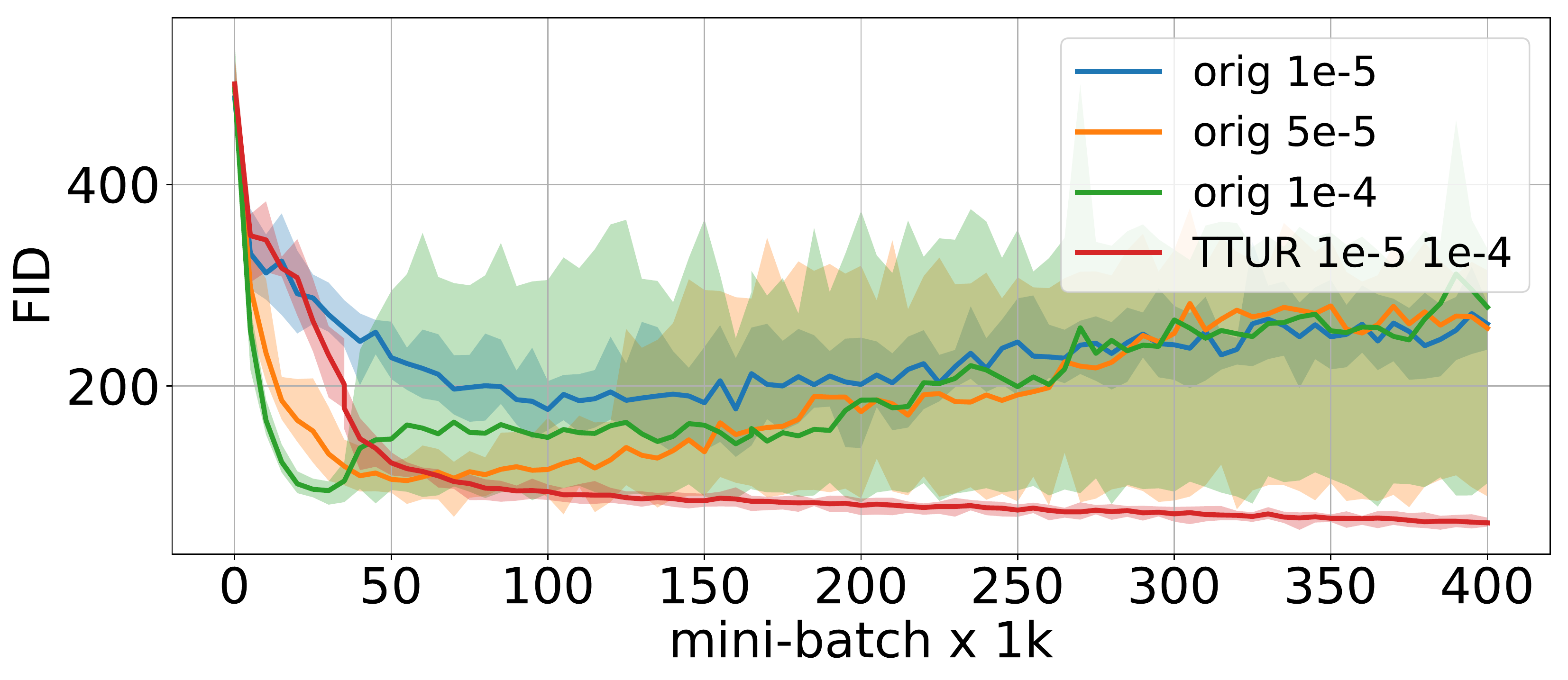} \caption[FID for
DCGAN on CelebA, CIFAR-10, SVHN, and LSUN Bedrooms.]{Mean FID (solid line)
surrounded by a shaded area bounded by the maximum and the minimum over 8 runs for DCGAN on
CelebA, CIFAR-10, SVHN, and LSUN Bedrooms.
TTUR learning rates are given for the discriminator $b$ and generator
$a$ as: ``TTUR $b$ $a$''. {\bf Top Left:} CelebA. {\bf Top
Right:} CIFAR-10, starting at mini-batch update 10k for better visualisation.
{\bf Bottom Left:} SVHN. {\bf Bottom Right:} LSUN Bedrooms.
Training with TTUR (red) is more stable, has much lower variance, and leads to a better FID.
  \label{fig:dcgan} }
\end{figure}

\paragraph{WGAN-GP on Image Data.}

We used the WGAN-GP image model \cite{Gulrajani:17} to test TTUR with the
CIFAR-10 and LSUN Bedrooms datasets. In contrast to the original code where the
discriminator is trained five times for each generator update, TTUR updates the discriminator
only once, therefore we align the training progress with wall-clock time. The
learning rate for the original training was optimized to be large but leads to
stable learning. TTUR can use a higher learning rate for the discriminator since
TTUR stabilizes learning. Fig.~\ref{fig:wgan_gp} shows the FID
during learning with the original learning method and
with TTUR. Table~\ref{tab:all} shows the
best FID with TTUR and one time-scale training for optimized number of iterations and
learning rates. Again TTUR reaches lower FIDs than one time-scale training.

\begin{figure}[H]
\centering
\includegraphics[width=0.49\linewidth]{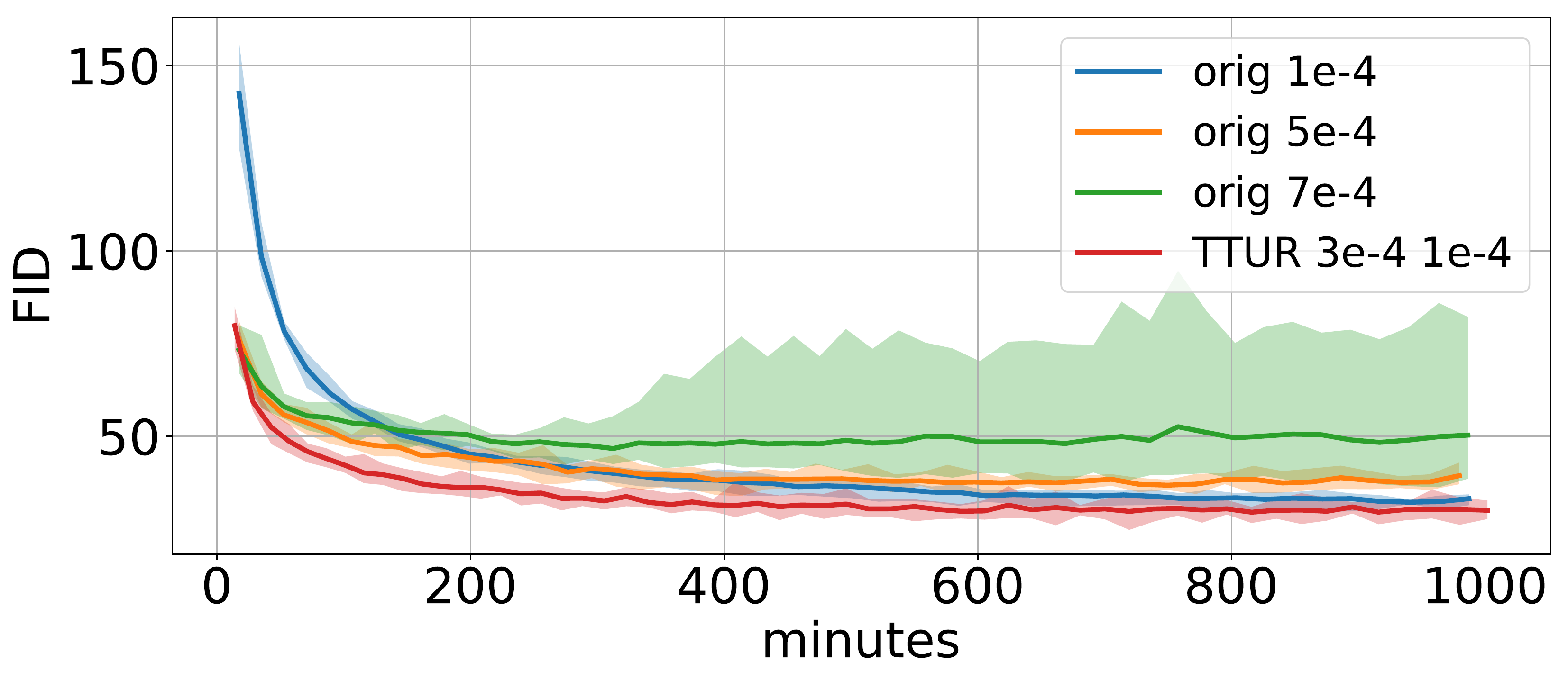}
\includegraphics[width=0.49\linewidth]{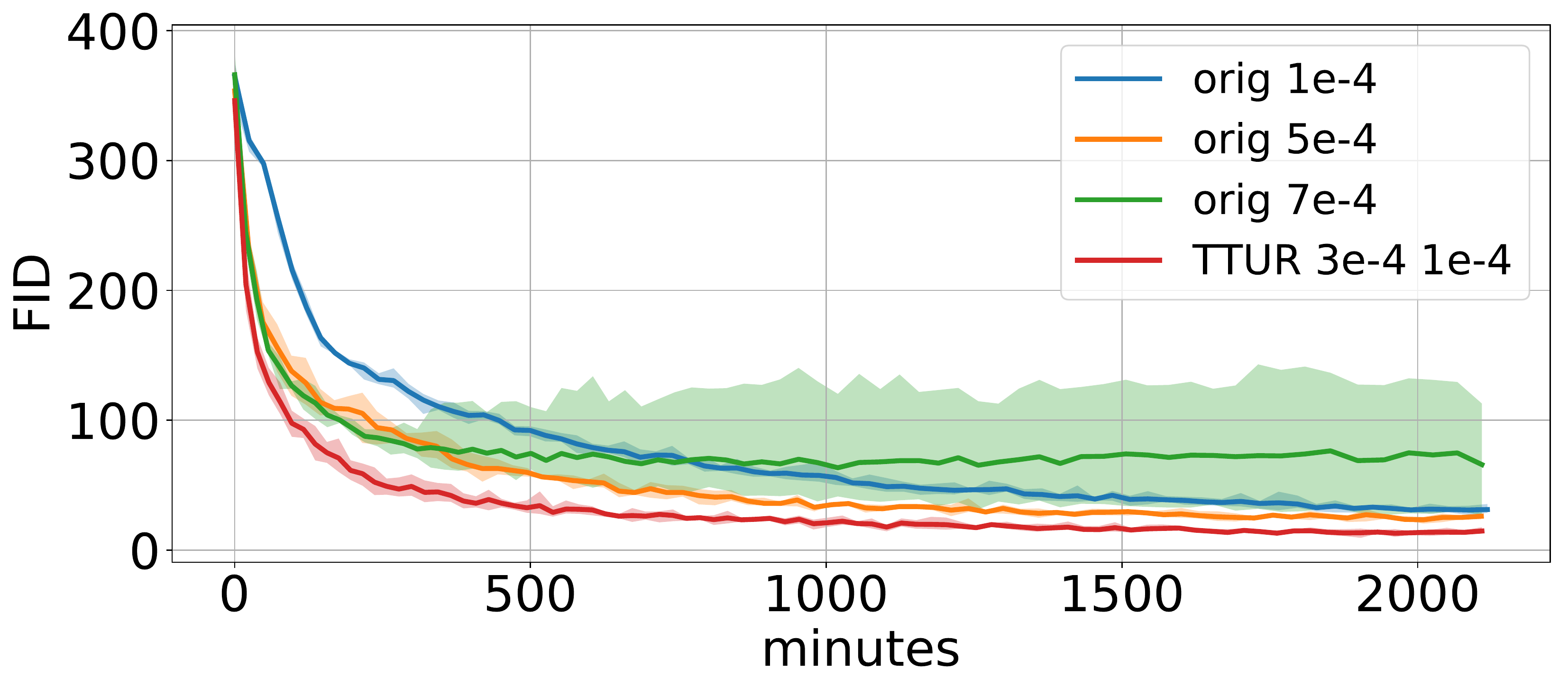} \caption[FID for
WGAN-GP trained on CIFAR-10 and LSUN Bedrooms.]{Mean FID (solid line)
surrounded by a shaded area bounded by the maximum and the minimum over 8 runs for WGAN-GP on
CelebA, CIFAR-10, SVHN, and LSUN Bedrooms.
TTUR learning rates are given for the discriminator $b$ and generator
$a$ as: ``TTUR $b$ $a$''.
{\bf Left:} CIFAR-10, starting at minute 20. {\bf Right:} LSUN
Bedrooms. Training with TTUR (red) has much lower variance and leads to a better FID.
  \label{fig:wgan_gp} }
\end{figure}

\paragraph{WGAN-GP on Language Data.}

Finally the One Billion Word Benchmark \cite{Chelba:13} serves to evaluate TTUR
on WGAN-GP.
The character-level generative language model is a 1D convolutional neural
network (CNN) which maps a latent vector to a sequence of one-hot character
vectors of dimension 32 given by the maximum of a softmax output.
The discriminator is also a 1D CNN applied to sequences of one-hot vectors of 32
characters.
Since the FID criterium only works for images, we measured the performance by
the Jensen-Shannon-divergence (JSD) between the model and the real world
distribution as has been done previously \cite{Gulrajani:17}.
In contrast to the original code where the critic is trained ten times for each
generator update, TTUR updates the discriminator only once, therefore we align
the training progress with wall-clock time. The learning rate for the original
training was optimized to be large but leads to stable learning. TTUR can use a
higher learning rate for the discriminator since TTUR stabilizes learning.
We report for the 4 and 6-gram word evaluation the normalized mean JSD for ten
runs for original training and TTUR training in Fig.~\ref{fig:lang}. In
Table~\ref{tab:all} we report the best JSD at an optimal time-step where TTUR
outperforms the standard training for both  measures. The improvement of TTUR on
the 6-gram statistics over original training shows that TTUR enables to
learn to generate more subtle pseudo-words which better resembles real words.

\begin{figure} %[H]
\centering
\includegraphics[width=0.49\textwidth]{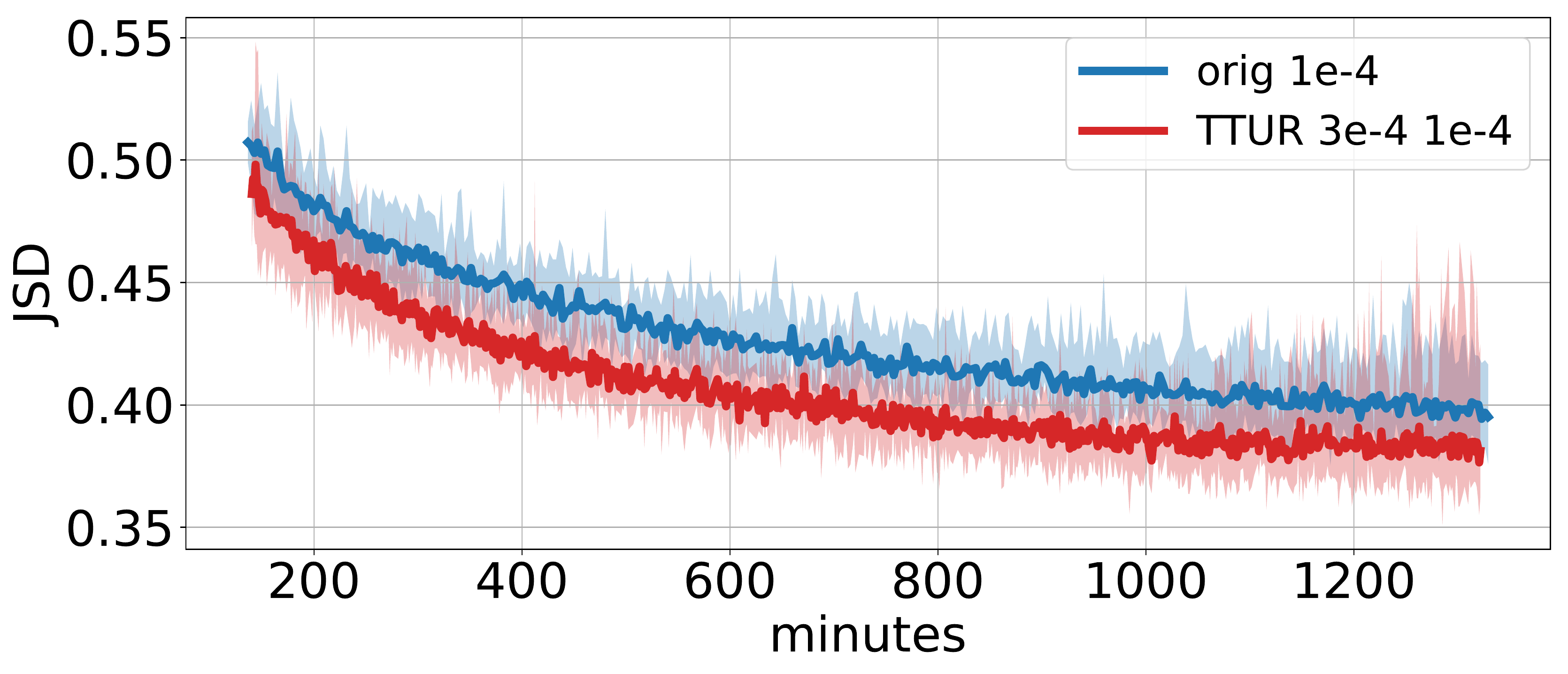}
\includegraphics[width=0.49\textwidth]{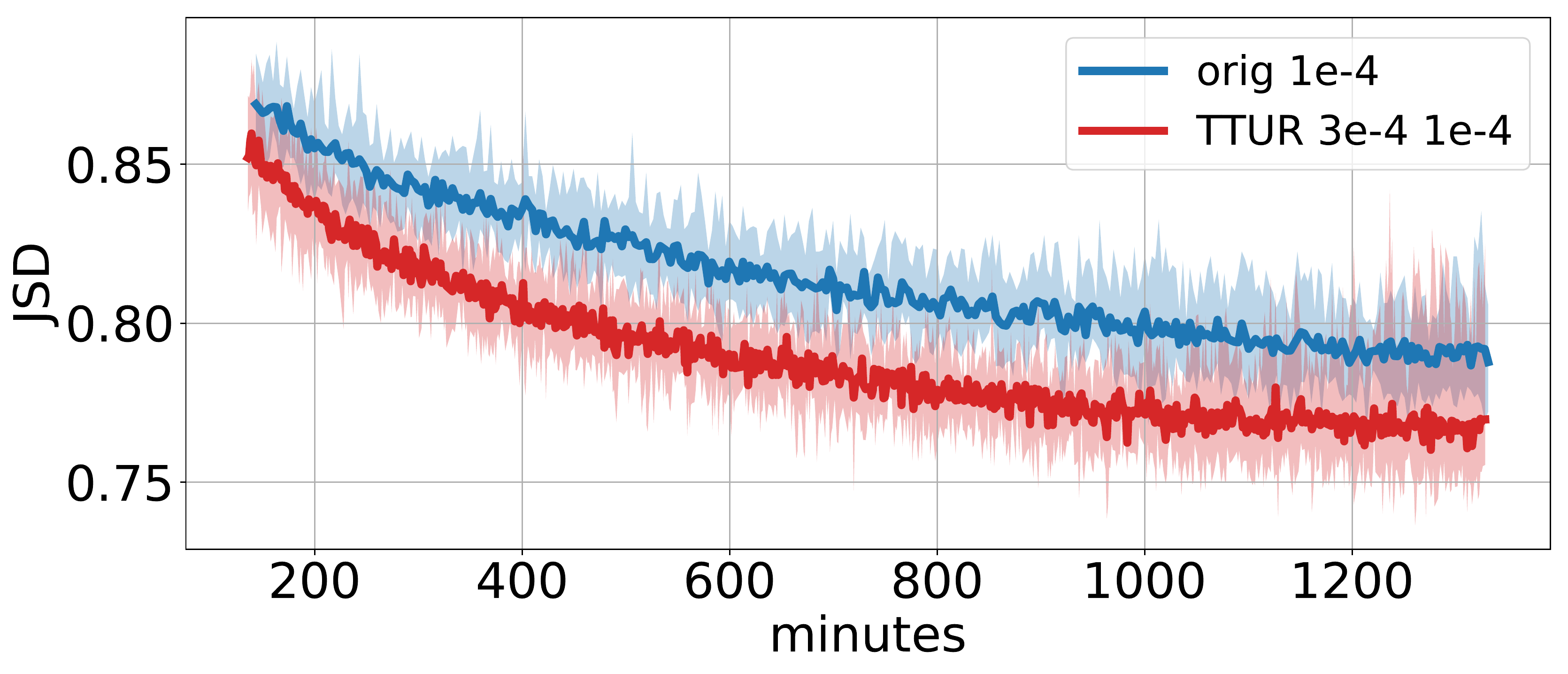}
\caption[Performance of WGAN-GP on One Billion Word.]{Performance of
WGAN-GP models trained with the original (orig) and our TTUR
method on the One Billion Word benchmark.
The performance is measured by
the normalized Jensen-Shannon-divergence based on 4-gram ({\bf left}) and
 6-gram ({\bf right}) statistics averaged (solid line) and surrounded
 by a shaded area bounded by the maximum and the minimum
over 10 runs, aligned to wall-clock
time and starting at minute 150. TTUR learning (red) clearly
outperforms the
original one time-scale learning.}
\label{fig:lang}
\end{figure}

\begin{table}[H]
\begin{center}
\caption[Results DCGAN and WGAN-GP]{The performance of DCGAN and WGAN-GP trained
with the original one time-scale update rule and with TTUR on CelebA, CIFAR-10, SVHN, LSUN Bedrooms and
the One Billion Word Benchmark. During training we compare the performance with respect
to the FID and JSD for optimized number of updates.
TTUR exhibits consistently a better FID and a better JSD.}
\label{tab:all}
%\begin{tabular}{@{}lllccllcc@{}}
\begin{tabular}{lllccllcc}
 \toprule
 \multicolumn{2}{l}{DCGAN Image}\\
 %\cline{1-2}
 %\noalign{\vskip 3pt}
 dataset & method & b, a & updates & FID & method & b = a & updates & FID \\
 \hline
 \noalign{\vskip 3pt}
 CelebA & TTUR & 1e-5, 5e-4 & 225k & {\bf 12.5} & orig & 5e-4 & 70k & 21.4 \\
 CIFAR-10 & TTUR &   1e-4, 5e-4 & 75k & {\bf 36.9} & orig & 1e-4 & 100k & 37.7 \\
 SVHN & TTUR &   1e-5, 1e-4 & 165k & {\bf 12.5} & orig & 5e-5 & 185k & 21.4 \\
 LSUN & TTUR &   1e-5, 1e-4 & 340k & {\bf 57.5} & orig & 5e-5 & 70k & 70.4 \\
 \hline
 \noalign{\vskip 3pt}
 \multicolumn{2}{l}{WGAN-GP Image} \\
 %\cline{1-2}
 %\noalign{\vskip 3pt}
 dataset & method & b, a & time(m) & FID & method & b = a & time(m) & FID \\
 \hline
 \noalign{\vskip 3pt}
 CIFAR-10 & TTUR &   3e-4, 1e-4 & 700 & {\bf 24.8} & orig & 1e-4 & 800 & 29.3 \\
 LSUN & TTUR & 3e-4, 1e-4 & 1900 & {\bf 9.5} & orig & 1e-4 & 2010 & 20.5 \\
 \hline
 \noalign{\vskip 3pt}
 \multicolumn{2}{l}{WGAN-GP Language}\\
 %\cline{1-2}
 %\noalign{\vskip 3pt}
 n-gram & method & b, a & time(m) & JSD & method & b = a & time(m) & JSD \\
 \hline
 \noalign{\vskip 3pt}
 4-gram & TTUR &   3e-4, 1e-4 & 1150 & {\bf 0.35} & orig & 1e-4 & 1040 & 0.38 \\
 6-gram & TTUR &   3e-4, 1e-4 & 1120 & {\bf 0.74} & orig & 1e-4 & 1070 & 0.77 \\
 \hline

 \end{tabular}
 \end{center}
\end{table}

\section*{Conclusion}
\label{sec:conclusion}
For learning GANs, we have introduced the two time-scale update rule (TTUR),
which we have proved to converge to a stationary local Nash equilibrium.
Then we described Adam stochastic optimization as a heavy ball with friction
(HBF) dynamics, which shows that Adam converges and that Adam tends to find flat
minima while avoiding small local minima.
A second order differential equation describes the learning dynamics of Adam as
an HBF system. Via this differential equation, the convergence of GANs trained
with TTUR to a stationary local Nash equilibrium can be extended to Adam.
Finally, to evaluate GANs, we introduced the `Fr\'{e}chet Inception Distance''
(FID) which captures the similarity of generated images to real ones better than
the Inception Score. In experiments we have compared GANs trained with TTUR to
conventional GAN training with a one time-scale update rule
on CelebA, CIFAR-10, SVHN, LSUN Bedrooms, and the One
Billion Word Benchmark. TTUR outperforms conventional GAN training consistently
in all experiments.

\section*{Acknowledgment}
This work was supported by NVIDIA Corporation, Bayer AG with Research Agreement
09/2017, Zalando SE with Research Agreement 01/2016, Audi.JKU Deep Learning
Center, Audi Electronic Venture GmbH, IWT research grant IWT150865 (Exaptation),
H2020 project grant 671555 (ExCAPE) and FWF grant P 28660-N31.

\section*{References}
The references are provided after Section~\ref{sec:references}.

\section*{Appendix}
%\vspace{2cm}
\renewcommand{\thesection}{A\arabic{section}}
\renewcommand{\thefigure}{A\arabic{figure}}
\renewcommand{\thetable}{A\arabic{table}}

\sectionfont{\large}
\subsectionfont{\normalsize}
\subsubsectionfont{\normalsize}
\paragraphfont{\normalsize}

\tableofcontents

\section{Fr\'{e}chet Inception Distance (FID)}
\label{sec:fid}

We improve the Inception score for comparing the results of GANs
\cite{Salimans:16}.
The Inception score has the disadvantage that it does not use
the statistics of real world samples and compare it to
the statistics of synthetic samples.
Let $p(.)$ be the distribution of model samples and $p_w(.)$ the
distribution of the samples from real world.
The equality $p(.)=p_w(.)$ holds except for a non-measurable set
if and only if $\int p(.) f(x) dx=\int p_w(.) f(x) dx$ for
a basis $f(.)$ spanning the function space in which $p(.)$ and $p_w(.)$
live.
These equalities of expectations are used to describe distributions
by moments or cumulants, where $f(x)$ are polynomials of the data $x$.
We replacing $x$ by the coding layer of
an Inception model in order to obtain vision-relevant features and
consider polynomials of the coding unit functions.
For practical reasons we only consider the first two polynomials, that
is, the first two moments: mean and covariance.
The Gaussian is the maximum entropy distribution for given
mean and covariance, therefore we assume the coding units to follow a
multidimensional Gaussian.
The difference of two Gaussians is measured by the Fr\'{e}chet
distance \cite{Frechet:57}
also known as Wasserstein-2 distance \cite{Wasserstein:69}.
The Fr\'{e}chet distance
$d(.,.)$ between the Gaussian with mean and covariance $(\Bm,\BC)$ obtained
from $p(.)$ and the Gaussian $(\Bm_w,\BC_w)$ obtained
from $p_w(.)$ is called the ``Fr\'{e}chet Inception Distance'' (FID), which is
given by \cite{Dowson:82}:
\begin{align}
d^2((\Bm,\BC),(\Bm_w,\BC_w))=\|\Bm-\Bm_w\|_2^2+  \TR \bigl(\BC+\BC_w-2\bigl(
\BC\BC_w\bigr)^{1/2}\bigr) \ .
\end{align}
Next we show that the FID is consistent with
increasing disturbances and human judgment on the CelebA dataset.
We computed the $(\Bm_w,\BC_w)$ on all CelebA images, while
for computing $(\Bm,\BC)$ we used 50,000 randomly selected samples.
We considered following disturbances of the image $\BX$:
\begin{enumerate}
\item {\bf Gaussian noise}: We constructed a matrix $\BN$ with Gaussian
noise scaled to $[0,255]$. The noisy image is computed as
$(1-\alpha) \BX + \alpha \BN$ for $\alpha \in \{0,0.25,0.5,0.75\}$.
The larger $\alpha$ is, the larger is the noise added to the image,
the larger is the disturbance of the image.

\item {\bf Gaussian blur}: The image is convolved with a Gaussian
kernel with standard deviation $\alpha \in \{0,1,2,4\}$.
The larger $\alpha$ is, the larger is the disturbance of the image, that is,
the more the image is smoothed.

\item {\bf Black rectangles}: To an image five black rectangles are
are added at randomly chosen locations.
The rectangles cover parts of the image. The size of the rectangles
is $\alpha \text{imagesize}$ with  $\alpha \in \{0,0.25,0.5,0.75\}$.
The larger $\alpha$ is, the larger is the disturbance of the image,
that is, the more of the image is covered by black rectangles.

\item {\bf Swirl}: Parts of the image are transformed as a
spiral, that is, as a swirl (whirlpool effect).
Consider the coordinate $(x, y)$ in the noisy (swirled) image for
which we want to find the color. Towards this end we need the
reverse mapping for the swirl transformation which gives the location
which is mapped to $(x, y)$.
We first compute polar coordinates
relative to a center $(x_0, y_0)$ given by the angle
$\theta = \arctan((y - y_0)/(x - x_0))$ and the radius
$r = \sqrt{(x - x_0)^2 + (y - y_0)^2}$.
We transform them according to
$\theta' = \theta + \alpha e^{-5 r / (\ln2 \rho) }$.
Here $\alpha$ is a parameter for the amount of swirl and
$\rho$ indicates the swirl extent in pixels.
The original coordinates, where the color for $(x, y)$ can be found,
are
$x_{\mathrm{org}}=x_0 + r \cos(\theta')$ and
$y_{\mathrm{org}}=y_0 + r \sin(\theta')$.
We set $(x_0, y_0)$ to
the center of the image and $\rho=25$.
The disturbance level is given by the amount of swirl
$\alpha \in \{0,1,2,4\}$.
The larger $\alpha$ is, the larger is the disturbance of the image via the
amount of swirl.

\item {\bf Salt and pepper noise}:
Some pixels of the image are set to black or white, where black is
chosen with 50\% probability (same for white). Pixels are randomly
chosen for being flipped to white or black, where the ratio of pixel
flipped to white or black is given by the noise level
$\alpha \in \{0,0.1,0.2,0.3\}$.
The larger $\alpha$ is, the larger is the noise added to the image
via flipping pixels to white or black, the larger is the disturbance level.

\item {\bf ImageNet contamination}: From each of the 1,000 ImageNet classes,
5 images are randomly chosen, which gives 5,000 ImageNet images.
The images are ensured to be RGB and to have a minimal size of 256x256.
A percentage of $\alpha \in \{0,0.25,0.5,0.75\}$ of the
CelebA images has been replaced by ImageNet images.
$\alpha =0$ means all images are from CelebA, $\alpha=0.25$ means that
75\% of the images are from CelebA and 25\% from ImageNet etc.
The larger $\alpha$ is, the larger is the disturbance of the CelebA dataset
by contaminating it by ImageNet images.
The larger the disturbance level is, the more the dataset
deviates from the reference real world dataset.
\end{enumerate}

We compare the Inception Score \cite{Salimans:16} with the FID.
The Inception Score with $m$ samples and $K$ classes is
\begin{align}
&\exp \big( \frac{1}{m} \sum_{i=1}^{m} \sum_{k=1}^{K} p(y_k\mid \BX_i)
  \log \frac{p(y_k\mid \BX_i)}{p(y_k)} \big) \ .
\end{align}
The FID is a distance, while the Inception Score is a score.
To compare FID and Inception Score,
we transform the Inception Score to a distance,
which we call ``Inception Distance'' (IND).
This transformation to a distance
is possible since the Inception Score has a
maximal value.
For zero probability $p(y_k\mid \BX_i)=0$,
we set the value $p(y_k\mid \BX_i)
  \log \frac{p(y_k\mid \BX_i)}{p(y_k)}=0$.
We can bound the $\log$-term by
\begin{align}
&\log \frac{p(y_k\mid \BX_i)}{p(y_k)} \ \leq \  \log \frac{1}{1/m} \
  = \ \log m \ .
\end{align}
Using this bound, we obtain an upper bound on the Inception Score:
\begin{align}
&\exp \big( \frac{1}{m} \sum_{i=1}^{m} \sum_{k=1}^{K} p(y_k\mid \BX_i)
  \log \frac{p(y_k\mid \BX_i)}{p(y_k)} \big) \\
&\leq \ \exp \big( \log m \frac{1}{m} \sum_{i=1}^{m} \sum_{k=1}^{K} p(y_k\mid \BX_i) \big) \\
&= \ \exp \big( \log m \frac{1}{m} \sum_{i=1}^{m} 1 \big) \ = \ m \ .
\end{align}
The upper bound is tight and achieved if $m \leq K$ and every sample is from a
different class and the sample is classified correctly with
probability 1.
The IND is computed ``IND = $m$ - Inception Score'', therefore the IND is
zero for a perfect subset of the ImageNet with $m<K$ samples,
where each sample stems from a different class.
Therefore both distances should increase with increasing disturbance level.
In Figure~\ref{fig:fidind} we present the evaluation
for each kind of disturbance. The larger the disturbance level is, the larger
the FID and IND  should be. In Figure~\ref{fig:fid1},
\ref{fig:fid2}, \ref{fig:fid3}, and \ref{fig:fid3} we show examples of images
generated with DCGAN trained on CelebA with FIDs 500, 300, 133, 100, 45, 13, and FID 3 achieved with
WGAN-GP on CelebA.

\begin{figure}[H]
\includegraphics[width=0.49\textwidth, height=3.5cm]{figures/gnoise_FID}
\includegraphics[width=0.49\textwidth, height=3.5cm]{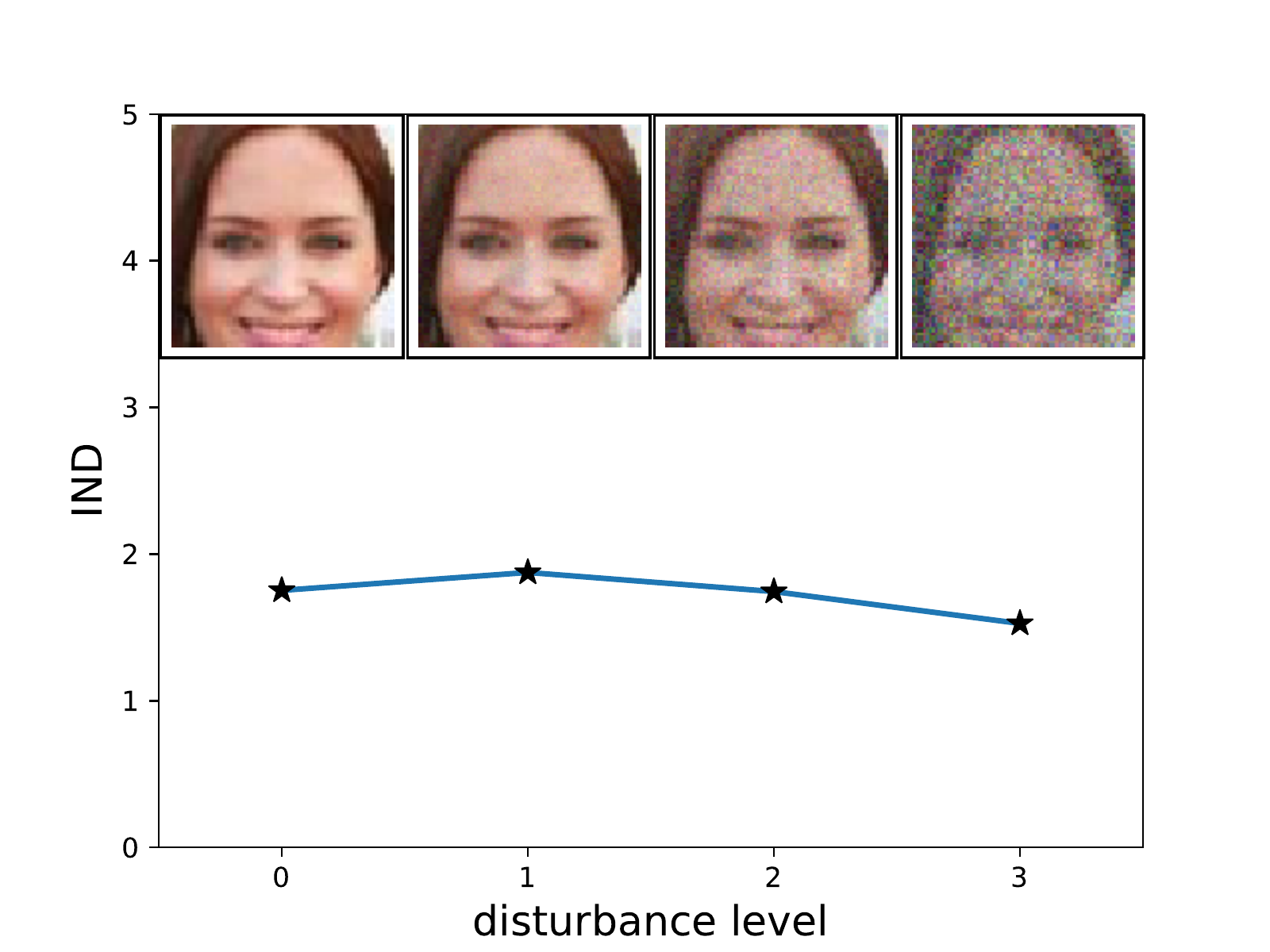}
\includegraphics[width=0.49\textwidth, height=3.5cm]{figures/blur_FID}
\includegraphics[width=0.49\textwidth, height=3.5cm]{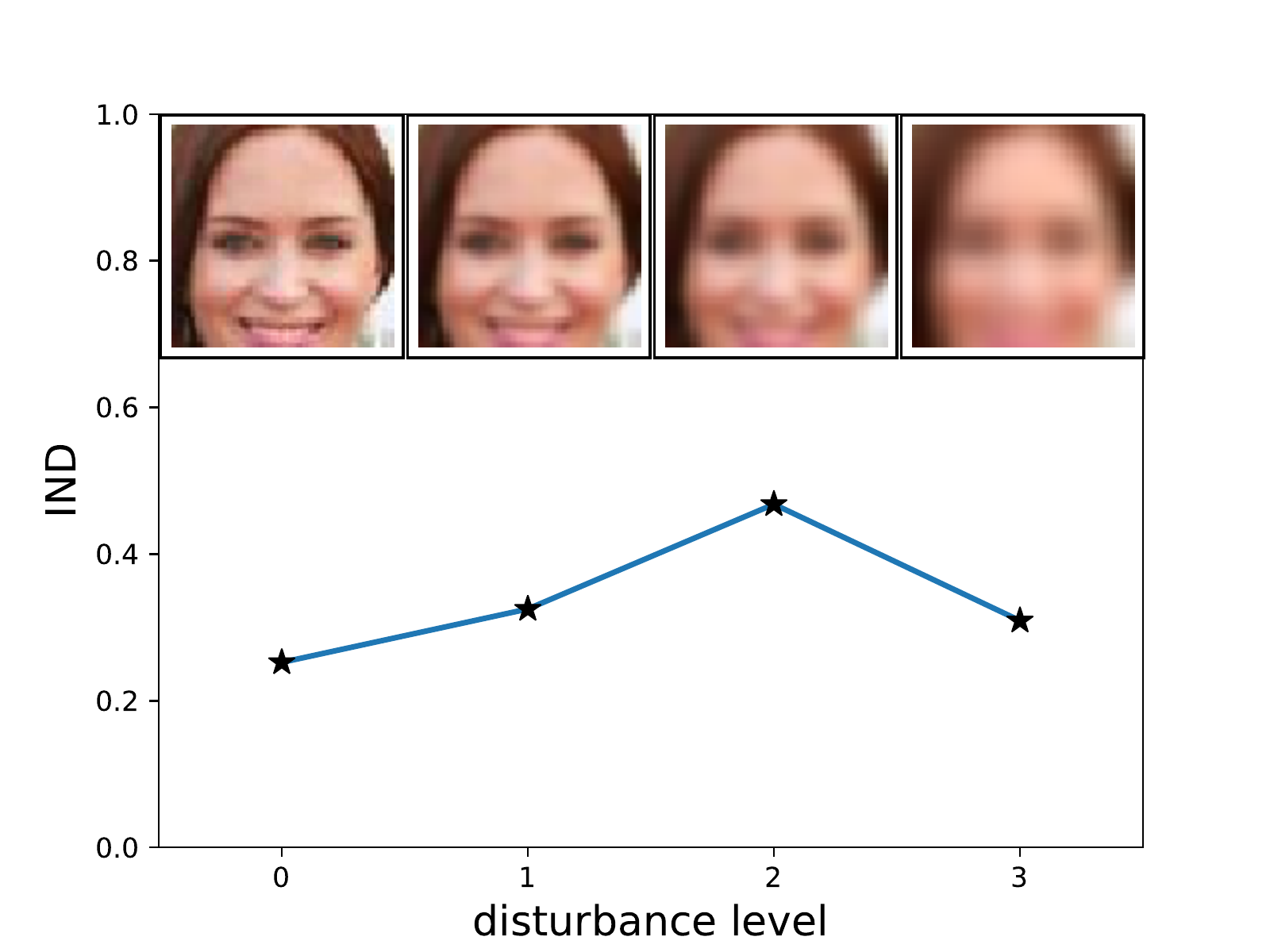}
\includegraphics[width=0.49\textwidth, height=3.5cm]{figures/rect_FID}
\includegraphics[width=0.49\textwidth, height=3.5cm]{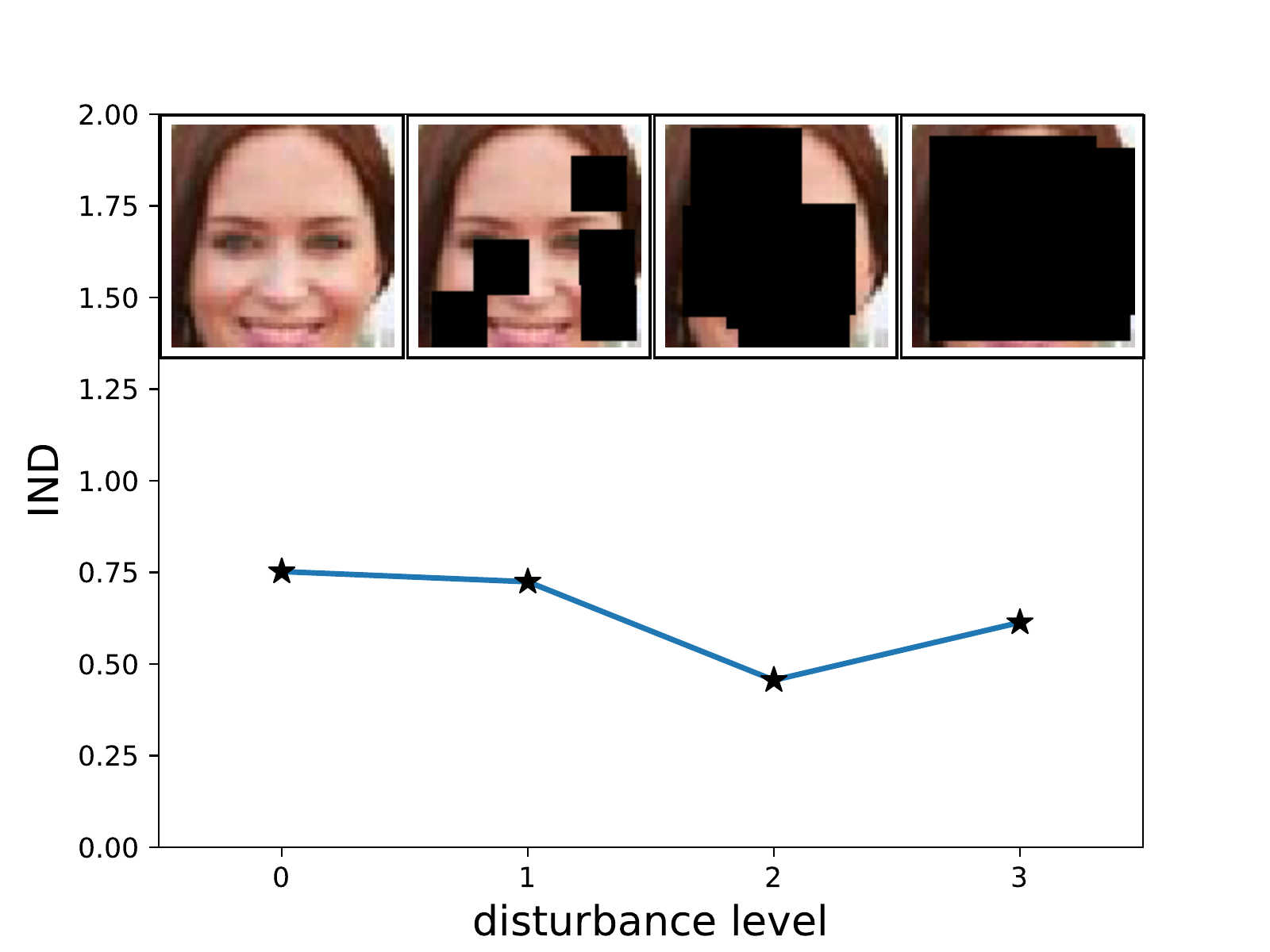}
\includegraphics[width=0.49\textwidth, height=3.5cm]{figures/swirl_FID}
\includegraphics[width=0.49\textwidth, height=3.5cm]{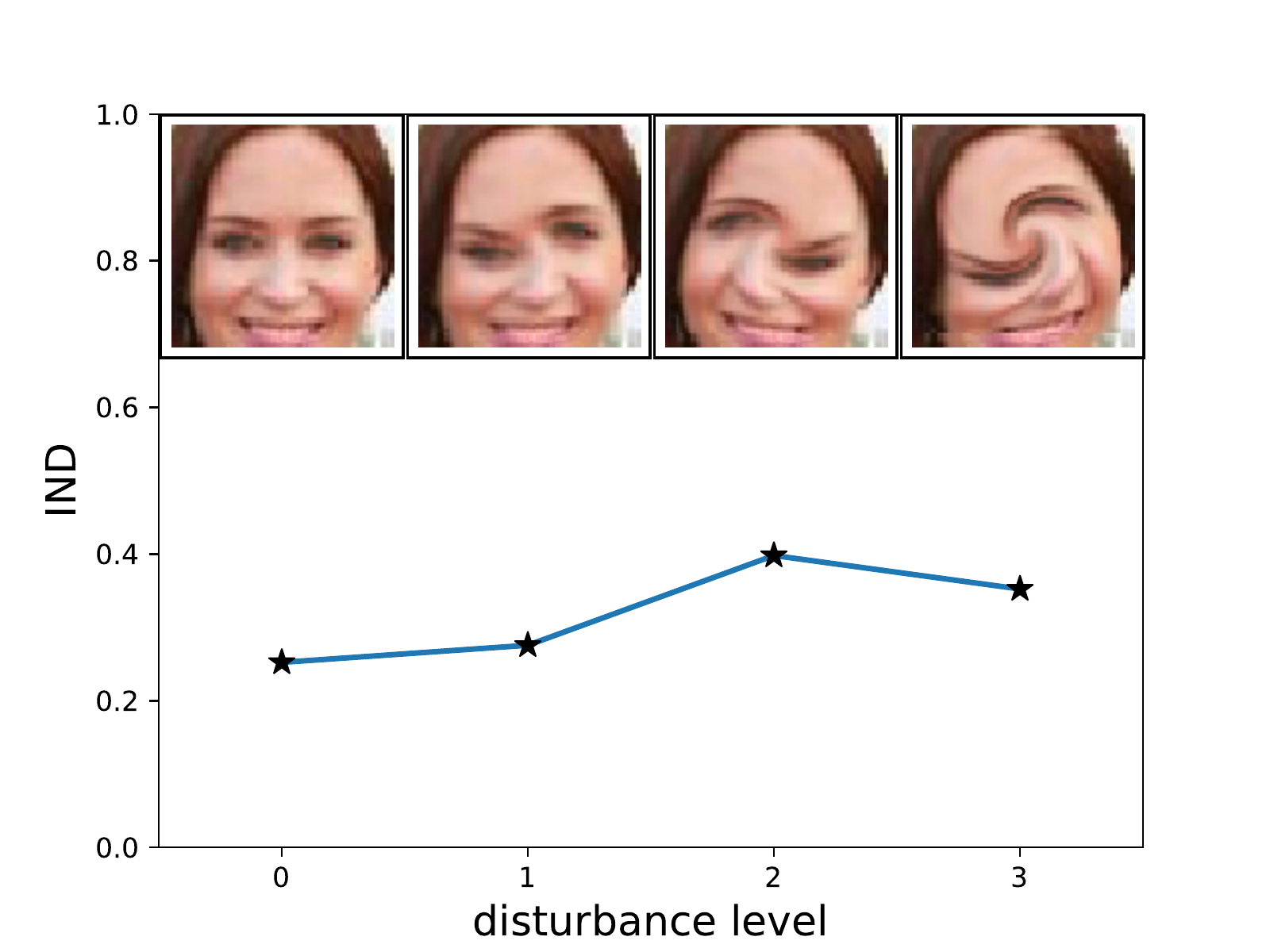}
\includegraphics[width=0.49\textwidth, height=3.5cm]{figures/sp_FID}
\includegraphics[width=0.49\textwidth, height=3.5cm]{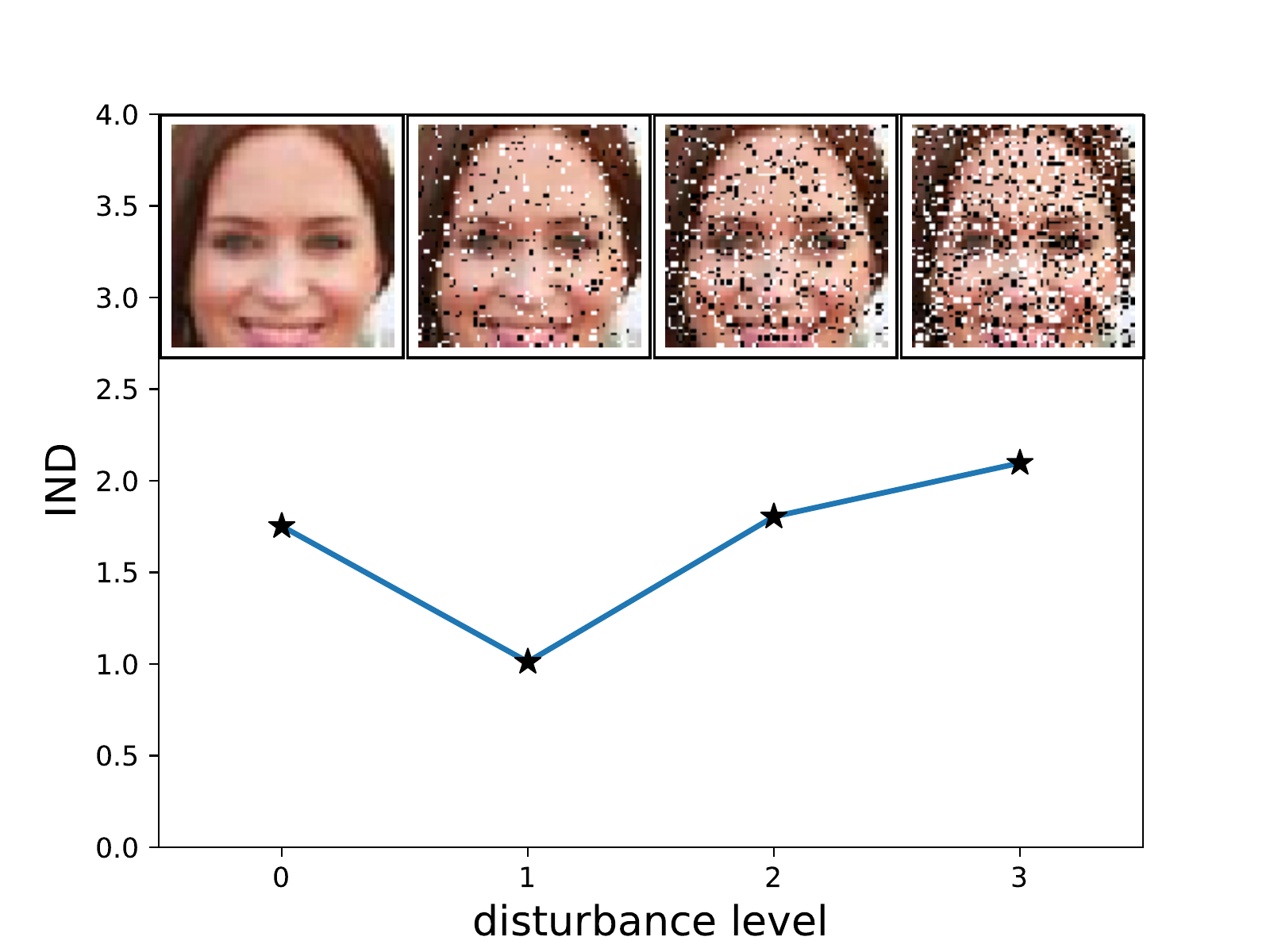}
\includegraphics[width=0.49\textwidth, height=3.5cm]{figures/mixed_FID}
\includegraphics[width=0.49\textwidth, height=3.5cm]{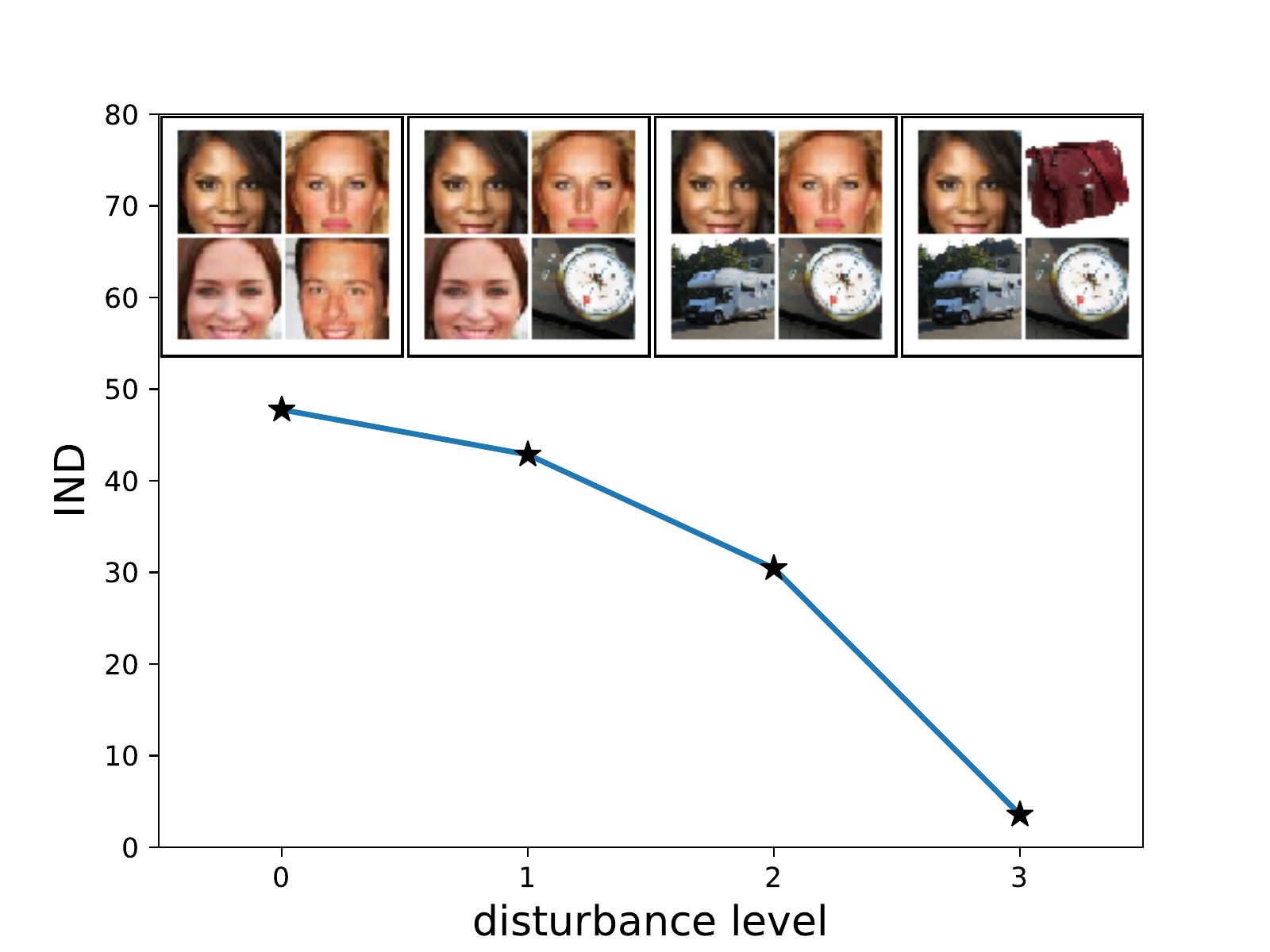}
\caption[FID and Inception Score Comparison]{{\bf Left:} FID and {\bf right:}
Inception Score are evaluated for {\bf first row:} Gaussian noise, {\bf second row:} Gaussian blur, {\bf third row:}
implanted black rectangles, {\bf fourth row:} swirled images, {\bf fifth row.}
salt and pepper noise, and {\bf sixth row:} the CelebA dataset contaminated by
ImageNet images.
Left is the smallest disturbance level of zero, which increases to the highest
level at right. The FID captures the disturbance level very well by
monotonically increasing whereas the Inception Score fluctuates, stays flat or
even, in the worst case, decreases.
  \label{fig:fidind} }
\end{figure}

\begin{figure}[H]
\includegraphics[width=0.49\textwidth]{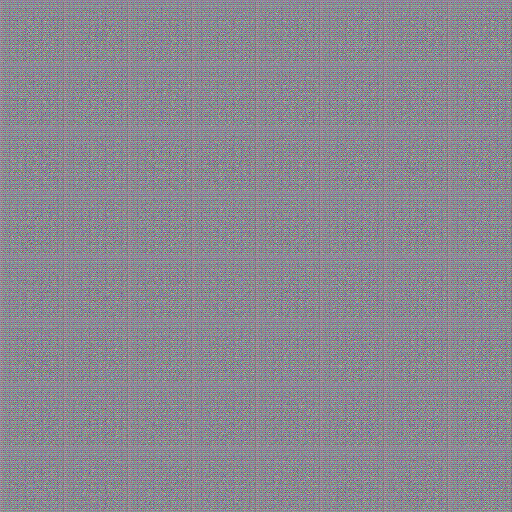}
\includegraphics[width=0.49\textwidth]{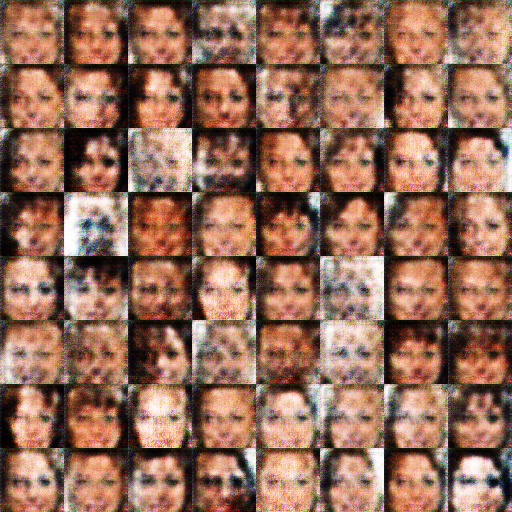}
\caption[CelebA Samples with FID 500 and 300]{Samples generated from DCGAN
trained on CelebA with different FIDs.
{\bf Left:} FID 500 and {\bf Right:} FID 300.
  \label{fig:fid1} }
\end{figure}

\begin{figure}[H]
\includegraphics[width=0.49\textwidth]{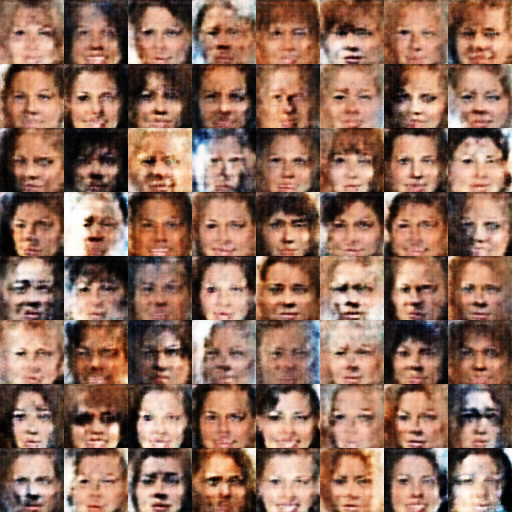}
\includegraphics[width=0.49\textwidth]{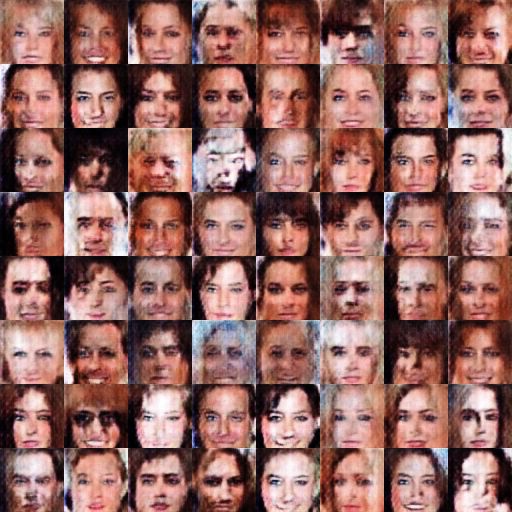}
\caption[CelebA Samples with FID 133 and 100]{Samples generated from DCGAN
trained on CelebA with different FIDs.
{\bf Left:} FID 133 and {\bf Right:} FID 100.
  \label{fig:fid2} }
\end{figure}

\begin{figure}[H]
\includegraphics[width=0.49\textwidth]{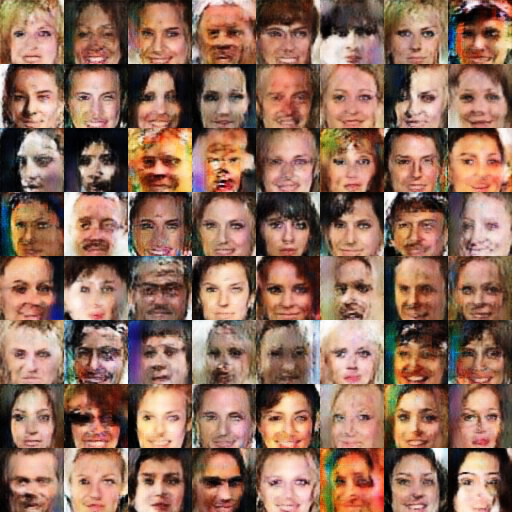}
\includegraphics[width=0.49\textwidth]{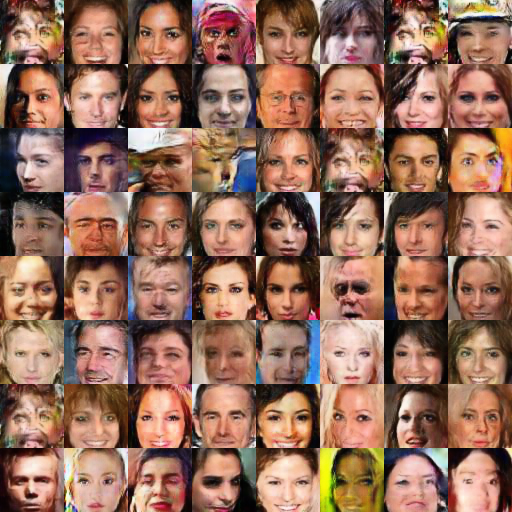}
\caption[CelebA Samples with FID 45 and 13]{Samples generated from DCGAN
trained on CelebA with different FIDs.
{\bf Left:} FID 45 and {\bf Right:} FID 13.
  \label{fig:fid3} }
\end{figure}

\begin{figure}[H]
\includegraphics[width=0.9\textwidth]{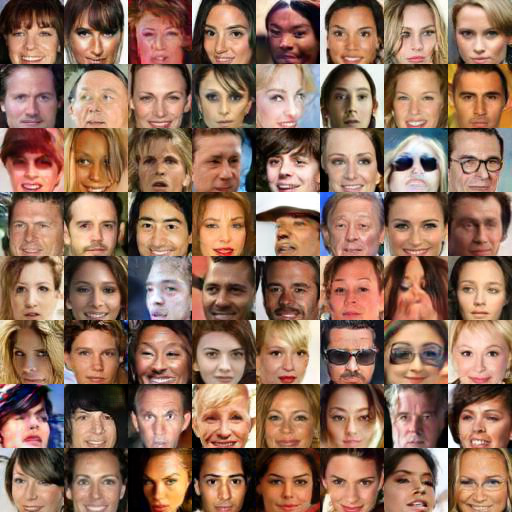}
\caption[CelebA Samples with FID 3]{Samples generated from WGAN-GP
trained on CelebA with a FID of 3.
  \label{fig:fid4} }
\end{figure}

\section{Two Time-Scale Stochastic Approximation Algorithms}
\label{sec:background}

Stochastic approximation algorithms are iterative procedures to find
a root or a stationary point (minimum, maximum, saddle point) of a
function when only noisy observations of its values or its
derivatives are provided.
Two time-scale stochastic approximation algorithms are two coupled
iterations with different step sizes. For proving convergence of these
interwoven iterates it is assumed that one
step size is considerably smaller than the other.
The slower iterate (the one with smaller step size) is assumed to be
slow enough to allow the fast iterate converge while being perturbed
by the the slower. The perturbations of the slow should be small
enough to ensure convergence of the faster.

The iterates map at time step $n\geq 0$ the fast variable $\Bw_n \in \dR^k$ and the slow
variable $\Bth_n \in \dR^m$ to their new values:

% ( \big( \Big( \bigg( \Bigg(
\begin{align}
\label{eq:iter1}
\Bth_{n+1} \ &= \ \Bth_n \ + \ a(n) \ \left(\Bh\big(\Bth_n, \Bw_n,
               \BZ^{(\theta)}_n \big) \ + \ \BM^{(\theta)}_{n}\right) \ ,\\
\label{eq:iter2}
\Bw_{n+1} \ &= \ \Bw_n  \ + \ b(n)\ \left(\Bg\big(\Bth_n, \Bw_n,
              \BZ^{(w)}_n\big) \ + \ \BM^{(w)}_{n}\right) \ .
\end{align}
The iterates use
\begin{itemize}
\item $\Bh(.)\in \dR^m$:
mapping for the slow iterate Eq.~\eqref{eq:iter1},
\item $\Bg(.)\in \dR^k$:
mapping for the fast iterate Eq.~\eqref{eq:iter2},
\item $a(n)$:
step size for the slow iterate Eq.~\eqref{eq:iter1},
\item $b(n)$:
step size for the fast iterate Eq.~\eqref{eq:iter2},
\item $\BM^{(\theta)}_n$:
additive random Markov process for the slow iterate
Eq.~\eqref{eq:iter1},
\item $\BM^{(w)}_n$:
additive random Markov process for the fast iterate
Eq.~\eqref{eq:iter2},
\item $\BZ^{(\theta)}_n$:
random Markov process for the slow iterate
Eq.~\eqref{eq:iter1},
\item $\BZ^{(w)}_n$:
random Markov process for the fast iterate
Eq.~\eqref{eq:iter2}.
\end{itemize}

\subsection{Convergence of Two Time-Scale Stochastic Approximation Algorithms}
\label{sec:convergence}

\subsubsection{Additive Noise}
The first result is from Borkar 1997 \cite{Borkar:97}
which was generalized in Konda and Borkar 1999 \cite{Konda:99}.
Borkar considered the iterates:
\begin{align}
\label{eq:iter1Borkar}
\Bth_{n+1} \ &= \ \Bth_n \ + \ a(n) \ \left(\Bh\big(\Bth_n, \Bw_n
               \big) \ + \ \BM^{(\theta)}_{n}\right) \ ,\\
\label{eq:iter2Borkar}
\Bw_{n+1} \ &= \ \Bw_n  \ + \ b(n)\ \left(\Bg\big(\Bth_n, \Bw_n\big) \
              + \ \BM^{(w)}_{n}\right) \ .
\end{align}

\paragraph{Assumptions.}
We make the following assumptions:
\begin{enumerate}[label=\textbf{(A\arabic*)}]
\item Assumptions on the update functions:
The functions $\Bh: \dR^{k+m} \mapsto \dR^{m}$ and  $\Bg: \dR^{k+m}
\mapsto \dR^{k}$ are Lipschitz.
\item
Assumptions on the learning rates:
\begin{align}
&\sum_{n} a(n) \ = \ \infty \quad , \quad
\sum_{n} a^2(n) \ < \ \infty \ , \\
&\sum_{n} b(n) \ = \ \infty \quad , \quad
\sum_{n} b^2(n) \ < \ \infty \ , \\
&a(n) \ = \ \Ro(b(n))\ ,
\end{align}

\item Assumptions on the noise:
For the increasing $\sigma$-field
\begin{align} \nonumber
\cF_n \ &= \
\sigma(\Bth_l, \Bw_l, \BM^{(\theta)}_{l}, \BM^{(w)}_{l},
l \leq n), n \geq 0 \ ,
\end{align}
the sequences of random variables
$(\BM^{(\theta)}_{n},\cF_n)$ and
$(\BM^{(w)}_{n},\cF_n)$ satisfy
\begin{align}
\sum_n a(n) \ \BM^{(\theta)}_{n} \ &< \ \infty \ \text{a.s.} \\
\sum_n b(n) \ \BM^{(w)}_{n} \ &< \ \infty \ \text{a.s.} \ .
\end{align}

\item Assumption on the existence of a solution of the fast iterate:
For each $\Bth \in \dR^m$, the ODE
\begin{align}
\dot{\Bw}(t) \ &= \ \Bg\big(\Bth, \Bw(t)\big) \
\end{align}
has a unique global asymptotically stable equilibrium
$\Bla(\Bth)$ such that $\Bla: \dR^m \mapsto \dR^k$ is Lipschitz.

\item Assumption on the existence of a solution of the slow iterate:
The ODE
\begin{align}
\dot{\Bth}(t) \ &= \ \Bh\big(\Bth(t), \Bla(\Bth(t))\big) \
\end{align}
has a unique global asymptotically stable
equilibrium $\Bth^{*}$.

\item Assumption of bounded iterates:
\begin{align}
\sup_n \| \Bth_n \| \ &< \ \infty \ , \\
\sup_n \| \Bw_n \| \ &< \ \infty \ .
\end{align}
\end{enumerate}

\paragraph{Convergence Theorem}

The next theorem is from Borkar 1997 \cite{Borkar:97}.
\begin{theorem}[Borkar]
\label{th:borkar}
If the assumptions are satisfied,
then the iterates Eq.~\eqref{eq:iter1Borkar} and Eq.~\eqref{eq:iter2Borkar}
converge to $(\Bth^{*}, \Bla(\Bth^{*}))$ a.s.
\end{theorem}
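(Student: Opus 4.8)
The plan is to use the ODE method together with the $(T,\delta)$-perturbed ODEs of Hirsch, exploiting the time-scale separation $a(n) = \Ro(b(n))$ in (A2). First I would rewrite both recursions on the fast clock $t_n = \sum_{k<n} b(k)$; in this clock the slow increments carry the extra factor $a(n)/b(n) \to 0$, so that $\Bth_n$ is asymptotically frozen from the viewpoint of the fast iterate. The goal of this first stage is to show $\Bw_n - \Bla(\Bth_n) \to \BZe$ a.s., after which the slow recursion collapses to a single time-scale scheme tracking $\dot{\Bth} = \Bh(\Bth, \Bla(\Bth))$, whose unique global attractor is $\Bth^*$ by (A5).

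For the first stage I would form the piecewise-linear interpolation $\bar{\Bw}(t)$ of the points $(t_n, \Bw_n)$ and compare it, on each window $[t, t+T]$, to the flow of the frozen ODE $\dot{\Bw} = \Bg(\Bth, \Bw)$ with $\Bth$ pinned at its current value. Using (A1) to Lipschitz-bound $\Bg$, (A3) in the form $\sum_n b(n)\BM^{(w)}_n < \infty$ so that the accumulated noise over any fixed window vanishes, (A6) so the iterates stay in a compact set, and a Gronwall estimate, I would show $\bar{\Bw}$ is a $(T,\delta)$-perturbation of the frozen flow with $\delta \to 0$. Since by (A4) each such flow has $\Bla(\Bth)$ as a unique globally asymptotically stable equilibrium, this forces $\Bw_n - \Bla(\Bth_n) \to \BZe$ a.s.

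For the second stage I would substitute $\Bw_n = \Bla(\Bth_n) + \Be_n$ with $\Be_n \to \BZe$ into the slow recursion,
\[
\Bth_{n+1} = \Bth_n + a(n)\bigl(\Bh(\Bth_n, \Bla(\Bth_n)) + (\Bh(\Bth_n, \Bw_n) - \Bh(\Bth_n, \Bla(\Bth_n))) + \BM^{(\theta)}_n\bigr),
\]
where the middle difference is $\Ro(1)$ by the Lipschitz property of $\Bh$. Treating this term together with the martingale noise (controlled via $\sum_n a(n)\BM^{(\theta)}_n < \infty$ from (A3)) as a vanishing perturbation, the interpolation of $\Bth_n$ on the slow clock $\sum_k a(k)$ is again a $(T,\delta)$-perturbed trajectory of $\dot{\Bth} = \Bh(\Bth, \Bla(\Bth))$; (A5) then yields $\Bth_n \to \Bth^*$ a.s., and continuity of $\Bla$ gives $\Bw_n \to \Bla(\Bth^*)$.

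The main obstacle will be the first stage: rigorously showing that the fast iterate stays close to the moving target $\Bla(\Bth_n)$ despite the slow drift. The delicate point is to make the quasi-static picture quantitative and uniform — over each fast window the slow parameter moves only by $\Ro(1)$, but one must show these perturbations do not accumulate, which needs the $(T,\delta)$ argument to be applied with attraction rates that are uniform over the compact set supplied by (A6). It is precisely the summability in (A3) (stronger than mere square-summability of the step sizes) that makes the noise harmless on every finite window and lets the two stages be decoupled cleanly.
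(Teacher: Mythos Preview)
Your proposal is correct and follows essentially the same approach the paper attributes to Borkar: the paper does not reproduce a proof but states that the main idea is to use $(T,\delta)$-perturbed ODEs in the sense of Hirsch 1989, relying on the fact that eventually the slow perturbation is small enough for the fast iterate to converge. Your two-stage decomposition (first $\Bw_n - \Bla(\Bth_n) \to \BZe$ via the frozen fast ODE on the $b$-clock, then $\Bth_n \to \Bth^*$ via the reduced slow ODE on the $a$-clock), together with the use of (A3) for noise control, (A6) for compactness, and Gronwall/Hirsch perturbation estimates, is exactly this scheme.
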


\paragraph{Comments}
\begin{enumerate}[label=\textbf{(C\arabic*)}]
\item
According to Lemma~2 in \cite{Bertsekas:00} Assumption (A3)
is fulfilled if
$\{\BM^{(\theta)}_n\}$ is a martingale difference sequence
w.r.t $\cF_n$ with
\begin{align} \nonumber
\rE \left[\|\BM^{(\theta)}_n \|^2 \mid \cF_n^{(\theta)} \right]
  \ &\leq \ B_1
\end{align} and
$\{\BM^{(w)}_n\}$ is a martingale difference sequence
w.r.t  $\cF_n$ with
\begin{align} \nonumber
\rE \left[\|\BM^{(w)}_n \|^2 \mid \cF_n^{(w)} \right]
  \ &\leq \ B_2 \ ,
\end{align}
where $B_1$ and $B_2$ are positive deterministic constants.
\item
Assumption (A3) holds for mini-batch learning which is the most
frequent case of stochastic gradient.
The batch gradient is
$\BG_{n}:=\nabla_{\theta} (\frac{1}{N} \sum_{i=1}^{N}
f(\Bx_i,\theta)), 1 \leq i \leq N$ and
the mini-batch gradient for batch size $s$ is
$\Bh_{n}:=\nabla_{\theta} (\frac{1}{s} \sum_{i=1}^{s}
f(\Bx_{u_i},\theta)), 1 \leq u_i \leq N$, where the indexes $u_i$
are randomly and uniformly chosen.
For the noise $\BM^{(\theta)}_{n}:=\Bh_{n}-\BG_{n}$ we have
$\rE[\BM^{(\theta)}_{n}]=\rE[\Bh_{n}]-\BG_{n}=\BG_{n}-\BG_{n}=0$.
Since the indexes are chosen without knowing past events,
we have a martingale difference sequence.
For bounded gradients we have bounded $\|\BM^{(\theta)}_n \|^2$.

\item We address assumption (A4)
with weight decay in two ways: (I) Weight decay avoids problems with a
discriminator that is region-wise constant and, therefore, does not have a
locally stable generator. If the generator is perfect, then the discriminator is
0.5 everywhere. For generator with mode collapse, (i) the discriminator is 1 in
regions without generator examples, (ii) 0 in regions with generator examples
only, (iii) is equal to the local ratio of real world examples for regions with
generator and real world examples. Since the discriminator is locally constant,
the generator has gradient zero and cannot improve. Also the discriminator
cannot improve, since it has minimal error given the current generator. However,
without weight decay the Nash Equilibrium is not stable since the second order
derivatives are zero, too. (II) Weight decay avoids that the generator is driven
to infinity with unbounded weights. For example a linear discriminator can supply a gradient for the
generator outside each bounded region.

\item
The main result used in the proof of the theorem relies on work on
perturbations of ODEs according to Hirsch 1989 \cite{Hirsch:89}.

\item
Konda and Borkar 1999 \cite{Konda:99} generalized the convergence
proof to distributed asynchronous update rules.

\item
Tadi\'{c} relaxed the assumptions for showing convergence \cite{Tadic:04a}.
In particular the noise assumptions (Assumptions A2 in
\cite{Tadic:04a}) do not have to be martingale
difference sequences and are more general than in
\cite{Borkar:97}. In another result the assumption of bounded iterates
is not necessary if other assumptions are ensured \cite{Tadic:04a}.
Finally, Tadi\'{c} considers the case of non-additive noise \cite{Tadic:04a}.
{\bf Tadi\'{c} does not provide proofs for his results.}
We were not able to find such proofs even in other publications of Tadi\'{c}.

\end{enumerate}

\subsubsection{Linear Update, Additive Noise, and Markov Chain}
\label{sec:linupnoisemc}

In contrast to the previous subsection, we assume that an additional Markov
chain influences the iterates \cite{Konda:02,Konda:03}.
The Markov chain allows applications in reinforcement learning, in
particular in actor-critic setting where the Markov chain is used to
model the environment. The slow iterate is the actor
update while the fast iterate is the critic update.
For reinforcement learning both the actor and the critic observe the
environment which is driven by the actor actions. The environment
observations are assumed to be a Markov chain. The Markov chain can
include eligibility traces which are modeled as explicit states in
order to keep the Markov assumption.

The Markov chain is the sequence of observations of the environment
which progresses via transition probabilities.
The transitions are not affected by the critic but
by the actor.

Konda et al. considered the iterates \cite{Konda:02,Konda:03}:
\begin{align}
\label{eq:iter1Konda}
\Bth_{n+1} \ &= \ \Bth_n \ + \ a(n) \ \BH_n \ ,\\
\label{eq:iter2Konda}
\Bw_{n+1} \ &= \ \Bw_n  \ + \ b(n)\ \left(
\Bg\big(\BZ^{(w)}_n;\Bth_n\big) \ + \ \BG\big( \BZ^{(w)}_n;\Bth_n\big) \ \ \Bw_n
              + \ \BM^{(w)}_{n} \ \Bw_n \right) \ .
\end{align}
$\BH_n$ is a random process that drives the changes of
$\Bth_n$. We assume that $\BH_n$ is a slow enough process.
We have a linear update rule for the fast iterate using
the vector function $\Bg(.)\in \dR^k$ and
the matrix function $\BG(.)\in \dR^{k\times k}$.

\paragraph{Assumptions.}
We make the following assumptions:
\begin{enumerate}[label=\textbf{(A\arabic*)}]
\item Assumptions on the Markov process, that is, the transition kernel:
The stochastic process $\BZ^{(w)}_n$ takes values
in a Polish (complete, separable, metric) space $\dZ$
with the Borel $\sigma$-field
\begin{align} \nonumber
\cF_n \ &= \
\sigma(\Bth_l, \Bw_l, \BZ^{(w)}_l,\BH_l,
l \leq n), n \geq 0 \ .
\end{align}
For every measurable set $A \subset \dZ$ and the parametrized transition kernel
$\rP(.;\Bth_n)$ we have:
\begin{align}
\rP(\BZ^{(w)}_{n+1} \in A \mid \cF_n) \ &= \
\rP(\BZ^{(w)}_{n+1} \in A \mid  \BZ^{(w)}_n;\Bth_n ) \ = \
\rP(\BZ^{(w)}_n ,A ; \Bth_n ) \ .
\end{align}
We define for every measurable function $f$
\begin{align} \nonumber
\rP_{\Bth}f(\Bz) \ &:= \ \int \rP(\Bz ,\Rd \bar{\Bz} ; \Bth_n )
\ f( \bar{\Bz}) \ .
\end{align}

\item
Assumptions on the learning rates:
\begin{align}
&\sum_{n} b(n) \ = \ \infty \quad , \quad
\sum_{n} b^2(n) \ < \ \infty \ , \\
&\sum_{n} \left(\frac{a(n)}{b(n)}\right)^d \ < \ \infty \ ,
\end{align}
for some $d>0$.

\item Assumptions on the noise:
The sequence $\BM^{(w)}_{n}$ is a $k \times k$-matrix valued
$\cF_n$-martingale difference with bounded moments:
\begin{align}
\rE \left[\BM^{(w)}_{n} \mid  \cF_n\right] \ &= \ 0 \ , \\
\sup_n \rE \left[\left\|\BM^{(w)}_{n} \right\|^d \right] \ &< \ \infty
  \ , \ \forall d > 0 \ .
\end{align}

We assume slowly changing $\Bth$, therefore the random process $\BH_n$
satisfies
\begin{align}
\sup_n \rE \left[\left\|\BH_{n} \right\|^d \right] \ &< \ \infty
  \ , \ \forall d > 0 \ .
\end{align}

\item Assumption on the existence of a solution of the fast iterate:
We assume the existence of a solution to the Poisson equation for
the fast iterate.
For each $\Bth \in \dR^m$, there exist functions
$\bar{\Bg}(\Bth) \in \dR^k$, $\bar{\BG}(\Bth) \in \dR^{k \times k}$,
$\hat{\Bg}(\Bz;\Bth): \ \dZ \to \dR^k$,
and $\hat{\BG}(\Bz;\Bth): \  \dZ \to \dR^{k \times k}$ that satisfy
the Poisson equations:
\begin{align}
\hat{\Bg}(\Bz;\Bth) \ &= \ \Bg(\Bz;\Bth) \ - \ \bar{\Bg}(\Bth) \ + \
(\rP_{\Bth} \hat{\Bg}(.;\Bth))(\Bz) \ , \\
\hat{\BG}(\Bz;\Bth) \ &= \ \BG(\Bz;\Bth) \ - \ \bar{\BG}(\Bth) \ + \
(\rP_{\Bth} \hat{\BG}(.;\Bth))(\Bz) \ .
\end{align}

\item Assumptions on the update functions and solutions to the Poisson
equation:

\begin{enumerate}
\item Boundedness of solutions: For some constant $C$ and for all $\Bth$:
\begin{align}
\max\{ \| \bar{g}(\Bth) \| \} \ &\leq \ C \ , \\
\max\{ \| \bar{G}(\Bth) \| \} \ &\leq \ C \ .
\end{align}
\item Boundedness in expectation: All moments are bounded. For any
  $d>0$, there exists $C_d>0$ such that
\begin{align}
\sup_n \rE \left[\left\| \hat{\Bg}(\BZ^{(w)}_{n};\Bth) \right\|^d \right] \ &\leq \ C_d \ , \\
\sup_n \rE \left[\left\| \Bg(\BZ^{(w)}_{n};\Bth)\right\|^d \right] \ &\leq \ C_d \ , \\
\sup_n \rE \left[\left\| \hat{\BG}(\BZ^{(w)}_{n};\Bth) \right\|^d \right] \ &\leq \ C_d \ , \\
\sup_n \rE \left[\left\| \BG(\BZ^{(w)}_{n};\Bth)\right\|^d \right] \ &\leq \ C_d \ .
\end{align}
\item Lipschitz continuity of solutions:
For some constant $C>0$ and for all $\Bth$,$\bar{\Bth} \in \dR^m$:
\begin{align}
\left\| \bar{\Bg}(\Bth) \ - \
       \bar{\Bg}(\bar{\Bth}) \right\|
\ &\leq \ C \ \| \Bth - \bar{\Bth} \| \ ,\\
\left\| \bar{\BG}(\Bth) \ - \
       \bar{\BG}(\bar{\Bth}) \right\|
\ &\leq \ C \ \| \Bth - \bar{\Bth} \| \ .
\end{align}

\item Lipschitz continuity in expectation:
There exists a positive measurable function $C(.)$ on $\dZ$ such that
\begin{align}
\sup_n \rE \left[C(\BZ^{(w)}_{n} )^d \right] \ &< \ \infty
  \ , \ \forall d > 0 \ .
\end{align}
Function $C(.)$ gives the Lipschitz constant for every $\Bz$:
\begin{align}
\left\| (\rP_{\Bth} \hat{\Bg}(.;\Bth))(\Bz) \ - \
       (\rP_{\bar{\Bth}} \hat{\Bg}(.;\bar{\Bth}))(\Bz) \right\|
\ &\leq \ C(\Bz) \ \| \Bth - \bar{\Bth} \| \ ,\\
\left\| (\rP_{\Bth} \hat{\BG}(.;\Bth))(\Bz) \ - \
       (\rP_{\bar{\Bth}} \hat{\BG}(.;\bar{\Bth}))(\Bz) \right\|
\ &\leq \ C(\Bz) \ \| \Bth - \bar{\Bth} \| \ .
\end{align}

\item Uniform positive definiteness:
There exists some $\alpha>0$ such that for all $\Bw \in \dR^k$ and $\Bth
\in \dR^m$:
\begin{align}
\Bw^{T} \ \bar{G}(\Bth) \ \Bw \ &\geq \ \alpha \ \| \Bw \|^2 \ .
\end{align}
\end{enumerate}

\end{enumerate}

\paragraph{Convergence Theorem.}

We report Theorem~3.2 (see also  Theorem~7 in \cite{Konda:03})
and Theorem 3.13 from \cite{Konda:02}:

\begin{theorem}[Konda \& Tsitsiklis]
\label{th:konda}
If the assumptions are satisfied,
then for the iterates Eq.~\eqref{eq:iter1Konda} and Eq.~\eqref{eq:iter2Konda}
holds:
\begin{align}
&\lim_{n \to \infty} \left\| \bar{G}(\Bth_n) \ \Bw_n \ - \
  \bar{g}(\Bth_n) \right\| \ = \ 0 \ \ \ \text{a.s.} \ , \\
&\lim_{n \to \infty} \left\| \Bw_n \ - \ \bar{G}^{-1}(\Bth_n) \
  \bar{g}(\Bth_n) \right\| \ = \ 0 \ .
\end{align}
\end{theorem}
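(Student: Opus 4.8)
The plan is to treat this as a two time-scale stochastic approximation in which the fast iterate $\Bw_n$ is shown to asymptotically solve the linear system $\bar{G}(\Bth)\Bw = \bar{g}(\Bth)$ with the slow parameter $\Bth$ held frozen at its current value $\Bth_n$. The learning-rate condition $\sum_n (a(n)/b(n))^d < \infty$ together with the moment bound $\sup_n \rE[\|\BH_n\|^d] < \infty$ from (A3) makes the slow iterate quasi-static on the natural time scale $t(n)=\sum_{k\le n} b(k)$ of the fast recursion: over any window on which $\Bw$ performs an $O(1)$ amount of tracking, $\Bth_n$ moves only by $o(1)$. First I would define the target $\Bw^\star_n = \bar{G}^{-1}(\Bth_n)\bar{g}(\Bth_n)$, which is well defined and Lipschitz in $\Bth_n$ by the uniform positive-definiteness bound $\Bw^{T}\bar{G}(\Bth)\Bw \ge \alpha\|\Bw\|^2$ of (A5) and the Lipschitz assumptions on $\bar{g},\bar{G}$. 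Studying the tracking error $\Be_n = \Bw_n - \Bw^\star_n$, the goal is to prove $\Be_n \to \BZe$ almost surely, which yields both displayed limits simultaneously. A preliminary step establishes $\sup_n\|\Bw_n\| < \infty$ a.s., the stability margin $\alpha$ dominating the noise contributions.

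The second step averages out the Markov fluctuations driven by $\BZ^{(w)}_n$. I would substitute the Poisson-equation identities of (A4), writing $\Bg(\BZ^{(w)}_n;\Bth_n) - \bar{g}(\Bth_n) = \hat{\Bg}(\BZ^{(w)}_n;\Bth_n) - (\rP_{\Bth_n}\hat{\Bg}(.;\Bth_n))(\BZ^{(w)}_n)$, and analogously for $\BG$. Each right-hand side splits into a martingale-difference term $\hat{\Bg}(\BZ^{(w)}_{n+1};\Bth_n) - (\rP_{\Bth_n}\hat{\Bg}(.;\Bth_n))(\BZ^{(w)}_n)$ and a telescoping residual $\hat{\Bg}(\BZ^{(w)}_n;\Bth_n) - \hat{\Bg}(\BZ^{(w)}_{n+1};\Bth_n)$. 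Summing the residual against $b(n)$ by Abel summation and invoking the moment bounds and Lipschitz-in-expectation control $\sup_n\rE[C(\BZ^{(w)}_n)^d]<\infty$ of (A5) shows its contribution is summable; the martingale part, together with the multiplicative noise $\BM^{(w)}_n\Bw_n$ (a martingale difference since $\Bw_n$ is $\cF_n$-measurable, $\rE[\BM^{(w)}_n\mid\cF_n]=0$, and $\Bw_n$ is bounded), is controlled by $\sum_n b^2(n) < \infty$ and the bounded moments. After this averaging the fast recursion reads $\Bw_{n+1} = \Bw_n + b(n)\big(\bar{g}(\Bth_n) - \bar{G}(\Bth_n)\Bw_n + \Bxi_n\big)$ with $\Bxi_n$ an asymptotically negligible perturbation.

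The third step is the Lyapunov/ODE analysis of the averaged linear system $\dot{\Bw} = \bar{g}(\Bth) - \bar{G}(\Bth)\Bw$, whose equilibrium $\bar{G}^{-1}(\Bth)\bar{g}(\Bth)$ matches the claimed limit and which is globally exponentially stable because $V(\Be)=\tfrac12\|\Be\|^2$ obeys $\dot V = -\Be^{T}\bar{G}(\Bth)\Be \le -\alpha\|\Be\|^2$. Translating to the discrete recursion, I would write $\Be_{n+1} = \big(\BI - b(n)\bar{G}(\Bth_n)\big)\Be_n + b(n)\Bxi_n + (\Bw^\star_n - \Bw^\star_{n+1})$, bound $\|\BI - b(n)\bar{G}(\Bth_n)\| \le 1 - \alpha b(n) + O(b^2(n))$, and note that the drift of the target is $\|\Bw^\star_{n+1}-\Bw^\star_n\| = O(a(n)) = o(b(n))$ by the Lipschitz property of $\Bw^\star$ in $\Bth$ and the slow update $\Bth_{n+1}-\Bth_n = a(n)\BH_n$. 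A discrete Gronwall estimate then forces $\|\Be_n\|$ to contract geometrically on each window modulo error terms that vanish, yielding $\Be_n \to \BZe$ a.s.

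I expect the principal obstacle to be the simultaneous control of the Markov noise and the moving target. The Poisson-equation bookkeeping in the second step is delicate because the telescoping residual naturally evaluates $\hat{\Bg}(\BZ^{(w)}_{n+1};\Bth_{n+1})$ at the \emph{updated} parameter, so one must trade the increment $\Bth_{n+1}-\Bth_n = a(n)\BH_n$ against the Lipschitz-in-expectation constant $C(\BZ^{(w)}_n)$, whose moments (A5) bounds only for the correct ratio $a(n)/b(n)$. Keeping these estimates compatible with the geometric contraction of the third step — and verifying that every error term is genuinely $o(1)$ rather than merely bounded — is precisely where the separation condition $\sum_n (a(n)/b(n))^d < \infty$ does its essential work. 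Upgrading the conclusion from an $L^2$ statement to the almost-sure statement requires combining the martingale convergence theorem with these deterministic window estimates.
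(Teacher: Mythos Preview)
The paper does not actually prove this theorem: it merely cites it, stating ``We report Theorem~3.2 (see also Theorem~7 in \cite{Konda:03}) and Theorem~3.13 from \cite{Konda:02}'' and then records the statement without argument. There is thus no proof in the paper to compare your proposal against.

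That said, your proposal is a faithful outline of the actual Konda--Tsitsiklis argument from the cited references. The three-step structure (quasi-static slow variable, Poisson-equation decomposition of the Markov noise into martingale plus telescoping residual, Lyapunov contraction of the averaged linear system via uniform positive definiteness of $\bar{G}$) is precisely how the proof proceeds in Konda's thesis. Your identification of the key technical point --- that the telescoping residual from the Poisson decomposition picks up the parameter increment $\Bth_{n+1}-\Bth_n = a(n)\BH_n$, and that controlling this against $C(\BZ^{(w)}_n)$ is exactly where the time-scale separation $\sum_n (a(n)/b(n))^d < \infty$ is consumed --- is correct and is indeed the heart of the argument. One minor remark: the a.s.\ boundedness $\sup_n\|\Bw_n\|<\infty$ that you list as a preliminary step is itself nontrivial in the original proof and uses the same positive-definiteness and moment machinery; it is not a free consequence but requires its own Lyapunov-type argument before the tracking analysis can begin.
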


\paragraph{Comments.}

\begin{enumerate}[label=\textbf{(C\arabic*)}]
\item
The proofs only use the boundedness of the moments of $\BH_n$ \cite{Konda:02,Konda:03},
therefore $\BH_n$ may depend on $\Bw_n$.
In his PhD thesis \cite{Konda:02}, Vijaymohan Konda used this framework for
the actor-critic learning, where $\BH_n$ drives the updates of the
actor parameters $\Bth_n$.
However, the actor updates are based on the current parameters $\Bw_n$ of
the critic.
\item
The random process $\BZ^{(w)}_{n}$ can affect $\BH_n$ as long as
boundedness is ensured.
\item Nonlinear update rule.
$\Bg\big(\BZ^{(w)}_n; \Bth_n \big) \ + \ \BG\big(
\BZ^{(w)}_n ; \Bth_n\big)\Bw_n$ can be viewed as a linear approximation of a
nonlinear update rule. The nonlinear case has been considered in
\cite{Konda:02} where additional approximation errors due to
linearization were addressed. These errors are
treated in the given framework \cite{Konda:02}.
\end{enumerate}

\subsubsection{Additive Noise and Controlled Markov Processes}
\label{sec:addnoisecmp}
The most general iterates use nonlinear update functions $\Bg$ and
$\Bh$, have additive noise,
and have controlled Markov processes \cite{Karmakar:17}.

\begin{align}
\label{eq:iter1Karmakar}
\Bth_{n+1} \ &= \ \Bth_n \ + \ a(n) \ \left(\Bh\big(\Bth_n, \Bw_n,
               \BZ^{(\theta)}_n \big) \ + \ \BM^{(\theta)}_{n}\right) \ ,\\
\label{eq:iter2Karmakar}
\Bw_{n+1} \ &= \ \Bw_n  \ + \ b(n)\ \left(\Bg\big(\Bth_n, \Bw_n,
              \BZ^{(w)}_n\big) \ + \ \BM^{(w)}_{n}\right) \ .
\end{align}

\paragraph{Required Definitions.}

\textit{Marchaud Map}: A set-valued map $\Bh: \dR^l \to \{\text{subsets of}\ \dR^k$\}
is called a \textit{Marchaud map} if it satisfies
the following properties:
\begin{itemize}
 \item[(i)] For each $\Bth \in \dR^l$, $\Bh(\Bth)$ is convex and compact.
 \item[(ii)] \textit{(point-wise boundedness)} For each $\Bth \in \dR^l$,
 $\underset{\Bw \in \Bh(\Bth)}{\sup} \
\lVert \Bw \rVert < K \left( 1 + \lVert \Bth \rVert \right)$
for some $K > 0$.
 \item[(iii)] $\Bh$ is an \textit{upper-semicontinuous} map. \\ \indent
 We say that $\Bh$ is upper-semicontinuous,
  if given sequences $\{ \Bth_{n} \}_{n \ge 1}$ (in $\dR^l$) and
  $\{ \By_{n} \}_{n \ge 1}$ (in $\dR^k$)  with
  $\Bth_{n} \to \Bth$, $\By_{n} \to \By$ and
  $\By_{n} \in \Bh(\Bth_{n}), n \ge 1, \By \in \Bh(\Bth)$.
   In other words, the graph of
   $\Bh, \ \left\{ (\Bx, \By) \ : \ \By \in \Bh(\Bx),\ \Bx\in \dR^l \right\}$,
  is closed in $\dR^l \times \dR^k$.
\end{itemize}

If the set-valued map $H: \dR^m \to \{\text{subsets of}\ \dR^m \}$
is  Marchaud, then
the differential inclusion (DI) given by
\begin{align}
\label{eq:di}
&\dot{\Bth}(t) \ \in \ H(\Bth(t))
\end{align}
is guaranteed to have at least one solution that is absolutely continuous.
If $\BTh$ is an absolutely continuous map satisfying  Eq.~\eqref{eq:di} then we say that $\BTh \in \BSi$.

\textit{Invariant Set}:
$M \subseteq \dR^m$ is \textit{invariant} if for every $\Bth \in M$ there exists
a trajectory, $\BTh$, entirely in $M$
with $\BTh(0) = \Bth$.
In other words, $\BTh \in \BSi$ with $\BTh(t) \in M$,
for all $t \ge 0$.
\\ \indent
\textit{Internally Chain Transitive Set}:
$M \subset \dR^m$ is said to be
internally chain transitive if $M$ is compact and for every $\Bth, \By \in M$,
$\epsilon >0$ and $T > 0$ we have the following: There exist $\Phi^{1}, \ldots, \Phi^{n}$ that
are $n$ solutions to the differential inclusion $\dot{\Bth}(t) \in h(\Bth(t))$,
a sequence $\Bth_1(=\Bth), \ldots, \Bth_{n+1} (=\By) \subset M$
and $n$ real numbers
$t_{1}, t_{2}, \ldots, t_{n}$ greater than $T$ such that:
$\Phi^i_{t_{i}}(\Bth_i) \in N^\epsilon(\Bth_{i+1})$
where $N^\epsilon(\Bth)$ is the open $\epsilon$-neighborhood of $\Bth$ and
$\Phi^{i}_{[0, t_{i}]}(\Bth_i) \subset M$
for $1 \le i \le n$. The sequence $(\Bth_{1}(=\Bth), \ldots, \Bth_{n+1}(=\By))$
is called an $(\epsilon, T)$ chain in $M$ from $\Bth$ to $\By$.

\paragraph{Assumptions.}

We make the following assumptions \cite{Karmakar:17}:
\begin{enumerate}[label=\textbf{(A\arabic*)}]
\item Assumptions on the controlled Markov processes:
The controlled Markov process
$\{\BZ^{(w)}_n\}$ takes values in a compact metric space $S^{(w)}$.
The controlled Markov process
$\{\BZ^{(\theta)}_n\}$ takes values in a compact metric space $S^{(\theta)}$.
Both processes are controlled by the iterate sequences $\{\Bth_n\}$
and  $\{\Bw_n\}$. Furthermore $\{\BZ^{(w)}_n\}$ is additionally
controlled by a random process $\{\BA^{(w)}_n\}$ taking values in a
compact metric space $U^{(w)}$ and
$\{\BZ^{(\theta)}_n\}$ is additionally
controlled by a random process $\{\BA^{(\theta)}_n\}$
taking values in a
compact metric space $U^{(\theta)}$.
The $\{\BZ^{(\theta)}_n\}$ dynamics is
\begin{align}
\rP(\BZ^{(\theta)}_{n+1} \in B^{(\theta)} |\BZ^{(\theta)}_l,
\BA^{(\theta)}_l, \Bth_l, \Bw_l, l\leq n) \ &= \
\int_{B^{(\theta)}} p^{(\theta)}(\Rd z| \BZ^{(\theta)}_n,
\BA^{(\theta)}_n, \Bth_n, \Bw_n), n\geq 0 \ ,
\end{align}
for $B^{(\theta)}$ Borel in $S^{(\theta)}$.
The $\{\BZ^{(w)}_n\}$ dynamics is
\begin{align}
\rP(\BZ^{(w)}_{n+1} \in B^{(w)} |\BZ^{(w)}_l,
\BA^{(w)}_l, \Bth_l, \Bw_l, l\leq n) \ &= \
\int_{B^{(w)}} p^{(w)}(\Rd z| \BZ^{(w)}_n,
\BA^{(w)}_n, \Bth_n, \Bw_n), n\geq 0 \ ,
\end{align}
for $B^{(w)}$ Borel in $S^{(w)}$.

\item Assumptions on the update functions:
$\Bh :  \dR^{m+k} \times S^{(\theta)} \to \dR^m$ is
jointly continuous as well as Lipschitz in
its first two arguments uniformly w.r.t.\ the third.
The latter condition means that
\begin{align}
&\forall \Bz^{(\theta)} \in S^{(\theta)}: \
 \|\Bh(\Bth, \Bw, \Bz^{(\theta)}) \ - \ \Bh(\Bth', \Bw',
  \Bz^{(\theta)})\| \ \leq \ L^{(\theta)} \ (\|\Bth-\Bth'\| + \|\Bw -
  \Bw'\|) \ .
\end{align}
Note that the Lipschitz constant $L^{(\theta)}$ does not depend on
$\Bz^{(\theta)}$.

$\Bg :  \dR^{k+m} \times S^{(w)} \to \dR^k$ is
jointly continuous as well as Lipschitz in
its first two arguments uniformly w.r.t.\ the third.
The latter condition means that
\begin{align}
&\forall \Bz^{(w)} \in S^{(w)}: \
 \|\Bg(\Bth, \Bw, \Bz^{(w)}) \ - \ \Bg(\Bth', \Bw',
  \Bz^{(w)})\| \ \leq \ L^{(w)} \ (\|\Bth-\Bth'\| + \|\Bw -
  \Bw'\|) \ .
\end{align}
Note that the Lipschitz constant $L^{(w)}$ does not depend on
$\Bz^{(w)}$.

\item Assumptions on the additive noise:
$\{\BM^{(\theta)}_{n}\}$ and $\{\BM^{(w)}_{n}\}$
are martingale difference sequence with second
moments bounded by $K(1+\|\Bth_n\|^2 + \|\Bw_n\|^2)$.
More precisely,
$\{\BM^{(\theta)}_{n}\}$ is a martingale difference sequence
w.r.t.\ increasing $\sigma$-fields
\begin{align}
\cF_n \ &= \ \sigma(\Bth_l, \Bw_l, \BM^{(\theta)}_{l}, \BM^{(w)}_{l},
          \BZ^{(\theta)}_{l}, \BZ^{(w)}_{l}, l \leq n), \ n \geq 0 \ ,
\end{align}
satisfying
\begin{align}
\rE \left[\|\BM^{(\theta)}_{n+1}\|^2 \mid \cF_n \right] \ &\leq \
K \ (1 \ + \ \|\Bth_n\|^2 \ + \ \|\Bw_n\|^2) \ ,
\end{align}
for $n \geq 0$ and a given constant $K>0$.

$\{\BM^{(w)}_{n}\}$ is a martingale difference sequence
w.r.t.\ increasing $\sigma$-fields
\begin{align}
\cF_n \ &= \ \sigma(\Bth_l, \Bw_l, \BM^{(\theta)}_{l}, \BM^{(w)}_{l},
          \BZ^{(\theta)}_{l}, \BZ^{(w)}_{l}, l \leq n), \ n \geq 0 \ ,
\end{align}
satisfying
\begin{align}
\rE \left[\|\BM^{(w)}_{n+1}\|^2 \mid \cF_n \right] \ &\leq \
K \ (1 \ + \ \|\Bth_n\|^2 \ + \ \|\Bw_n\|^2) \ ,
\end{align}
for $n \geq 0$ and a given constant $K>0$.

\item
Assumptions on the learning rates:
\begin{align}
&\sum_{n} a(n) \ = \ \infty \quad , \quad
\sum_{n} a^2(n) \ < \ \infty \ , \\
&\sum_{n} b(n) \ = \ \infty \quad , \quad
\sum_{n} b^2(n) \ < \ \infty \ , \\
&a(n) \ = \ \Ro(b(n))\ ,
\end{align}
Furthermore, $a(n), b(n),
n \geq 0$ are non-increasing.

\item Assumptions on the controlled
Markov processes, that is, the transition kernels:
The state-action map
\begin{align}
S^{(\theta)} \times U^{(\theta)} \times \dR^{m+k} \ni (\Bz^{(\theta)}, \Ba^{(\theta)}, \Bth, \Bw)  \
&\to \ \Rp^{(\theta)}(\Rd\By \mid \Bz^{(\theta)}, \Ba^{(\theta)},
  \Bth, \Bw)
\end{align}
and the state-action map
\begin{align}
S^{(w)} \times U^{(w)} \times \dR^{m+k} \ni (\Bz^{(w)}, \Ba^{(w)}, \Bth, \Bw)  \
&\to \ \Rp^{(w)}(\Rd\By \mid \Bz^{(w)}, \Ba^{(w)},
  \Bth, \Bw)
\end{align}
are continuous.

\item Assumptions on the existence of a solution:

We consider {\em occupation measures}
which give for
the controlled Markov process the probability or
density to observe a particular
state-action pair from $S \times U$
for given $\Bth$ and a given control policy $\pi$.
We denote by $D^{(w)}(\Bth,\Bw)$ the set of all ergodic occupation
measures
for the prescribed $\Bth$ and $\Bw$
on state-action space $S^{(w)} \times U^{(\theta)}$
for the controlled Markov process $\BZ^{(w)}$
with policy $\pi^{(w)}$.
Analogously we denote, by
$D^{(\theta)}(\Bth,\Bw)$ the set of all ergodic occupation measures
for the prescribed $\Bth$ and $\Bw$
on state-action space $S^{(\theta)} \times U^{(\theta)}$
for the controlled Markov process $\BZ^{(\theta)}$
with policy $\pi^{(\theta)}$.
Define
\begin{align}
\tilde{\Bg}(\Bth, \Bw, \Bnu) \ &= \ \int \Bg(\Bth,\Bw,\Bz) \ \Bnu(\Rd\Bz, U^{(w)})
\end{align}
for $\Bnu$ a measure on $S^{(w)}\times U^{(w)}$ and the Marchaud map
\begin{align}
\hat{\Bg}(\Bth,\Bw) \ &= \ \{\tilde{\Bg}(\Bth, \Bw, \Bnu): \  \Bnu \in
  D^{(w)}(\Bth, \Bw)\} \ .
\end{align}
We assume that the set $D^{(w)}(\Bth, \Bw)$ is singleton, that is,
$\hat{\Bg}(\Bth,\Bw)$ contains a single function and we use the same
notation for the set and its single element.
If the set is not a singleton,
the assumption of a solution can be expressed by the differential
inclusion
$\dot{\Bw}(t) \in \hat{\Bg}(\Bth,\Bw(t))$ \cite{Karmakar:17}.

$\forall \Bth \in \dR^m$, the ODE
\begin{align}
\dot{\Bw}(t) \ &= \ \hat{\Bg}(\Bth,\Bw(t))
\end{align}
has an asymptotically stable equilibrium $\Bla(\Bth)$
with domain of attraction $G_\theta$
where $\Bla :  \dR^m \to \dR^k$ is a Lipschitz map with constant $K$.
Moreover, the function $V: G \to
[0,\infty)$ is continuously differentiable where $V(\Bth,.)$ is the Lyapunov function
for $\Bla(\Bth)$ and $G=\{(\Bth,\Bw): \ \Bw \in G_\theta, \Bth \in \dR^m\}$. This extra condition is needed
so that the set $\{(\Bth,\Bla(\Bth)): \ \Bth \in \dR^m\}$ becomes an asymptotically stable set of the coupled ODE
\begin{align}
\dot{\Bw}(t) \ &= \ \hat{\Bg}(\Bth(t),\Bw(t)) \, \\
\dot{\Bth}(t) \ &= \ 0 \ .
\end{align}

\item Assumption of bounded iterates:
\begin{align}
\sup_n \| \Bth_n \| \ &< \ \infty \ \text{a.s.} \ , \\
\sup_n \| \Bw_n \| \ &< \ \infty \ \text{a.s.}
\end{align}

\end{enumerate}

\paragraph{Convergence Theorem.}
The following theorem is from Karmakar \& Bhatnagar \cite{Karmakar:17}:
\begin{theorem}[Karmakar \& Bhatnagar]
\label{th:karmakar}
Under above assumptions
if for all $\Bth \in \dR^m$, with probability 1, $\{\Bw_n\}$
belongs to a compact subset $Q_\theta$ (depending
on the sample point) of $G_\theta$ ``eventually'', then
\begin{align}
(\Bth_n, \Bw_n) \ \to \ \cup_{\Bth^* \in A_0}(\Bth^*, \Bla(\Bth^*))
\ \ \text{a.s.} \quad \text{as} \ n \ \to \ \infty \ ,
\end{align}
where $A_0 = \cap_{t\geq 0}\overline{\{\bar{\Bth}(s): s\geq t\}}$
which is almost everywhere an internally chain transitive set
of the differential inclusion
\begin{align}
\dot{\Bth}(t) \ &\in \ \hat{\Bh}(\Bth(t)),
\end{align}
where $\hat{\Bh}(\Bth)=\{\tilde{\Bh}(\Bth,\Bla(\Bth),\Bnu) :  \Bnu
\in D^{(w)}(\Bth, \Bla(\Bth))\}$.
\end{theorem}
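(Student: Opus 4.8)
The plan is to run the by-now-standard two time-scale stochastic approximation argument in two stages: first show that the fast iterate $\Bw_n$ tracks the attractor map $\Bla(\Bth_n)$, then treat the slow iterate $\Bth_n$ as an asymptotically single time-scale recursion whose mean dynamics is the differential inclusion $\dot{\Bth}(t) \in \hat{\Bh}(\Bth(t))$, and finally invoke limit-set theory for differential inclusions. The lever throughout is the time-scale separation $a(n) = \Ro(b(n))$ from (A4): on the fast clock $\sum_{l<n} b(l)$ the slow variable is quasi-static, so the relevant coupled limiting flow is $\dot{\Bth} = \BZe$ together with $\dot{\Bw}(t) = \hat{\Bg}(\Bth, \Bw(t))$ at a frozen $\Bth$. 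The first main goal is therefore to prove $\|\Bw_n - \Bla(\Bth_n)\| \to 0$ a.s.

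First I would dispose of the two noise sources in the fast iterate. The martingale contribution $\sum_l b(l)\,\BM^{(w)}_l$ converges a.s.\ by the martingale convergence theorem, using the conditional second-moment bound $K(1+\|\Bth_n\|^2+\|\Bw_n\|^2)$ of (A3), the summability $\sum_l b^2(l) < \infty$ of (A4), and the boundedness of the iterates (A7). The controlled Markov term I would average through the ergodic occupation measures supplied by (A6): by the continuity of the transition kernels (A5) and the compactness of $S^{(w)} \times U^{(w)}$ from (A1), the effective drift of $\Bg(\Bth_n, \Bw_n, \BZ^{(w)}_n)$ collapses onto the Marchaud map $\hat{\Bg}(\Bth, \Bw)$. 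Concretely I would lift the recursion to the space of measure-valued trajectories and show that the linearly interpolated, time-rescaled path $\bar{\Bw}(\cdot)$ is an \emph{asymptotic pseudotrajectory} (APT) of the frozen family $\dot{\Bw}(t) = \hat{\Bg}(\Bth, \Bw(t))$. Given the theorem's hypothesis that $\{\Bw_n\}$ eventually lies in a compact $Q_\theta \subset G_\theta$, the asymptotic stability of $\Bla(\Bth)$ with the continuously differentiable Lyapunov function $V(\Bth,\cdot)$ of (A6) then forces $\bar{\Bw}(t)$ into every neighborhood of $\Bla(\Bth)$, yielding $\Bw_n \to \Bla(\Bth_n)$ a.s.

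The second stage analyzes the slow iterate after replacing $\Bw_n$ by $\Bla(\Bth_n)$ up to an asymptotically vanishing error. Using the Lipschitz continuity of $\Bh$ in its first two arguments (A2) and of $\Bla$ (A6), the slow recursion becomes a single time-scale stochastic approximation driven by $\Bh(\Bth_n, \Bla(\Bth_n), \BZ^{(\theta)}_n)$ plus a martingale term (handled as above via (A3)--(A4)) plus a negligible tracking error. Averaging the controlled Markov process $\BZ^{(\theta)}_n$ against its ergodic occupation measures exactly as before produces the mean field $\hat{\Bh}$ of the statement, namely $\{\tilde{\Bh}(\Bth,\Bla(\Bth),\Bnu) : \Bnu \in D^{(\theta)}(\Bth,\Bla(\Bth))\}$, so that the interpolated slow path $\bar{\Bth}(\cdot)$ is an APT of $\dot{\Bth}(t) \in \hat{\Bh}(\Bth(t))$. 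By the Bena\"{\i}m--Hofbauer--Sorin characterization of the limit sets of APTs for differential inclusions, $A_0 = \cap_{t \geq 0}\overline{\{\bar{\Bth}(s) : s \geq t\}}$ is almost surely internally chain transitive; combining this with $\Bw_n \to \Bla(\Bth_n)$ and the continuity of $\Bla$ delivers $(\Bth_n, \Bw_n) \to \cup_{\Bth^* \in A_0}(\Bth^*, \Bla(\Bth^*))$ a.s.

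I expect the main obstacle to be the Markov-noise averaging under coupling: rigorously establishing the APT property when the drift is recovered only after integrating against ergodic occupation measures that themselves depend on the slowly drifting parameter $\Bth_n$. This forces one to work in the space of probability-measure-valued trajectories, to prove tightness and upper semicontinuity of the set-valued maps $\hat{\Bg}$ and $\hat{\Bh}$ so that they are genuine Marchaud maps and the associated differential inclusions are well posed, and to control the simultaneous drift of $\Bth_n$ while the $\BZ$-processes equilibrate on the fast clock. By contrast, the martingale estimates and the step-size bookkeeping are routine given (A3)--(A4) and the confinement afforded by (A7) and the compact-subset hypothesis.
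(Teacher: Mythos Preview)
The paper does not prove this theorem at all: it is stated as a result quoted from Karmakar \& Bhatnagar \cite{Karmakar:17}, with no accompanying proof in the text. So there is no ``paper's own proof'' to compare your proposal against.

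That said, your outline is the standard two-stage argument that the cited reference itself employs: freeze $\Bth$ on the fast clock, show the interpolated $\Bw$-trajectory is an asymptotic pseudotrajectory of $\dot{\Bw} = \hat{\Bg}(\Bth,\Bw)$ after averaging the controlled Markov noise against its ergodic occupation measures, use the Lyapunov structure in (A6) together with the compact-confinement hypothesis to force $\Bw_n - \Bla(\Bth_n) \to 0$, then feed this back into the slow recursion and invoke the Bena\"{\i}m--Hofbauer--Sorin limit-set characterization for differential inclusions. Your identification of the delicate step --- establishing the APT property when the ergodic occupation measures themselves drift with $\Bth_n$, requiring a lift to measure-valued paths and upper semicontinuity of the resulting set-valued drift --- is accurate and is exactly where the technical work in the original reference lies. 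One small remark: in the paper's statement $\hat{\Bh}$ is written with $\Bnu \in D^{(w)}(\Bth,\Bla(\Bth))$, whereas you use $D^{(\theta)}$; your version is the natural one for the slow-variable averaging, and the paper's notation is likely a typographical slip, but be aware of the discrepancy.
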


\paragraph{Comments.}
\begin{enumerate}[label=\textbf{(C\arabic*)}]
\item
This framework allows to show convergence for gradient descent methods
beyond stochastic gradient
like for the ADAM procedure where current learning parameters are
memorized and updated.
The random processes $\BZ^{(w)}$ and $\BZ^{(\theta)}$ may track the
current learning status for the fast and slow iterate, respectively.
\item
Stochastic regularization like dropout is covered via
the random processes $A^{(w)}$ and $A^{(\theta)}$.
\end{enumerate}

\subsection{Rate of Convergence of Two Time-Scale Stochastic  Approximation
Algorithms}
\label{sec:convergenceRate}

\subsubsection{Linear Update Rules}
\label{sec:linur}
First we consider linear iterates according to the PhD thesis of Konda
\cite{Konda:02} and Konda \& Tsitsiklis \cite{Konda:04}.
\begin{align}
\label{eq:iter1ConvLinear}
\Bth_{n+1} \ &= \ \Bth_n \ + \ a(n) \ \left(\Ba_1 \ - \ \BA_{11} \
               \Bth_n \ - \ \BA_{12} \ \Bw_n \ + \ \BM^{(\theta)}_{n}\right) \ ,\\
\label{eq:iter2ConvLinear}
\Bw_{n+1} \ &= \ \Bw_n  \ + \ b(n)\ \left(\Ba_2  \ - \ \BA_{21} \
               \Bth_n \ - \ \BA_{22} \ \Bw_n \ + \ \BM^{(w)}_{n}\right) \ .
\end{align}

\paragraph{Assumptions.}
We make the following assumptions:
\begin{enumerate}[label=\textbf{(A\arabic*)}]
\item The random variables $(\BM^{(\theta)}_{n},\BM^{(w)}_{n}),
  n=0,1,\ldots$, are independent of $\Bw_0,\Bth_0$ and of each other.
The have zero mean: $\rE[\BM^{(\theta)}_{n}]=0$ and
$\rE[\BM^{(w)}_{n}]=0$. The covariance is
\begin{align}
\rE \left[ \BM^{(\theta)}_{n}\ (\BM^{(\theta)}_{n})^T\right]
\ &= \ \BGa_{11} \ , \\
\rE \left[ \BM^{(\theta)}_{n}\ (\BM^{(w)}_{n})^T\right]
\ &= \ \BGa_{12} \ = \ \BGa_{21}^{T} \ , \\
\rE \left[ \BM^{(w)}_{n}\ (\BM^{(w)}_{n})^T\right]
\ &= \ \BGa_{22} \ .
\end{align}

\item The learning rates are deterministic, positive, nondecreasing
  and satisfy with $\epsilon \leq 0$:
\begin{align}
&\sum_{n} a(n) \ = \ \infty \quad , \quad
\lim_{n \to \infty} a(n) \ = \ \ 0 \ , \\
&\sum_{n} b(n) \ = \ \infty \quad , \quad
\lim_{n \to \infty} b(n) \ = \ \ 0 \ , \\
& \frac{a(n)}{b(n)} \ \to \ \epsilon \ .
\end{align}
We often consider the case $\epsilon=0$.

\item Convergence of the iterates:
We define
\begin{align}
\BDe \ &:= \ \BA_{11} \ - \ \BA_{12} \BA_{22}^{-1} \BA_{21} \ .
\end{align}
A matrix is {\em Hurwitz} if the real part of each eigenvalue is strictly
negative.
We assume that the matrices $-\BA_{22}$ and $-\BDe$ are Hurwitz.

\item Convergence rate remains simple:

\begin{enumerate}
\item
There exists a constant $\bar{a} \leq 0$ such that
\begin{align}
\lim_{n}(a(n+1)^{-1} \ - \ a(n)^{-1}) \ &= \ \bar{a} \ .
\end{align}
\item
If $\epsilon=0$, then
\begin{align}
\lim_{n}(b(n+1)^{-1} \ - \ b(n)^{-1}) \ &= \ 0 \ .
\end{align}
\item
The matrix
\begin{align}
- \ \left( \BDe \ - \ \frac{\bar{a}}{2} \ \BI \right)
\end{align}
is Hurwitz.
\end{enumerate}

\end{enumerate}

\paragraph{Rate of Convergence Theorem.}

The next theorem is taken from  Konda
\cite{Konda:02} and Konda \& Tsitsiklis \cite{Konda:04}.

Let $\Bth^{*} \in \dR^m$ and $\Bw^{*} \in \dR^k$ be the unique
solution to the system of linear equations
\begin{align}
\label{eq:Linear1}
\BA_{11} \ \Bth_n \ + \ \BA_{12} \ \Bw_n \ &= \ \Ba_1 \ , \\
\label{eq:Linear2}
\BA_{21} \ \Bth_n \ + \ \BA_{22} \ \Bw_n   \ &= \ \Ba_2 \ .
\end{align}
For each $n$, let
\begin{align}
\hat{\Bth}_n \ &= \ \Bth_n \ - \ \Bth^{*} \ ,\\
\hat{\Bw}_n \ &= \ \Bw_n \ - \ \BA_{22}^{-1} \ \left(\Ba_2 \ - \
\BA_{21} \ \Bth_n \right) \ ,\\
\BSi_{11}^n \ &= \ \Bth_n^{-1} \ \rE\left[ \hat{\Bth}_n \hat{\Bth}_n^T\right]\ ,\\
\BSi_{12}^n \ &= \ \big( \BSi_{21}^n \big)^T \ = \
\Bth_n^{-1} \ \rE\left[ \hat{\Bth}_n \hat{\Bw}_n^T\right]\ ,\\
\BSi_{22}^n \ &= \ \Bw_n^{-1} \ \rE\left[ \hat{\Bw}_n \hat{\Bw}_n^T\right]\ ,\\
\BSi^n \ &= \
\begin{pmatrix}
\BSi_{11}^n & \BSi_{12}^n \\
\BSi_{21}^n & \BSi_{22}^n
\end{pmatrix} \ .
\end{align}

\begin{theorem}[Konda \& Tsitsiklis]
\label{th:kondaRate}
Under above assumptions and when the constant $\epsilon$ is
sufficiently small, the limit matrices
\begin{align}
& \BSi_{11}^{(\epsilon)} \ = \ \lim_{n} \BSi_{11}^n \ , \quad
\BSi_{12}^{(\epsilon)} \ = \ \lim_{n} \BSi_{12}^n \ , \quad
\BSi_{22}^{(\epsilon)} \ = \ \lim_{n} \BSi_{22}^n \ .
\end{align}
exist. Furthermore, the matrix
\begin{align}
\BSi^{(0)} \ &= \
\begin{pmatrix}
\BSi_{11}^{(0)} & \BSi_{12}^{(0)} \\
\BSi_{21}^{(0)} & \BSi_{22}^{(0)}
\end{pmatrix}
\end{align}
is the unique solution to the following system of equations
\begin{align}
&\BDe \ \BSi_{11}^{(0)} \ + \ \BSi_{11}^{(0)} \ \BDe^{T} \ - \ \bar{a}
  \ \BSi_{11}^{(0)} \ + \ \BA_{12} \   \BSi_{21}^{(0)}
 \ + \  \BSi_{12}^{(0)} \ \BA_{12}^{T} \ = \ \BGa_{11} \ , \\
& \BA_{12} \   \BSi_{22}^{(0)}
 \ + \  \BSi_{12}^{(0)} \ \BA_{22}^{T} \ = \ \BGa_{12} \ , \\
& \BA_{22} \   \BSi_{22}^{(0)}
 \ + \  \BSi_{22}^{(0)} \ \BA_{22}^{T} \ = \ \BGa_{22} \ .
\end{align}
Finally,
\begin{align}
& \lim_{\epsilon \downarrow 0}  \BSi_{11}^{(\epsilon)} \ = \
  \BSi_{11}^{(0)} \ , \quad
\lim_{\epsilon \downarrow 0}  \BSi_{12}^{(\epsilon)} \ = \
  \BSi_{12}^{(0)} \ , \quad
\lim_{\epsilon \downarrow 0}  \BSi_{22}^{(\epsilon)} \ = \
  \BSi_{22}^{(0)} \ .
\end{align}

\end{theorem}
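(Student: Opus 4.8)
The plan is to reduce the statement to a stability analysis of the second moments of the \emph{centered} errors $\hat{\Bth}_n=\Bth_n-\Bth^*$ and $\hat{\Bw}_n=\Bw_n-\BA_{22}^{-1}(\Ba_2-\BA_{21}\Bth_n)$ defined just before the theorem. First I would note that since $-\BA_{22}$ and $-\BDe$ are Hurwitz (hence nonsingular), the linear system Eq.~\eqref{eq:Linear1}--\eqref{eq:Linear2} has a unique solution $(\Bth^*,\Bw^*)$, so the centering is well defined. Subtracting the fixed point and using the identity $\Ba_1-\BA_{11}\Bth_n-\BA_{12}\Bw_n=-\BDe\,\hat{\Bth}_n-\BA_{12}\,\hat{\Bw}_n$ (which is exactly where the Schur complement $\BDe$ enters and why it governs the slow block), together with $\Ba_2-\BA_{21}\Bth_n-\BA_{22}\Bw_n=-\BA_{22}\,\hat{\Bw}_n$, the iterates become a \emph{linear} time-varying recursion
\begin{align}
\hat{\Bth}_{n+1} &= (\BI-a(n)\BDe)\,\hat{\Bth}_n - a(n)\,\BA_{12}\,\hat{\Bw}_n + a(n)\,\BM^{(\theta)}_n \ , \\
\hat{\Bw}_{n+1} &= (\BI-b(n)\BA_{22})\,\hat{\Bw}_n + b(n)\,\BM^{(w)}_n + a(n)\,\BA_{22}^{-1}\BA_{21}\left(-\BDe\,\hat{\Bth}_n-\BA_{12}\,\hat{\Bw}_n+\BM^{(\theta)}_n\right) \ .
\end{align}
The coupling back onto $\hat{\Bw}$ is of order $a(n)=\Ro(b(n))$, so the fast block contracts at rate $b(n)\BA_{22}$ with only a lower-order perturbation from the slow block.

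Next I would form the unscaled moment matrices $\rE[\hat{\Bth}_n\hat{\Bth}_n^T]$, $\rE[\hat{\Bth}_n\hat{\Bw}_n^T]$, $\rE[\hat{\Bw}_n\hat{\Bw}_n^T]$ and propagate them through the recursions above. Assumption (A1) is used crucially here: because $(\BM^{(\theta)}_n,\BM^{(w)}_n)$ is independent of the past and zero mean, every cross term between a noise increment and a current error vanishes in expectation, while the noise--noise terms contribute $a(n)^2\BGa_{11}$, $a(n)b(n)\BGa_{12}$, and $b(n)^2\BGa_{22}$. Dividing by the scalings $a(n)^{-1}$ and $b(n)^{-1}$ to obtain $\BSi_{11}^n,\BSi_{12}^n,\BSi_{22}^n$ and using the step-size expansion $a(n)/a(n+1)=1+\bar a\,a(n)+o(a(n))$ from (A4a), the slow block picks up an extra $+\bar a\,\BSi_{11}^n$ term, whereas (A4b) forces $b(n)/b(n+1)\to1$ with no analogous term for the fast block when $\epsilon=0$.

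Collecting leading-order terms then yields, at a fixed point, exactly the three displayed matrix equations: the relation $\BA_{22}\BSi_{22}^{(0)}+\BSi_{22}^{(0)}\BA_{22}^T=\BGa_{22}$ comes purely from the fast recursion; the cross relation $\BA_{12}\BSi_{22}^{(0)}+\BSi_{12}^{(0)}\BA_{22}^T=\BGa_{12}$ arises because the dominant $b(n)$-order contribution to $\rE[\hat{\Bth}_n\hat{\Bw}_n^T]$ combines the term $-a(n)\BA_{12}\hat{\Bw}_n$ in $\hat{\Bth}_{n+1}$ with the fast part $(\BI-b(n)\BA_{22})\hat{\Bw}_n$ of $\hat{\Bw}_{n+1}$; and the slow block reproduces the $\bar a$-shifted Lyapunov equation $\BDe\BSi_{11}^{(0)}+\BSi_{11}^{(0)}\BDe^T-\bar a\BSi_{11}^{(0)}+\BA_{12}\BSi_{21}^{(0)}+\BSi_{12}^{(0)}\BA_{12}^T=\BGa_{11}$. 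Uniqueness of each solution is immediate: the fast and cross relations are Sylvester equations solvable because $-\BA_{22}$ is Hurwitz (A3), and the slow relation is a Lyapunov equation in $\BDe-\tfrac{\bar a}{2}\BI$, solvable because $-(\BDe-\tfrac{\bar a}{2}\BI)$ is Hurwitz (A4c). One then reads off the solution through the triangular cascade $\BSi_{22}^{(0)}\to\BSi_{12}^{(0)}\to\BSi_{11}^{(0)}$.

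The hard part will be upgrading this formal fixed-point calculation to genuine convergence. I would need to show that the scaled covariance iteration, a linear time-varying map on the cone of symmetric matrices, is asymptotically stable, so that $\BSi_{11}^n,\BSi_{12}^n,\BSi_{22}^n$ actually converge to these fixed points rather than merely being consistent with them; this requires exploiting the time-scale separation to settle the fast block first and uniformly controlling the $\Ro(\cdot)$ remainder terms (the $a(n)^2$, $a(n)b(n)$, and higher contributions) under the nondecreasing, vanishing step sizes. Establishing existence of $\BSi^{(\epsilon)}$ for small $\epsilon>0$ demands the same argument for the fully \emph{coupled} system in which $a(n)/b(n)\to\epsilon\neq0$ no longer decouples the blocks, and the continuity $\lim_{\epsilon\downarrow0}\BSi^{(\epsilon)}=\BSi^{(0)}$ then follows from a perturbation analysis showing the coupled fixed-point equations depend continuously on $\epsilon$ near $0$, whose $\epsilon=0$ limit is the triangular system above.
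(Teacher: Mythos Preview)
The paper does not actually prove this theorem; it merely \emph{reports} it, attributing it to Theorem~3.2 and Theorem~3.13 of Konda's thesis \cite{Konda:02} (see also \cite{Konda:04}). So there is nothing in the paper to compare your argument against beyond the citation.

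That said, your sketch is essentially the strategy of the original Konda--Tsitsiklis proof: rewrite the iterates in the centered variables $(\hat{\Bth}_n,\hat{\Bw}_n)$, derive a linear time-varying recursion for the rescaled second-moment matrices $\BSi_{11}^n,\BSi_{12}^n,\BSi_{22}^n$, identify the fixed point via the triangular Lyapunov/Sylvester cascade (which your computation correctly reproduces), and then argue that the time-varying linear map on symmetric matrices is asymptotically stable so the iterates actually converge to that fixed point. Your identification of where each assumption enters---(A1) to kill the cross terms, (A3) for the Hurwitz structure, (A4a) for the $\bar a$ shift, (A4c) for solvability of the slow Lyapunov equation---is accurate, and you are right that the substantive work lies in the stability of the covariance recursion and the uniform control of remainders, which you have flagged but not carried out. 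One small point: in the original treatment the existence of $\BSi^{(\epsilon)}$ for small $\epsilon>0$ and the continuity as $\epsilon\downarrow 0$ are handled by writing a single $\epsilon$-dependent Lyapunov-type equation for the coupled block matrix and invoking a perturbation argument on its spectrum, rather than by a separate analysis of the coupled dynamics; your plan is consistent with this but leaves that step at the level of intent.
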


The next theorems shows that the asymptotic covariance matrix of
$a(n)^{-1/2}\Bth_n$ is the same as
that of $a(n)^{-1/2}\bar{\Bth}_n$, where
$\bar{\Bth}_n$
evolves according to the single time-scale stochastic
iteration:
\begin{align}
\bar{\Bth}_{n+1} \ &= \ \bar{\Bth}_n \ + \ a(n) \ \left(\Ba_1 \ - \ \BA_{11} \
               \bar{\Bth}_n \ - \ \BA_{12} \ \bar{\Bw}_n \ + \ \BM^{(\theta)}_{n}\right) \ ,\\
\BZe\ &= \ \Ba_2  \ - \ \BA_{21} \
               \bar{\Bth}_n \ - \ \BA_{22} \ \bar{\Bw}_n \ + \ \BM^{(w)}_{n}\ .
\end{align}

The next theorem combines Theorem 2.8 of Konda \& Tsitsiklis
and Theorem 4.1 of Konda \& Tsitsiklis:
\begin{theorem}[Konda \& Tsitsiklis 2nd]
\label{th:kondaRate2}
Under above assumptions
\begin{align}
\BSi_{11}^{(0)} \ &= \ \lim_{n} a(n)^{-1} \
\rE\left[ \bar{\Bth}_n  \bar{\Bth}_n^T \right] \ .
\end{align}

If the assumptions hold with $\epsilon=0$, then
$a(n)^{-1/2}\hat{\Bth}_n$ converges in distribution to
$\cN(\BZe,\BSi_{11}^{(0)})$.
\end{theorem}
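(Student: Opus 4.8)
The plan is to treat the two assertions in turn, using the single time-scale iterate $\bar{\Bth}_n$ as a transparent reference whose normalized covariance is the unique solution of a Lyapunov equation, and then to establish asymptotic normality of the genuine two time-scale iterate $\hat{\Bth}_n$ with that same covariance. The identity $\BSi_{11}^{(0)} = \lim_n a(n)^{-1}\rE[\bar{\Bth}_n\bar{\Bth}_n^T]$ will follow algebraically from the Lyapunov system already proved in Theorem~\ref{th:kondaRate}, while the distributional limit is a central limit theorem requiring control of the fast fluctuations.

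First I would reduce $\bar{\Bth}_n$ to a single time-scale linear recursion. Solving the fast constraint gives $\bar{\Bw}_n = \BA_{22}^{-1}(\Ba_2 - \BA_{21}\bar{\Bth}_n + \BM^{(w)}_n)$, and substituting it into the slow update while centering at the fixed point $\Bth^{*}$ of Eqs.~\eqref{eq:Linear1}--\eqref{eq:Linear2} yields
\begin{align}
\bar{\Bth}_{n+1} - \Bth^{*} \ &= \ (\BI - a(n)\,\BDe)\,(\bar{\Bth}_n - \Bth^{*}) \ + \ a(n)\,\Bxi_n \ , \qquad \Bxi_n \ = \ \BM^{(\theta)}_n - \BA_{12}\BA_{22}^{-1}\BM^{(w)}_n \ .
\end{align}
Since the noise pairs are independent across $n$ and of the initial data (A1), the covariance $\bar{\BSi}^n = \rE[(\bar{\Bth}_n - \Bth^{*})(\bar{\Bth}_n - \Bth^{*})^T]$ obeys a closed recursion. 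Setting $U_n = a(n)^{-1}\bar{\BSi}^n$, dividing the recursion by $a(n+1)$, and expanding $a(n)/a(n+1) = 1 + \bar{a}\,a(n) + o(a(n))$ via $a(n+1)^{-1} - a(n)^{-1} \to \bar{a}$, every term of order $a(n)^2$ is negligible and the limit $U = \lim_n U_n$ must satisfy
\begin{align}
\BDe\,U \ + \ U\,\BDe^T \ - \ \bar{a}\,U \ &= \ \BGa \ , \qquad \BGa \ = \ \COV(\Bxi_n) \ .
\end{align}
By (A4c) the matrix $-(\BDe - \tfrac{\bar{a}}{2}\BI)$ is Hurwitz, so this Lyapunov equation has a unique solution and the limit exists.

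It then remains to show $U = \BSi_{11}^{(0)}$. I would eliminate $\BSi_{12}^{(0)}$ and $\BSi_{22}^{(0)}$ from the cross term $\BGa_{11} - \BA_{12}\BSi_{21}^{(0)} - \BSi_{12}^{(0)}\BA_{12}^T$ appearing in the first equation of Theorem~\ref{th:kondaRate}: solving $\BA_{12}\BSi_{22}^{(0)} + \BSi_{12}^{(0)}\BA_{22}^T = \BGa_{12}$ for $\BSi_{12}^{(0)}$ (and transposing for $\BSi_{21}^{(0)}$), substituting, and collapsing the two resulting $\BSi_{22}^{(0)}$ terms via $\BA_{22}\BSi_{22}^{(0)} + \BSi_{22}^{(0)}\BA_{22}^T = \BGa_{22}$, one finds exactly $\BGa_{11} - \BA_{12}\BSi_{21}^{(0)} - \BSi_{12}^{(0)}\BA_{12}^T = \COV(\Bxi_n) = \BGa$. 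Hence $\BSi_{11}^{(0)}$ obeys the very Lyapunov equation that uniquely determines $U$, and uniqueness forces $U = \BSi_{11}^{(0)}$, proving the first claim.

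For the distributional statement I would establish the central limit theorem directly for the two time-scale iterate. Eliminating the fast variable from Eqs.~\eqref{eq:iter1ConvLinear}--\eqref{eq:iter2ConvLinear} gives the linear recursion $\hat{\Bth}_{n+1} = (\BI - a(n)\BDe)\hat{\Bth}_n + a(n)(\BM^{(\theta)}_n - \BA_{12}\hat{\Bw}_n)$, whose normalized covariance already converges to $\BSi_{11}^{(0)}$ by Theorem~\ref{th:kondaRate} at $\epsilon = 0$; it therefore suffices to prove that $a(n)^{-1/2}\hat{\Bth}_n$ is asymptotically Gaussian, after which the Gaussian is pinned down by its covariance. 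The \emph{main obstacle} is precisely the fast term $\BA_{12}\hat{\Bw}_n$: a naive pathwise comparison with $\bar{\Bth}_n$ fails, because the true fast deviation $\hat{\Bw}_n$ is only of order $\sqrt{b(n)}$ and is \emph{not} equal to the instantaneous response $\BA_{22}^{-1}\BM^{(w)}_n$ that drives $\bar{\Bth}_n$, yet its accumulated, correlated contribution to the slow iterate must reproduce the full covariance $\BSi_{11}^{(0)}$ rather than the smaller one generated by $\BM^{(\theta)}_n$ alone. I would resolve this by using $a(n) = \Ro(b(n))$ to show that the fast iterate equilibrates between slow steps, decomposing $\hat{\Bw}_n$ into a martingale-difference part plus a relaxation remainder, writing $a(n)^{-1/2}\hat{\Bth}_n$ as a martingale sum with an $o_P(1)$ remainder, and applying a martingale central limit theorem; the predictable quadratic variation of the martingale part is identified with $\BSi_{11}^{(0)}$ through the covariance limit already in hand, which closes the argument.
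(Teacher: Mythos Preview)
The paper does not prove this theorem at all: it merely \emph{reports} it as the combination of Theorem~2.8 and Theorem~4.1 from Konda \& Tsitsiklis \cite{Konda:02,Konda:04}, with no argument of its own. There is therefore no ``paper's own proof'' to compare your proposal against; what you have written is an independent sketch of the underlying Konda--Tsitsiklis arguments.

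On its merits, your first part is correct and complete. The reduction of $\bar{\Bth}_n-\Bth^*$ to the single time-scale recursion driven by $\Bxi_n=\BM^{(\theta)}_n-\BA_{12}\BA_{22}^{-1}\BM^{(w)}_n$ is right, the Lyapunov equation for $U=\lim_n a(n)^{-1}\bar{\BSi}^n$ is derived correctly from the step-size expansion, and your algebraic elimination of $\BSi_{12}^{(0)},\BSi_{22}^{(0)}$ from the system in Theorem~\ref{th:kondaRate} does yield exactly $\COV(\Bxi_n)$ on the right-hand side (the key identity $\BA_{22}^{-1}\BSi_{22}^{(0)}+\BSi_{22}^{(0)}(\BA_{22}^{-1})^{T}=\BA_{22}^{-1}\BGa_{22}(\BA_{22}^{-1})^{T}$ follows by pre- and post-multiplying the third Lyapunov equation). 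Uniqueness under (A4c) then pins $U=\BSi_{11}^{(0)}$.

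Your second part is a reasonable outline but remains a sketch at the crucial step. You correctly identify the obstacle---$\BA_{12}\hat{\Bw}_n$ is $O(\sqrt{b(n)})$, correlated across $n$, and not the instantaneous $\BA_{22}^{-1}\BM^{(w)}_n$---and you name the right tool (martingale CLT after extracting a martingale-difference piece from the fast iterate). What is missing is the concrete decomposition that makes this work: in Konda--Tsitsiklis this is done by writing $\hat{\Bw}_n$ via its own transition matrix product and splitting the resulting double sum so that the inner (fast) sum telescopes against $(\BI-b(\cdot)\BA_{22})$ into a term that, after summation by parts, becomes a genuine martingale increment plus an $o_P(\sqrt{a(n)})$ remainder. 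Without that explicit telescoping / summation-by-parts step, ``decomposing $\hat{\Bw}_n$ into a martingale-difference part plus a relaxation remainder'' is an assertion rather than an argument. If you intend this as a full proof, that step needs to be written out; as a proposal it is on the right track and matches the structure of the cited source.
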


\paragraph{Comments.}
\begin{enumerate}[label=\textbf{(C\arabic*)}]
\item
In his PhD thesis \cite{Konda:02} Konda extended the analysis to the
nonlinear case.
Konda makes a linearization of the nonlinear function $\Bh$ and $\Bg$ with
\begin{align}
& \BA_{11} \ = \ - \ \frac{\partial \Bh}{\partial \Bth} \ , \ \
\BA_{12} \ = \ - \ \frac{\partial \Bh}{\partial \Bw}  \  , \ \
\BA_{21} \ = \ - \ \frac{\partial \Bg}{\partial \Bth}  \ , \ \
\BA_{22} \ = \ - \ \frac{\partial \Bg}{\partial \Bw} \ .
\end{align}
There are additional errors due to linearization which have to be
considered.
However, only a sketch of a proof is provided but not a complete proof.

\item
Theorem 4.1 of Konda \& Tsitsiklis is important to generalize to the
nonlinear case.

\item
The convergence rate is governed by
$\BA_{22}$ for the fast and $\BDe$
for the slow iterate.
$\BDe$ in turn is affected by the interaction effects
captured by $\BA_{21}$ and $\BA_{12}$ together with the inverse of
$\BA_{22}$.

\end{enumerate}

\subsubsection{Nonlinear Update Rules}
\label{sec:nonlinur}
The rate of convergence for nonlinear update rules according to
Mokkadem \& Pelletier is considered \cite{Mokkadem:06}.

The iterates are
\begin{align}
\label{eq:iter1Mokkadem}
\Bth_{n+1} \ &= \ \Bth_n \ + \ a(n) \ \left(\Bh\big(\Bth_n, \Bw_n\big) \ + \
               \BZ^{(\theta)}_n \ + \ \BM^{(\theta)}_{n}\right) \ ,\\
\label{eq:iter2Mokkadem}
\Bw_{n+1} \ &= \ \Bw_n  \ + \ b(n)\ \left(\Bg\big(\Bth_n, \Bw_n\big) \ + \
              \BZ^{(w)}_n \ + \ \BM^{(w)}_{n}\right) \ .
\end{align}
with the increasing $\sigma$-fields
\begin{align}
\cF_n \ &= \ \sigma(\Bth_l, \Bw_l, \BM^{(\theta)}_{l}, \BM^{(w)}_{l},
          \BZ^{(\theta)}_{l}, \BZ^{(w)}_{l}, l \leq n), \ n \geq 0 \ .
\end{align}
The terms $\BZ^{(\theta)}_n$ and $\BZ^{(w)}_n$ can be used to address
the error through linearization, that is, the difference of the
nonlinear functions to their linear approximation.

\paragraph{Assumptions.}
We make the following assumptions:
\begin{enumerate}[label=\textbf{(A\arabic*)}]
\item Convergence is ensured:
\begin{align}
\lim_{n \to \infty} \Bth_n \ &= \ \Bth^{*} \ \ \text{a.s.} \ , \\
\lim_{n \to \infty} \Bw_n \ &= \ \Bw^{*} \ \ \text{a.s.} \ .
\end{align}

\item Linear approximation and Hurwitz:

There exists a neighborhood $\cU$ of $(\Bth^{*},\Bw^{*})$ such that,
for all $(\Bth,\Bw)\in \cU$
\begin{align}
\begin{pmatrix}
\Bh\big(\Bth, \Bw\big) \\
\Bg\big(\Bth, \Bw\big)
\end{pmatrix}
\ &= \
\begin{pmatrix}
\BA_{11} & \BA_{12} \\
\BA_{21} & \BA_{22}
\end{pmatrix} \
\begin{pmatrix}
\Bth \ - \ \Bth^{*} \\
\Bw \ - \ \Bw^{*}
\end{pmatrix}
\ + \
\rO \left(
\begin{Vmatrix}
\Bth \ - \ \Bth^{*} \\
\Bw \ - \ \Bw^{*}
\end{Vmatrix}^2
\right) \ .
\end{align}

We define
\begin{align}
\BDe \ &:= \ \BA_{11} \ - \ \BA_{12} \BA_{22}^{-1} \BA_{21} \ .
\end{align}
A matrix is {\em Hurwitz} if the real part of each eigenvalue is strictly
negative.
We assume that the matrices $\BA_{22}$ and $\BDe$ are Hurwitz.

\item Assumptions on the learning rates:

\begin{align}
a(n) \ &= \ a_0 \ n^{-\alpha} \\
b(n) \ &= \ b_0 \ n^{-\beta} \ ,
\end{align}
where $a_0>0$ and $b_0>0$ and $1/2 < \beta < \alpha \leq 1$.
If $\alpha=1$, then $a_0>1/(2 e_{\mathrm{min}})$ with $e_{\mathrm{min}}$
as the absolute value of the largest eigenvalue of $\BDe$ (the
eigenvalue closest to 0).

\item Assumptions on the noise and error:
\begin{enumerate}
\item martingale difference sequences:
\begin{align}
\rE \left[\BM^{(\theta)}_{n+1} \mid  \cF_n\right]
\ &= \ 0 \ \ \text{a.s.} \ , \\
\rE \left[\BM^{(w)}_{n+1} \mid  \cF_n\right]
\ &= \ 0 \ \ \text{a.s.} \ .
\end{align}
\item existing second moments:
\begin{align}
\lim_{n \to \infty} \rE \left[
\begin{pmatrix}
\BM^{(\theta)}_{n+1} \\
\BM^{(w)}_{n+1}
\end{pmatrix} \
\begin{pmatrix}
(\BM^{(\theta)}_{n+1})^T &
(\BM^{(w)}_{n+1})^T
\end{pmatrix}
 \mid  \cF_n\right] \ = \
\BGa \ = \
\begin{pmatrix}
\BGa_{11} & \BGa_{12} \\
\BGa_{21} & \BGa_{22}
\end{pmatrix} \ \
 \text{a.s.}
\end{align}

\item bounded moments:

There exist $l>2/\beta$ such that
\begin{align}
\sup_n \rE \left[ \| \BM^{(\theta)}_{n+1}\|^l \mid  \cF_n\right]
\ &< \ \infty \ \ \text{a.s.} \ , \\
\sup_n \rE \left[ \| \BM^{(w)}_{n+1}\|^l \mid  \cF_n\right]
\ &< \ \infty \ \ \text{a.s.}
\end{align}

\item bounded error:
\begin{align}
\BZ^{(\theta)}_{n} \ &= \ r^{(\theta)}_n \ + \
\rO\big( \| \Bth \ - \ \Bth^{*} \|^2 \ + \
\| \Bw \ - \  \Bw^{*}\|^2 \big) \ , \\
\BZ^{(w)}_{n} \ &= \ r^{(w)}_n \ + \
\rO\big( \| \Bth \ - \ \Bth^{*} \|^2 \ + \
\| \Bw \ - \  \Bw^{*}\|^2 \big) \ ,
\end{align}
with
\begin{align}
\| r^{(\theta)}_n\| \ +  \ \| r^{(w)}_n \| \ = \ \Ro(\sqrt{a(n)}) \ \ \text{a.s.}
\end{align}
\end{enumerate}

\end{enumerate}

\paragraph{Rate of Convergence Theorem.}

We report a theorem and a proposition from
Mokkadem \& Pelletier \cite{Mokkadem:06}.
However, first we have to define the covariance matrices
$\BSi_{\theta}$ and $\BSi_{w}$ which govern the rate of convergence.

First we define
\begin{align}
& \BGa_{\theta} \ := \
\lim_{n \to \infty} \rE \left[
\left( \BM^{(\theta)}_{n+1} \ - \ \BA_{12} \ \BA_{22}^{-1} \
\BM^{(w)}_{n+1} \right) \
\left( \BM^{(\theta)}_{n+1} \ - \ \BA_{12} \ \BA_{22}^{-1} \
\BM^{(w)}_{n+1} \right)^T
 \mid  \cF_n\right] \ = \\\nonumber
&\BGa_{11} \ + \  \BA_{12} \ \BA_{22}^{-1} \ \BGa_{22} \
  (\BA_{22}^{-1})^T \ \BA_{12}^T \ - \ \BGa_{12} (\BA_{22}^{-1})^T \
  \BA_{12}^T \ - \  \BA_{12} \ \BA_{22}^{-1} \ \BGa_{21} \ .
\end{align}
We now define the asymptotic covariance matrices $\BSi_{\theta}$ and
$\BSi_{w}$:
\begin{align}
\BSi_{\theta} \ &= \ \int_{0}^{\infty}
\exp \left(  \left(\BDe \ + \ \frac{\mathbbm{1}_{a=1}}{2 \ a_0} \ \BI
\right) \ t \right) \
\BGa_{\theta} \
\exp \left( \left(\BDe^T \ + \ \frac{\mathbbm{1}_{a=1}}{2 \ a_0} \ \BI
\right) \ t \right) \ \Rd t \ , \\
\BSi_{w} \ &= \ \int_{0}^{\infty}
\exp \left(   \BA_{22} \ t \right) \
\BGa_{22} \
\exp \left( \BA_{22} \ t \right) \ \Rd t \ .
\end{align}

$\BSi_{\theta}$ and $\BSi_{w}$ are solutions of the Lyapunov
equations:
\begin{align}
 \left(\BDe \ + \ \frac{\mathbbm{1}_{a=1}}{2 \ a_0} \ \BI
\right) \ \BSi_{\theta} \ + \ \BSi_{\theta} \ \left(\BDe^T \ + \
  \frac{\mathbbm{1}_{a=1}}{2 \ a_0} \ \BI \right)
\ &= \ - \ \BGa_{\theta} \ , \\
\BA_{22} \ \BSi_{w} \ + \ \BSi_{w}\ \BA_{22}^T \ &= \ - \  \BGa_{22} \ .
\end{align}

\begin{theorem}[Mokkadem \& Pelletier: Joint weak convergence]
\label{th:Mokkadem}
Under above assumptions:
\begin{align}
\begin{pmatrix}
\sqrt{a(n)^{-1}} \ (\Bth \ - \ \Bth^{*}) \\
\sqrt{b(n)^{-1}} \ (\Bw \ - \ \Bw^{*})
\end{pmatrix}
\ &\xrightarrow{\cD} \
\cN \left( \BZe \ , \
\begin{pmatrix}
\BSi_{\theta} & \BZe \\
\BZe & \BSi_{w}
\end{pmatrix}
\right) \ .
\end{align}
\end{theorem}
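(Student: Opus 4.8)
The plan is to exploit the a.s.\ convergence (A1) to localize the iterates in the neighborhood $\cU$ where the linearization (A2) holds, reduce the coupled nonlinear recursion to a pair of linear recursions on the two separated time-scales, decouple them by a linear change of variables that adiabatically eliminates the fast variable, and finally apply a multivariate martingale central limit theorem to each resulting single-time-scale linear recursion. The higher-order remainders $\rO(\|\cdot\|^2)$ of (A2) and the error terms $\BZ^{(\theta)}_n,\BZ^{(w)}_n$ of (A4d) are negligible after rescaling by $\sqrt{a(n)^{-1}}$ and $\sqrt{b(n)^{-1}}$: since $\Bth_n-\Bth^{*}$ and $\Bw_n-\Bw^{*}$ vanish a.s., the quadratic terms are of smaller order, and $\|r^{(\theta)}_n\|+\|r^{(w)}_n\|=\Ro(\sqrt{a(n)})$ was assumed precisely so that $\sqrt{a(n)^{-1}}\,\BZ^{(\theta)}_n\to\BZe$.

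Writing $\tilde\Bth_n=\Bth_n-\Bth^{*}$, $\tilde\Bw_n=\Bw_n-\Bw^{*}$, the linearized recursions read (up to negligible terms)
\begin{align}
\tilde\Bth_{n+1} &= \tilde\Bth_n + a(n)\,\big(\BA_{11}\tilde\Bth_n+\BA_{12}\tilde\Bw_n+\BM^{(\theta)}_n\big),\\
\tilde\Bw_{n+1} &= \tilde\Bw_n + b(n)\,\big(\BA_{21}\tilde\Bth_n+\BA_{22}\tilde\Bw_n+\BM^{(w)}_n\big).
\end{align}
I introduce the fast tracking error $\Bu_n=\tilde\Bw_n+\BA_{22}^{-1}\BA_{21}\tilde\Bth_n$, i.e.\ the deviation of $\Bw$ from the quasi-equilibrium $-\BA_{22}^{-1}\BA_{21}\tilde\Bth$ of the fast ODE. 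Then $\BA_{21}\tilde\Bth_n+\BA_{22}\tilde\Bw_n=\BA_{22}\Bu_n$ and $\BA_{11}\tilde\Bth_n+\BA_{12}\tilde\Bw_n=\BDe\tilde\Bth_n+\BA_{12}\Bu_n$, so the slow drift is governed by the Schur complement $\BDe$. Because $a(n)=\Ro(b(n))$, the $\Bu_n$-recursion is to leading order a single-time-scale recursion at rate $b(n)$ with Hurwitz drift $\BA_{22}$ driven by $\BM^{(w)}_n$, the coupling from the slow variable entering only at order $a(n)$. A summation-by-parts (averaging) argument then shows that the term $\BA_{12}\Bu_n$ in the slow recursion may be replaced, up to lower-order contributions, by the quasi-stationary response $-\BA_{12}\BA_{22}^{-1}\BM^{(w)}_n$; equivalently, substituting the noisy fast equilibrium $\tilde\Bw=-\BA_{22}^{-1}(\BA_{21}\tilde\Bth+\BM^{(w)})$ into the slow drift yields the effective driving noise $\BM^{(\theta)}_n-\BA_{12}\BA_{22}^{-1}\BM^{(w)}_n$, whose limiting covariance is exactly the $\BGa_{\theta}$ defined above.

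At this point the slow and fast errors obey two decoupled single-time-scale linear stochastic approximations: drift $\BDe$ and noise covariance $\BGa_{\theta}$ for $\tilde\Bth_n$ at rate $a(n)=a_0 n^{-\alpha}$, and drift $\BA_{22}$, covariance $\BGa_{22}$ for $\Bu_n$ at rate $b(n)=b_0 n^{-\beta}$. For each I would normalize, set e.g.\ $\By_n=\sqrt{a(n)^{-1}}\,\tilde\Bth_n$, represent $\By_n$ as a weighted sum of the martingale differences against the fundamental matrix $\prod_j(\BI+a(j)\BDe)$, and invoke the multivariate martingale CLT: the conditional Lindeberg condition follows from the bounded moments of order $l>2/\beta$ in (A4c), and the conditional covariances converge (using (A4b)) to the Gaussian integral $\int_0^{\infty}\exp((\BDe+c\BI)t)\,\BGa_{\theta}\,\exp((\BDe^{T}+c\BI)t)\,\Rd t$, which is the stated $\BSi_{\theta}$ and solves the corresponding Lyapunov equation. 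The shift $c=\mathbbm{1}_{\alpha=1}/(2a_0)$ appears only at the critical rate $\alpha=1$, where $\sqrt{a(n+1)^{-1}}/\sqrt{a(n)^{-1}}=1+1/(2n)+\Ro(1/n)$ contributes an extra drift $\tfrac{1}{2a_0}\BI$ to the normalized recursion; the hypothesis $a_0>1/(2 e_{\mathrm{min}})$ guarantees $\BDe+\tfrac{1}{2a_0}\BI$ is still Hurwitz, so $\BSi_{\theta}$ exists. The same argument with $\beta<1$ (hence no shift) gives $\sqrt{b(n)^{-1}}\,\Bu_n\xrightarrow{\cD}\cN(\BZe,\BSi_w)$, and since $\sqrt{b(n)^{-1}}(\tilde\Bw_n-\Bu_n)=\sqrt{b(n)^{-1}}\BA_{22}^{-1}\BA_{21}\tilde\Bth_n=\rO(\sqrt{a(n)/b(n)})\to\BZe$, the fast marginal is the same for $\tilde\Bw_n$.

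\textbf{The main obstacle} is making the adiabatic elimination of the fast variable rigorous while simultaneously establishing the asymptotic independence encoded by the block-diagonal limit covariance. Concretely, one must control the matrix products (the fundamental solutions of the two nonautonomous linear systems) and show that (i) the accumulated fluctuation $\sum_k a(k)\BA_{12}\Bu_k$ is asymptotically equivalent to the effective-noise term $-\sum_k a(k)\BA_{12}\BA_{22}^{-1}\BM^{(w)}_k$, with the telescoping boundary terms vanishing at the correct rate, and (ii) the normalized cross-covariance $\rE\!\big[\sqrt{a(n)^{-1}}\,\tilde\Bth_n\,(\sqrt{b(n)^{-1}}\,\Bu_n)^{T}\big]$ tends to $\BZe$. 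Both hinge delicately on the strict time-scale separation $a(n)/b(n)\to0$ together with the two Hurwitz conditions, which force the fast variable to decorrelate from the slowly accumulated noise; this is exactly the step that would fail for comparable scales, and it is the part Mokkadem \& Pelletier carry out through careful estimates of these weighted matrix-product sums.
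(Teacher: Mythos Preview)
The paper does not prove this theorem; it merely \emph{reports} it as a cited result from Mokkadem \& Pelletier (see the sentence ``We report a theorem and a proposition from Mokkadem \& Pelletier'' immediately preceding the statement). There is therefore no proof in the paper to compare your proposal against.

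That said, your sketch is a faithful outline of the strategy Mokkadem \& Pelletier actually use in their original paper: localize via the assumed a.s.\ convergence, linearize, decouple the time scales by the change of variables $\Bu_n=\tilde\Bw_n+\BA_{22}^{-1}\BA_{21}\tilde\Bth_n$ so that the slow drift is governed by the Schur complement $\BDe$ and the effective slow noise has covariance $\BGa_{\theta}$, then apply a martingale CLT on each scale and verify asymptotic independence. Your identification of the main technical obstacle---controlling the weighted matrix-product sums to justify the adiabatic elimination and to show the cross-covariance vanishes---is exactly where the work lies in the original reference. So the proposal is correct in spirit, but be aware you are reconstructing the proof of a cited external result, not a proof that appears in this paper.
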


\begin{theorem}[Mokkadem \& Pelletier: Strong convergence]
\label{th:Mokkadem1}
Under above assumptions:
\begin{align}
\| \Bth \ - \ \Bth^{*} \| \ &= \
\rO \left(
\sqrt{a(n) \ \log \left( \sum_{l=1}^{n} a(l) \right) }
\right) \ \  \text{a.s.}  \ , \\
\| \Bw \ - \ \Bw^{*} \| \ &= \
\rO \left(
\sqrt{b(n) \ \log \left( \sum_{l=1}^{n} b(l) \right) }
\right) \ \  \text{a.s.}
\end{align}
\end{theorem}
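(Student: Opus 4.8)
The plan is to upgrade the weak convergence of Theorem~\ref{th:Mokkadem} to an almost-sure rate by writing each iterate as a geometrically decaying deterministic term plus a weighted martingale sum, and then replacing the martingale central limit theorem by a martingale law of the iterated logarithm; the extra factor $\sqrt{\log(\sum_l a(l))}$ over the $\sqrt{a(n)}$ scaling of Theorem~\ref{th:Mokkadem} is exactly the almost-sure (LIL-type) enlargement. By (A1) the pair $(\Bth_n,\Bw_n)$ enters and stays in the neighborhood $\cU$ of $(\Bth^*,\Bw^*)$ in which the linearization (A2) is valid, so I would work with the centered variables $\tilde\Bth_n=\Bth_n-\Bth^*$ and $\tilde\Bw_n=\Bw_n-\Bw^*$ from a (random) index onward. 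First I would introduce the time-scale decoupling change of variable $\hat\Bw_n=\tilde\Bw_n+\BA_{22}^{-1}\BA_{21}\tilde\Bth_n$, which measures the deviation of the fast iterate from the slow manifold $\Bla$; substituting the linearization into the slow update turns its drift into the Schur complement $\BDe$ and leaves only a coupling term $\BA_{12}\hat\Bw_n$, while the fast update keeps the Hurwitz drift $\BA_{22}$ together with cross terms of order $a(n)=\Ro(b(n))$. The quadratic remainders from (A2) and the error terms $\BZ^{(\theta)}_n,\BZ^{(w)}_n$, which by (A4d) are $\rO(\|\tilde\Bth_n\|^2+\|\tilde\Bw_n\|^2)$ plus a part of order $\Ro(\sqrt{a(n)})$, are carried along as negligible perturbations.

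Next I would bound the fast variable. Writing the fast recursion with its fundamental matrix $\Phi^{(w)}_{n,k}=\prod_{j=k}^{n-1}(\BI+b(j)\BA_{22})$, the Hurwitz property of $\BA_{22}$ makes $\|\Phi^{(w)}_{n,k}\|$ decay geometrically in the rescaled time $\sum_{j=k}^{n-1}b(j)$, so $\hat\Bw_n$ equals a deterministic term that vanishes plus the weighted martingale sum $\sum_k b(k)\,\Phi^{(w)}_{n,k+1}\BM^{(w)}_k$. Using the conditional second moments (A4b) and the higher-moment bound $l>2/\beta$ (A4c), an exponential/maximal martingale inequality together with Borel--Cantelli along a geometrically spaced subsequence of rescaled time gives $\|\hat\Bw_n\|=\rO(\sqrt{b(n)\log(\sum_l b(l))})$ a.s.; here is where the logarithm enters, the error contributions being $\Ro(\sqrt{b(n)})$ by (A4d). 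Since $\tilde\Bw_n=\hat\Bw_n-\BA_{22}^{-1}\BA_{21}\tilde\Bth_n$ and, by the slow-variable bound below together with $a(n)=\Ro(b(n))$, the last term is $\Ro(\sqrt{b(n)})$, the claimed rate for $\|\Bw-\Bw^*\|$ follows.

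Then I would bound the slow variable by feeding the fast estimate into the decoupled slow recursion. Because $b(n)$ decays faster than $a(n)$ and $a(n)=\Ro(b(n))$, the coupling term $a(n)\BA_{12}\hat\Bw_n$ integrated against the slow fundamental matrix $\Phi^{(\theta)}_{n,k}=\prod_{j=k}^{n-1}(\BI+a(j)\BDe)$ is of strictly smaller order than $\sqrt{a(n)\log(\sum_l a(l))}$ and hence does not affect the leading rate. What remains is an effective single-time-scale iterate with Hurwitz drift $\BDe$ driven by the martingale $\BM^{(\theta)}_n-\BA_{12}\BA_{22}^{-1}\BM^{(w)}_n$, whose limiting conditional covariance is exactly $\BGa_\theta$, matching Theorem~\ref{th:Mokkadem}. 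Repeating the fundamental-matrix plus martingale-LIL estimate with $\BDe$ and step $a(n)$ produces $\|\tilde\Bth_n\|=\rO(\sqrt{a(n)\log(\sum_l a(l))})$ a.s.; when $\alpha=1$ the hypothesis $a_0>1/(2e_{\mathrm{min}})$ is precisely what guarantees that $\|\Phi^{(\theta)}_{n,k}\|$ still contracts fast enough for the weighted sum to attain this order rather than a slower one.

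The main obstacle is keeping every estimate almost-sure rather than merely in probability while simultaneously controlling the cross-scale coupling and the nonlinear/error remainders for all large $n$ at once. The delicate point is to show that these perturbations, after integration against the decaying fundamental matrices, stay $\Ro$ of the claimed rate uniformly in $n$; the cleanest route is Freedman-type exponential martingale inequalities combined with Borel--Cantelli along a geometrically spaced subsequence of rescaled time and interpolation in between. This is exactly the mechanism that turns the $\sqrt{a(n)}$ central-limit scaling into the $\sqrt{a(n)\log(\sum_l a(l))}$ almost-sure rate, and it is the step requiring the moment condition $l>2/\beta$ in (A4c).
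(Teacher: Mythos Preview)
The paper does not prove this theorem. Theorem~\ref{th:Mokkadem1} is merely \emph{reported} from Mokkadem \& Pelletier \cite{Mokkadem:06}; the surrounding text says explicitly ``We report a theorem and a proposition from Mokkadem \& Pelletier,'' and no argument is supplied in the appendix. Consequently there is no in-paper proof to compare your proposal against.

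That said, your outline is broadly in the spirit of the original Mokkadem--Pelletier argument: the decoupling change of variable $\hat\Bw_n=\tilde\Bw_n+\BA_{22}^{-1}\BA_{21}\tilde\Bth_n$, the fundamental-matrix representation with Hurwitz contraction, the reduction of the slow iterate to an effective single-scale recursion driven by $\BM^{(\theta)}_n-\BA_{12}\BA_{22}^{-1}\BM^{(w)}_n$, and the bootstrap where the fast rate is fed back into the slow one are all the standard ingredients. Two small points are worth flagging. First, the extra factor is $\sqrt{\log(\sum_l a(l))}$, not $\sqrt{\log\log(\cdot)}$, so calling it a ``LIL-type enlargement'' is slightly imprecise; Mokkadem--Pelletier obtain this through an almost-sure bound on weighted martingale arrays (a law-of-the-logarithm result they developed elsewhere), not the classical LIL, and the moment hypothesis $l>2/\beta$ in (A4c) is exactly what that lemma needs. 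Second, the step you identify as delicate---showing that the nonlinear remainders and the cross-scale term $a(n)\BA_{12}\hat\Bw_n$ remain $\Ro$ of the target rate after integration against $\Phi^{(\theta)}$---is genuinely the crux, and your Freedman-plus-Borel--Cantelli route is a legitimate alternative to their weighted-martingale lemma, though you would have to be careful that exponential inequalities are available under only the polynomial moment bound in (A4c).
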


\paragraph{Comments.}
\begin{enumerate}[label=\textbf{(C\arabic*)}]
\item
Besides the learning steps $a(n)$ and $b(n)$,
the convergence rate is governed by
$\BA_{22}$ for the fast and $\BDe$
for the slow iterate.
$\BDe$ in turn is affected by interaction effects which are
captured by $\BA_{21}$ and $\BA_{12}$ together with the inverse of
$\BA_{22}$.

\end{enumerate}

\subsection{Equal Time-Scale Stochastic Approximation Algorithms}
\label{sec:equalTime}

In this subsection we consider the case when the learning rates have
equal time-scale.

\subsubsection{Equal Time-Scale for Saddle Point Iterates}

If equal time-scales assumed then the iterates revisit infinite often
an environment of the solution \cite{Zhang:07}.
In Zhang 2007, the functions of the
iterates are the derivatives of a Lagrangian with respect to the dual
and primal variables \cite{Zhang:07}.
The iterates are
\begin{align}
\label{eq:iter1Zhang}
\Bth_{n+1} \ &= \ \Bth_n \ + \ a(n) \ \left(\Bh\big(\Bth_n, \Bw_n\big) \ + \
               \BZ^{(\theta)}_n \ + \ \BM^{(\theta)}_{n}\right) \ ,\\
\label{eq:iter2Zhang}
\Bw_{n+1} \ &= \ \Bw_n  \ + \ a(n)\ \left(\Bg\big(\Bth_n, \Bw_n\big) \ + \
              \BZ^{(w)}_n \ + \ \BM^{(w)}_{n}\right) \ .
\end{align}
with the increasing $\sigma$-fields
\begin{align}
\cF_n \ &= \ \sigma(\Bth_l, \Bw_l, \BM^{(\theta)}_{l}, \BM^{(w)}_{l},
          \BZ^{(\theta)}_{l}, \BZ^{(w)}_{l}, l \leq n), \ n \geq 0 \ .
\end{align}
The terms $\BZ^{(\theta)}_n$ and $\BZ^{(w)}_n$ subsum biased estimation
errors.

\paragraph{Assumptions.}
We make the following assumptions:
\begin{enumerate}[label=\textbf{(A\arabic*)}]
\item Assumptions on update function:
$\Bh$ and $\Bg$ are continuous, differentiable, and bounded.
The Jacobians
\begin{align}
\frac{\partial \Bg}{\partial \Bw} \quad \text{and} \quad \frac{\partial \Bh}{\partial \Bth}
\end{align}
are Hurwitz.
A matrix is {\em Hurwitz} if the real part of each eigenvalue is strictly
negative.
This assumptions corresponds to the assumption in \cite{Zhang:07}
that the Lagrangian is concave in $\Bw$ and convex in $\Bth$.

\item Assumptions on noise:

$\{\BM^{(\theta)}_{n}\}$ and $\{\BM^{(w)}_{n}\}$
are a martingale difference sequences
w.r.t.\ the increasing $\sigma$-fields $\cF_n$.
Furthermore they are mutually independent.

Bounded second moment:
\begin{align}
\rE \left[\|\BM^{(\theta)}_{n+1}\|^2 \mid \cF_n \right]
\ &< \ \infty \ \ \text{a.s.} \ , \\
\rE \left[\|\BM^{(w)}_{n+1}\|^2 \mid \cF_n \right]
\ &< \ \infty \ \ \text{a.s.} \ .
\end{align}

\item Assumptions on the learning rate:
\begin{align}
&a(n)\ > \ 0 \quad , \quad a(n)\ \to \ 0 \quad , \quad \sum_{n} a(n) \ = \ \infty \quad , \quad
\sum_{n} a^2(n) \ < \ \infty \ .
\end{align}

\item Assumption on the biased error:

Boundedness:
\begin{align}
\lim_n \sup  \| \BZ^{(\theta)}_n \| \  &\leq  \ \alpha^{(\theta)} \ \text{a.s.} \\
\lim_n \sup \| \BZ^{(w)}_n \| \  &\leq  \ \alpha^{(w)} \ \text{a.s.}
\end{align}

\end{enumerate}

\paragraph{Theorem.}

Define the ``contraction region'' $A_{\eta}$ as follows:
\begin{align}
A_{\eta} \ &= \ \{(\Bth,\Bw): \alpha^{(\theta)} \geq \eta \
             \|\Bh(\Bth,\Bw)\| \quad \text{or} \quad  \alpha^{(w)} \geq \eta
             \ \|\Bg(\Bth,\Bw)\|, \ 0 \leq \eta < 1 \} \ .
\end{align}

\begin{theorem}[Zhang]
\label{th:Zhang}
Under above assumptions the iterates return to $A_{\eta}$
infinitely often with probability one (a.s.).
\end{theorem}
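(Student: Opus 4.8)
The plan is to use the ODE method of stochastic approximation, reducing the discrete almost-sure claim to a statement about trajectories of an associated mean-field dynamics, and then to exhibit a Lyapunov function that is forced to decrease whenever the iterates sit outside the contraction region $A_{\eta}$. First I would form the unperturbed mean ODE $\dot{\Bth}=\Bh(\Bth,\Bw)$, $\dot{\Bw}=\Bg(\Bth,\Bw)$ together with its bias-perturbed counterpart $\dot{\Bth}=\Bh+\BZ^{(\theta)}$, $\dot{\Bw}=\Bg+\BZ^{(w)}$. Since $\{\BM^{(\theta)}_n\}$ and $\{\BM^{(w)}_n\}$ are martingale differences with bounded conditional second moments and the steps satisfy $\sum_n a(n)=\infty$, $\sum_n a^2(n)<\infty$, the martingale noise is asymptotically negligible, so the interpolated iterates track --- in the Kushner--Clark / Hirsch sense \cite{Borkar:97,Hirsch:89} --- the bias-perturbed dynamics. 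Because the bias is only bounded in $\limsup$ by $\alpha^{(\theta)}$ and $\alpha^{(w)}$, these are most naturally written as the differential inclusion $\dot{\Bth}\in\Bh+\{v:\|v\|\leq\alpha^{(\theta)}\}$, $\dot{\Bw}\in\Bg+\{v:\|v\|\leq\alpha^{(w)}\}$. It then suffices to show every trajectory of this inclusion re-enters $A_{\eta}$ in finite time and to transfer this back to the iterates.

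Next I would build the Lyapunov function from the saddle-point structure. Because $\partial\Bh/\partial\Bth$ and $\partial\Bg/\partial\Bw$ are Hurwitz --- equivalently the Lagrangian $L$ is convex in $\Bth$ and concave in $\Bw$ --- the quadratic $V(\Bth,\Bw)=\tfrac12\|\Bth-\Bth^{*}\|^2+\tfrac12\|\Bw-\Bw^{*}\|^2$, with $(\Bth^{*},\Bw^{*})$ the saddle point where $\Bh=\Bg=\BZe$, satisfies $\dot V\leq0$ along the unperturbed flow via the convexity/concavity inequalities. Linearizing at the equilibrium, the indefinite cross terms in $\dot V$ cancel by symmetry of the Hessian, leaving $\dot V\approx -p^{T}H_{\theta\theta}\,p+q^{T}H_{ww}\,q$ with $p=\Bth-\Bth^{*}$, $q=\Bw-\Bw^{*}$; the Hurwitz Jacobians force $H_{\theta\theta}\succ0$ and $H_{ww}\prec0$, giving a strict local decrease $\dot V\leq-2\mu V$. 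Moreover the saddle point lies in the interior of $A_{\eta}$, since there $\|\Bh\|=\|\Bg\|=0<\alpha^{(\theta)}/\eta,\alpha^{(w)}/\eta$.

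The heart of the argument is the contraction estimate along the perturbed dynamics. Differentiating $V$ adds to the good term a bias contribution bounded by $\|\Bth-\Bth^{*}\|\,\alpha^{(\theta)}+\|\Bw-\Bw^{*}\|\,\alpha^{(w)}$. Outside $A_{\eta}$ both defining inequalities fail, so $\|\Bh\|>\alpha^{(\theta)}/\eta$ and $\|\Bg\|>\alpha^{(w)}/\eta$ with $\eta<1$, hence $\alpha^{(\theta)}<\eta\|\Bh\|$ and $\alpha^{(w)}<\eta\|\Bg\|$. Using that a Hurwitz (hence nonsingular) Jacobian makes the fields coercive, so that $\|\Bth-\Bth^{*}\|,\|\Bw-\Bw^{*}\|\lesssim\|\Bh\|+\|\Bg\|$, the positive bias term is strictly dominated by the negative field-controlled term, yielding $\dot V<0$ with a value bounded away from zero on the complement of $A_{\eta}$ intersected with $\{V\geq\rho\}$. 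Consequently no trajectory of the inclusion can remain outside $A_{\eta}$ for all time, for otherwise $V$ would decrease without bound, contradicting $V\geq0$. Restarting the argument from any later time and transferring through the tracking result gives return to $A_{\eta}$ infinitely often almost surely.

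I expect the main obstacle to be twofold. The delicate modelling step is that the bias bounds hold only in $\limsup$ and almost surely, so the clean differential-inclusion tracking must be justified only on the tail of the trajectory, past the random time after which $\|\BZ^{(\theta)}_n\|\leq\alpha^{(\theta)}+\varepsilon$ and $\|\BZ^{(w)}_n\|\leq\alpha^{(w)}+\varepsilon$; the slack $\eta<1$ must absorb this $\varepsilon$. The genuinely technical step is globalizing the strict-decrease estimate: the cancellation of cross terms and the coercive lower bound relating $\|\Bh\|,\|\Bg\|$ to the distance to equilibrium are transparent in the linearized regime, but on the whole region outside $A_{\eta}$, possibly far from $(\Bth^{*},\Bw^{*})$, one must keep the constants uniform, leaning on the boundedness of $\Bh,\Bg$ and of the iterates to control the field--distance comparison away from the equilibrium.
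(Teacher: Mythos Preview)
The paper does not reproduce a proof of this statement; it attributes the result to Zhang~\cite{Zhang:07} and records one structural remark about that proof in comment~(C1): ``The proof of the theorem in \cite{Zhang:07} does not use the saddle point condition and not the fact that the functions of the iterates are derivatives of the same function.'' Your proposal runs in exactly the opposite direction. Your Lyapunov function $V=\tfrac12\|\Bth-\Bth^{*}\|^2+\tfrac12\|\Bw-\Bw^{*}\|^2$ presupposes a saddle point $(\Bth^{*},\Bw^{*})$, and the decisive step --- the cancellation of the cross terms $p^{T}A_{12}q+q^{T}A_{21}p$ in $\dot V$ --- holds precisely because $A_{12}=-A_{21}^{T}$, which is nothing other than the statement that $\Bh$ and $\Bg$ are (minus and plus) gradients of one and the same Lagrangian. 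Thus your argument leans on both of the structural features the paper says Zhang's proof avoids; it is not the same strategy, and it does not extend to the more general, non-Lagrangian setting that comment~(C1) is emphasising.

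There is also a concrete gap in your contraction step, beyond the difficulty you already flag. You invoke coercivity of the fields, $\|\Bth-\Bth^{*}\|+\|\Bw-\Bw^{*}\|\lesssim\|\Bh\|+\|\Bg\|$, to dominate the bias contribution outside $A_{\eta}$. But assumption~(A1) requires $\Bh$ and $\Bg$ to be \emph{bounded}, while no a~priori boundedness of the iterates is assumed in this subsection; the left-hand side can therefore be arbitrarily large while the right-hand side stays uniformly bounded, and the inequality fails globally. Your suggested fix of ``leaning on the boundedness of $\Bh,\Bg$'' thus cuts the wrong way, and the fallback of bounded iterates is not among the hypotheses here. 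Without this comparison you cannot show that the bias term $\|\Bth-\Bth^{*}\|\alpha^{(\theta)}+\|\Bw-\Bw^{*}\|\alpha^{(w)}$ is dominated, so the uniform $\dot V<0$ estimate on the complement of $A_{\eta}$ is not established and the infinitely-often-return conclusion does not follow from your outline.
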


\paragraph{Comments.}
\begin{enumerate}[label=\textbf{(C\arabic*)}]
\item
The proof of the theorem in \cite{Zhang:07} does not use the
saddle point condition and not the fact that the functions of the
iterates are derivatives of the same function.

\item
For the unbiased case, Zhang showed in Theorem~3.1 of \cite{Zhang:07}
that the iterates converge.
However, he used the saddle point condition of the Lagrangian.
He considered
iterates with functions that are the derivatives of a Lagrangian
with respect to the dual
and primal variables \cite{Zhang:07}.

\end{enumerate}

\subsubsection{Equal Time Step for Actor-Critic Method}
If equal time-scales assumed then the iterates revisit infinite often
an environment of the solution of DiCastro \& Meir \cite{DiCastro:10}.
The iterates of DiCastro \& Meir are derived for actor-critic
learning.

To present the actor-critic update iterates,
we have to define some functions and terms.
$\Bmu(\Bu \mid \Bx, \Bth)$ is the policy function parametrized by
$\Bth \in \dR^m$ with observations $\Bx \in \cX$ and actions $\Bu \in \cU$.
A Markov chain given by $\rP(\By \mid \Bx, \Bu)$ gives the next
observation $\By$ using the observation $\Bx$ and the action $\Bu$.
In each state $\Bx$ the agent receives a reward $r(\Bx)$.

The average reward per stage is for the recurrent state $\Bx^{*}$:
\begin{align}
\tilde{\eta}(\Bth) \ &= \ \lim_{T \to \infty} \rE \left[
\frac{1}{T} \sum_{n=0}^{T-1} r(\Bx_n) \mid \Bx_0=\Bx^{*},\Bth
\right] \ .
\end{align}
The estimate of $\tilde{\eta}$ is denoted by $\eta$.

The differential value function is
\begin{align}
\tilde{h}(\Bx,\Bth) \ &= \ \rE \left[
 \sum_{n=0}^{T-1} ( r(\Bx_n) \ - \ \tilde{\eta}(\Bth) )
\mid \Bx_0=\Bx,\Bth
\right] \ .
\end{align}

The temporal difference is
\begin{align}
\tilde{d}(\Bx,\By,\Bth) \ &= \ r(\Bx) \ - \ \tilde{\eta}(\Bth)
\ + \ \tilde{h}(\By,\Bth) \ - \ \tilde{h}(\Bx,\Bth) \ .
\end{align}
The estimate of $\tilde{d}$ is denoted by $d$.

The likelihood ratio derivative $\BPs \in \dR^m$ is
\begin{align}
\BPs(\Bx,\Bu,\Bth) \ &= \frac{\nabla_{\theta} \Bmu(\Bu \mid \Bx, \Bth)}{\Bmu(\Bu \mid \Bx, \Bth)} \ .
\end{align}

The value function $\tilde{h}$ is approximated by
\begin{align}
h(\Bx,\Bw) \ &= \ \Bph(\Bx)^T \ \Bw \ ,
\end{align}
where $\Bph(\Bx) \in \dR^k$.
We define $\BPh \in \dR^{|\cX| \times k}$
\begin{align}
\BPh \ &= \
\begin{pmatrix}
\Bph_1(\Bx_1) & \Bph_2(\Bx_1) & \ldots & \Bph_k(\Bx_1) \\
\Bph_1(\Bx_2) & \Bph_2(\Bx_2) & \ldots & \Bph_k(\Bx_2) \\
\vdots & \vdots & & \vdots & \\
\Bph_1(\Bx_{|\cX|}) & \Bph_2(\Bx_{|\cX|}) & \ldots & \Bph_k(\Bx_{|\cX|})
\end{pmatrix}
\end{align}
and
\begin{align}
h(\Bw) \ &= \ \BPh \ \Bw \ .
\end{align}

For TD($\lambda$) we have an eligibility trace:
\begin{align}
e_n \ &= \ \lambda \ e_{n-1} \ + \  \Bph(\Bx_n) \ .
\end{align}

We define the approximation error with optimal parameter $\Bw^{*}(\Bth)$:
\begin{align}
\epsilon_{\mathrm{app}}(\Bth) \ &= \ \inf_{\Bw \in \dR^k} \| \tilde{h}(\Bth)
\ - \ \BPh \ \Bw \|_{\pi(\Bth)} \ = \ \| \tilde{h}(\Bth)
\ - \ \BPh \ \Bw^{*}(\Bth) \|_{\pi(\Bth)} \ ,
\end{align}
where $\pi(\Bth)$ is an projection operator into the span of
$\BPh \Bw$.
We bound this error by
\begin{align}
\epsilon_{\mathrm{app}} \ &= \ \sup_{\Bth \in \dR^k}
 \epsilon_{\mathrm{app}}(\Bth) \ .
\end{align}

We denoted by $\tilde{\eta}$, $\tilde{d}$, and $\tilde{h}$ the exact
functions and used for their approximation $\eta$, $d$, and $h$,
respectively.
We have learning rate adjustments $\Gamma_{\eta}$ and $\Gamma_{w}$ for the
critic.

The update rules are:\newline
{\bf Critic:}
\begin{align}
\eta_{n+1} \ &= \ \eta_{n} \ + \ a(n) \ \Gamma_{\eta} \
(r(\Bx_n) \ - \ \eta_n) \ , \\
h(\Bx,\Bw_n) \ &= \ \Bph(\Bx)^T \ \Bw_n \ , \\
d(\Bx_n,\Bx_{n+1},\Bw_n) \ &= \
r(\Bx_n) \ - \ \eta_n
\ + \ h(\Bx_{n+1},\Bw_n) \ - \ h(\Bx_n,\Bw_n) \ , \\
e_n \ &= \ \lambda \ e_{n-1} \ + \  \Bph(\Bx_n) \ , \\
\Bw_{n+1} \ &= \ \Bw_{n} \ + \ a(n) \ \Gamma_{w} \
d(\Bx_n,\Bx_{n+1},\Bw_n) \ e_n \ .
\end{align}
{\bf Actor:}
\begin{align}
\Bth_{n+1} \ &= \ \Bth_{n} \ + \ a(n) \ \BPs(\Bx_n,\Bu_n,\Bth_n) \
d(\Bx_n,\Bx_{n+1},\Bw_n) \ .
\end{align}

\paragraph{Assumptions.}
We make the following assumptions:
\begin{enumerate}[label=\textbf{(A\arabic*)}]
\item Assumption on rewards:

The rewards $\{ r(\Bx)\}_{\Bx \in \cX}$ are uniformly bounded by a
finite constant $B_r$.

\item Assumption on the Markov chain:

Each Markov chain for each $\Bth$ is aperiodic, recurrent, and
irreducible.

\item Assumptions on the policy function:

The conditional probability function $\Bmu(\Bu \mid \Bx, \Bth)$
is twice differentiable. Moreover, there exist positive constants,
$B_{\mu_1}$ and $B_{\mu_2}$, such that for all $\Bx \in \cX$,
$\Bu  \in \cU$, $\Bth \in \dR^m$ and $1 \leq l_1,l_2 \leq m$ we
have
\begin{align}
&\left\| \frac{\partial \Bmu(\Bu \mid \Bx, \Bth)}{\partial
  \Bth_l}\right\| \ \leq \ B_{\mu_1} \ , \quad
\left\| \frac{\partial^2 \Bmu(\Bu \mid \Bx, \Bth)}{\partial
  \Bth_{l_1} \ \partial \Bth_{l_2}}\right\| \ \leq \ B_{\mu_2} \ .
\end{align}

\item Assumption on the likelihood ratio derivative:

For all $\Bx \in \cX$,
$\Bu \in \cU$, and $\Bth \in \dR^m$, there exists a positive
constant $B_{\Psi}$, such that
\begin{align}
\| \BPs(\Bx,\Bu,\Bth) \|_2 \ &\leq \ B_{\Psi} \ < \ \infty \ ,
\end{align}
where $\| . \|_2$ is the Euclidean $L_2$ norm.

\item Assumptions on the approximation space given by $\BPh$:

The columns of the matrix $\BPh$ are independent, that is, the form a
basis of dimension $k$.
The norms of the columns vectors of the matrix $\BPh$ are bounded
above by $1$, that is, $\| \Bph_l \|_2 \leq 1$ for $1 \leq l \leq k$.

\item Assumptions on the learning rate:
\begin{align}
&\sum_{n} a(n) \ = \ \infty \quad , \quad
\sum_{n} a^2(n) \ < \ \infty \ .
\end{align}

\end{enumerate}

\paragraph{Theorem.}

The algorithm converged if $\nabla_{\theta} \tilde{\eta}(\Bth) =
\BZe$, since the actor reached a stationary point where the updates
are zero.
We assume that $\| \nabla_{\theta} \tilde{\eta}(\Bth)\|$
hints at how close we are to the convergence point.

The next theorem from DiCastro \& Meir \cite{DiCastro:10}
implies that the trajectory visits a neighborhood of
a local maximum infinitely often.
Although it may leave the local vicinity of the maximum, it is
guaranteed to return to it infinitely often.

\begin{theorem}[DiCastro \& Meir]
\label{th:DiCastro}
Define
\begin{align}
B_{\nabla \tilde{\eta}} \ &= \ \frac{B_{\Delta t  d 1}}{\Gamma_{w}}
\ + \ \frac{B_{\Delta t  d 2}}{\Gamma_{\eta}} \ + \
B_{\Delta t  d 3} \ \epsilon_{\mathrm{app}} \ ,
\end{align}
where $B_{\Delta t  d 1}$, $B_{\Delta t  d 2}$, and
$B_{\Delta t  d 3}$ are finite constants depending on the Markov
decision process and the agent parameters.

Under above assumptions
\begin{align}
\lim_{t \to \infty} \inf \ \| \nabla_{\theta} \tilde{\eta}(\Bth_t)\|
\ &\leq \ B_{\nabla \tilde{\eta}} \ .
\end{align}
The trajectory visits a neighborhood of
a local maximum infinitely often.
\end{theorem}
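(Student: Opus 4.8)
The plan is to treat the actor recursion as a biased, noisy stochastic gradient ascent on the average reward $\tilde{\eta}(\Bth)$ and to show, via a Lyapunov argument with $\tilde{\eta}$ as the Lyapunov function, that the iterate cannot remain in the region where the gradient norm exceeds $B_{\nabla \tilde{\eta}}$. First I would invoke the policy gradient theorem, which under the assumptions on the chain (aperiodic, recurrent, irreducible) and the bounded, twice-differentiable policy gives the exact gradient as an expectation over the stationary distribution $\pi(\Bth)$: $\nabla_{\theta} \tilde{\eta}(\Bth) = \rE_{\pi(\Bth)}\!\left[ \BPs(\Bx,\Bu,\Bth)\, \tilde{d}(\Bx,\By,\Bth)\right]$. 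This identifies the idealized actor increment $\BPs \cdot \tilde{d}$, built from the \emph{exact} temporal difference using $\tilde{\eta}$ and $\tilde{h}$, as an unbiased estimate of the true gradient.

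Next I would decompose the actual increment $\BPs(\Bx_n,\Bu_n,\Bth_n)\, d(\Bx_n,\Bx_{n+1},\Bw_n)$ driving the actor into three pieces: the true gradient $\nabla_{\theta}\tilde{\eta}(\Bth_n)$, a \emph{bias} term, and a martingale-difference \emph{noise} term. The bias is the gap between the approximate TD $d$ and the exact TD $\tilde{d}$, and it splits cleanly into (i) the error in the average-reward estimate $\eta_n - \tilde{\eta}(\Bth_n)$, (ii) the error in the value estimate $h(\cdot,\Bw_n) - \tilde{h}(\cdot,\Bth_n)$, and (iii) the intrinsic projection error $\epsilon_{\mathrm{app}}$ of the linear span of $\BPh$. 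Using the boundedness assumptions — $\|\BPs\|_2 \le B_{\Psi}$, $|r| \le B_r$, bounded columns of $\BPh$ — together with the critic gains $\Gamma_{\eta}$ and $\Gamma_{w}$, which accelerate the $\eta$- and $\Bw$-recursions so that they track their targets more tightly, I would bound the stationary magnitude of the three bias contributions by $B_{\Delta t d 2}/\Gamma_{\eta}$, $B_{\Delta t d 1}/\Gamma_{w}$, and $B_{\Delta t d 3}\,\epsilon_{\mathrm{app}}$ respectively, whose sum is exactly $B_{\nabla \tilde{\eta}}$.

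With this decomposition the actor obeys $\Bth_{n+1} = \Bth_n + a(n)\big(\nabla_{\theta}\tilde{\eta}(\Bth_n) + \Be_n + \BM_n\big)$ with $\|\Be_n\|$ asymptotically at most $B_{\nabla \tilde{\eta}}$ and $\BM_n$ a martingale difference of bounded second moment. I would then apply the ODE method for stochastic approximation: under the learning-rate assumption ($\sum_n a(n)=\infty$, $\sum_n a^2(n)<\infty$) the noise averages out almost surely, and the iterate asymptotically tracks the perturbed ascent flow $\dot{\Bth} = \nabla_{\theta}\tilde{\eta}(\Bth) + \Be$. Taking $\tilde{\eta}$ as Lyapunov function, its drift along the flow is $\langle \nabla_{\theta}\tilde{\eta},\, \nabla_{\theta}\tilde{\eta} + \Be\rangle \ge \|\nabla_{\theta}\tilde{\eta}\|\big(\|\nabla_{\theta}\tilde{\eta}\| - B_{\nabla \tilde{\eta}}\big)$, which is strictly positive whenever $\|\nabla_{\theta}\tilde{\eta}\| > B_{\nabla \tilde{\eta}}$. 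Since $\tilde{\eta}$ is bounded above by the bounded rewards, it cannot increase indefinitely, so the trajectory must enter $\{\|\nabla_{\theta}\tilde{\eta}\| \le B_{\nabla \tilde{\eta}}\}$ infinitely often, giving the claimed $\liminf$ bound.

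The hardest step is the quantitative bias analysis in the second paragraph: controlling how faithfully the \emph{coupled} critic recursions — the $\eta$-update and the TD($\lambda$)-with-eligibility-trace $\Bw$-update — track the moving targets $\tilde{\eta}(\Bth_n)$ and $\tilde{h}(\cdot,\Bth_n)$ when actor and critic run on the \emph{same} time-scale. Because the critic never fully equilibrates, one obtains only a neighborhood result rather than exact convergence; extracting the explicit $1/\Gamma_{w}$, $1/\Gamma_{\eta}$, and $\epsilon_{\mathrm{app}}$ dependence requires a careful perturbation estimate of the TD operator and of the sensitivity of the stationary distribution to $\Bth$, which is the technical core of DiCastro \& Meir.
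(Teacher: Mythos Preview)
The paper does not supply its own proof of this theorem: it merely quotes the result from DiCastro \& Meir \cite{DiCastro:10} and follows it with a short ``Comments'' paragraph (noting that larger critic gains $\Gamma_w,\Gamma_\eta$ shrink the neighborhood, and that the conclusion agrees with Zhang~\cite{Zhang:07}). There is therefore nothing in the paper to compare your argument against.

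That said, your sketch is a faithful outline of the DiCastro--Meir argument itself: the policy-gradient identity to identify the ideal actor increment, the decomposition of the actual increment into true gradient plus a bias (split into the $\eta$-tracking error, the $\Bw$-tracking error, and the irreducible approximation error $\epsilon_{\mathrm{app}}$) plus martingale noise, and finally a Lyapunov/ODE argument using boundedness of $\tilde{\eta}$ to force infinitely many returns to the small-gradient set. Your identification of the hard step --- controlling the critic's tracking error when actor and critic share a time-scale, and extracting the $1/\Gamma_w$ and $1/\Gamma_\eta$ dependence --- is exactly where the work lies in the original source, and is also what the paper's Comment~(C1) is alluding to.
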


\paragraph{Comments.}
\begin{enumerate}[label=\textbf{(C\arabic*)}]
\item
The larger the critic learning rates
$\Gamma_{w}$ and $\Gamma_{\eta}$ are,
the smaller is the region around the local maximum.

\item
The results are in agreement with those of Zhang 2007 \cite{Zhang:07}.

\item
Even if the results are derived for a special actor-critic setting,
they carry over to a more general setting of the iterates.

\end{enumerate}

%%%%%%%%%%%%%%%%%%%%%%%%%%%%%%%%%%%%%%%%%%%%%%%%%%%%%%%%%%
%%%%%%%%%%%%%%%%%%%%%%%%%%%%%%%%%%%%%%%%%%%%%%%%%%%%%%%%%%
%%%%%%%%%%%%%%%%%%%%%%%%%%%%%%%%%%%%%%%%%%%%%%%%%%%%%%%%%%
\section{ADAM Optimization as Stochastic Heavy Ball with Friction}
\label{sec:adam}

The Nesterov Accelerated Gradient
Descent (NAGD) \cite{Nesterov:83} has raised considerable interest due to its numerical
simplicity and its low complexity. Previous to NAGD and its derived
methods there was Polyak's Heavy Ball method \cite{Polyak:64}.
The idea of the Heavy Ball is a ball that evolves over the graph of a function $f$
with damping (due to friction) and acceleration. Therefore, this
second-order dynamical system can be described by the ODE for the
Heavy Ball with Friction (HBF) \cite{Gadat:16}:
\begin{align}
&\ddot{\Bth}_t \ + \ a(t) \ \dot{\Bth}_t \ + \ \nabla f(\Bth_t)
\ = \ \BZe  \ ,
\end{align}
where $a(n)$ is the damping coefficient with
$a(n)=\frac{a}{n^{\beta}}$ for $\beta \in (0,1]$.
This ODE is equivalent to the integro-differential equation
\begin{align}
&\dot{\Bth}_t \ = \ - \ \frac{1}{k(t)} \ \int_{0}^{t} h(s) \nabla
  f(\Bth_s) \Rd s \ ,
\end{align}
where $k$ and $h$ are two memory functions related to $a(t)$.
For polynomially memoried HBF we have $k(t)=t^{\alpha+1}$ and $h(t)=(\alpha +1 ) t^{\alpha}$ for some positive
$\alpha$, and for exponentially
memoried HBF we have $k(t)=\lambda \exp(\lambda \ t)$ and
$h(t)=\exp(\lambda \ t)$.
For the sum of the learning rates, we obtain
\begin{align}
\sum_{l=1}^{n} a(l) \ &= \
a \
\begin{cases}
\ln(n) \ + \ \gamma \ + \ \frac{1}{2n}
\ + \ \rO\big(\frac{1}{n^2} \big) & \mbox{ for } \beta=1 \\
\frac{n^{1-\beta}}{1-\beta} & \mbox{ for } \beta<1
\end{cases} \ ,
\end{align}
where $\gamma=0.5772156649$ is the Euler-Mascheroni constant.

Gadat et al.\ derived a discrete and stochastic version of the HBF \cite{Gadat:16}:
\begin{align}
\label{eq:hbf}
\Bth_{n+1} \ &= \ \Bth_n \ - \ a(n+1) \ \Bm_n \\ \nonumber
\Bm_{n+1} \ &= \ \Bm_n \ + \  a(n+1) \ r(n) \
\big( \nabla f(\Bth_n) \ - \ \Bm_n \big) \ + \
a(n+1) \ r(n) \ \BM_{n+1} \ ,
\end{align}
where
\begin{align}
r(n) \ &= \
\begin{cases}
r  & \mbox{ for exponentially memoried HBF} \\
\frac{r}{\sum_{l=1}^{n} a(l) } & \mbox{ for polynomially memoried HBF}
\end{cases} \ .
\end{align}

This recursion can be rewritten as
\begin{align}
\label{eq:hbfADAM}
\Bth_{n+1} \ &= \ \Bth_n \ - \ a(n+1) \ \Bm_n \\
\Bm_{n+1} \ &= \ \big( 1 \ - \  a(n+1) \ r(n) \big) \ \Bm_n \ + \  a(n+1) \ r(n) \
\big( \nabla f(\Bth_n) \ + \ \BM_{n+1} \big) \ .
\end{align}
The recursion Eq.~\eqref{eq:hbfADAM} is the first moment update of ADAM \cite{Kingma:14}.

For the term $r(n) a(n)$ we obtain for the polynomial memory the approximations
\begin{align}
r(n) \ a(n) \ &\approx \
r \ \begin{cases}
\frac{1}{n \ \log n} & \mbox{ for } \beta=1 \\
\frac{1\ - \ \beta}{n} & \mbox{ for } \beta<1
\end{cases} \ ,
\end{align}

Gadat et al.\ showed that the recursion
Eq.~\eqref{eq:hbf} converges
for functions with at most quadratic grow \cite{Gadat:16}.
The authors mention that convergence can be proofed
for functions $f$ that are $L$-smooth, that is, the gradient is $L$-Lipschitz.

Kingma et al.\ \cite{Kingma:14} state in Theorem 4.1
convergence of ADAM while assuming that $\beta_1$, the
first moment running average coefficient, decays exponentially.
Furthermore they assume that $\frac{\beta_1^2}{\sqrt{\beta_2}}<1$
and the learning rate $\alpha_t$ decays with
$\alpha_t=\frac{\alpha}{\sqrt{t}}$.

ADAM divides $\Bm_n$ of the recursion Eq.~\eqref{eq:hbfADAM}
by the bias-corrected second raw moment estimate.
Since the bias-corrected second raw moment estimate changes slowly,
we consider it as an error.

\begin{align}
\frac{1}{\sqrt{v+\Delta v}} \ &\approx \
\frac{1}{\sqrt{v}} \ - \ \frac{1}{2 \ v \ \sqrt{v}} \ \Delta v
\ + \ \rO(\Delta v^2) \ .
\end{align}

ADAM assumes the second moment
$\rE \left[g^2\right]$ to be stationary with its  approximation
$v_n$:
\begin{align}
v_n \ &= \ \frac{1 \ - \ \beta_2}{1 \ - \ \beta_2^n} \ \sum_{l=1}^{n} \beta_2^{n-l} \ g_l^2 \ .
\end{align}

\begin{align}
\Delta_n v_n \ &= \  v_n \ - \ v_{n-1} \ = \
\frac{1 \ - \ \beta_2}{1 \ - \ \beta_2^n} \
\sum_{l=1}^{n} \beta_2^{n-l} \ g_l^2 \ - \
\frac{1 \ - \ \beta_2}{1 \ - \ \beta_2^{n-1}} \
\sum_{l=1}^{n-1} \beta_2^{n-l-1} \ g_l^2 \\ \nonumber
&= \ \frac{1 \ - \ \beta_2}{1 \ - \ \beta_2^n} \  g_n^2 \ + \
\frac{\beta_2 \ (1 \ - \ \beta_2)}{1 \ - \ \beta_2^n} \
\sum_{l=1}^{n-1} \beta_2^{n-l-1} \ g_l^2 \ - \
\frac{1 \ - \ \beta_2}{1 \ - \ \beta_2^{n-1}} \
\sum_{l=1}^{n-1} \beta_2^{n-l-1} \ g_l^2 \\ \nonumber
&= \ \frac{1 \ - \ \beta_2}{1 \ - \ \beta_2^n} \  \left( g_n^2 \ + \
\big(\beta_2 \ - \  \frac{1 \ - \ \beta_2^n}{1 \ - \ \beta_2^{n-1}}
\big)  \ \sum_{l=1}^{n-1} \beta_2^{n-l-1} \ g_l^2 \right) \\ \nonumber
&= \ \frac{1 \ - \ \beta_2}{1 \ - \ \beta_2^n} \  \left( g_n^2 \ - \
 \frac{1 \ - \ \beta_2}{1 \ - \ \beta_2^{n-1}} \  \sum_{l=1}^{n-1}
  \beta_2^{n-l-1} \ g_l^2 \right) \ .
\end{align}

Therefore
\begin{align}
\rE \left[\Delta_n v_n \right] \ &= \ \rE \left[ v_n \ - \ v_{n-1} \right] \ = \
 \frac{1 \ - \ \beta_2}{1 \ - \ \beta_2^n} \  \left( \rE \left[g^2\right] \ - \
 \frac{1 \ - \ \beta_2}{1 \ - \ \beta_2^{n-1}} \  \sum_{l=1}^{n-1}
  \beta_2^{n-l-1} \ \rE \left[g^2\right] \right) \\ \nonumber
&= \  \frac{1 \ - \ \beta_2}{1 \ - \ \beta_2^n} \
\left( \rE \left[g^2\right] \ - \  \rE \left[g^2\right]  \right)
\ = \ 0 \ .
\end{align}

We are interested in the difference of actual stochastic $v_n$ to the
true stationary $v$:
\begin{align}
\Delta v_n \ &= \  v_n \ - \ v \ = \
\frac{1 \ - \ \beta_2}{1 \ - \ \beta_2^n} \  \sum_{l=1}^{n}
        \beta_2^{n-l} \ \left( g_l^2 \ - \ v \right) \ .
\end{align}

For a stationary second moment of $\Bm_n$ and $\beta_2=1-\alpha a(n+1)r(n)$, we have
$\Delta v_n \propto a(n+1)r(n)$.
We use a linear approximation to ADAM's second moment normalization
$1/\sqrt{v+\Delta v_n} \approx 1/\sqrt{v} - 1/(2 v \sqrt{v})  \Delta v_n
+ \rO(\Delta^2 v_n)$.
If we set $\BM_{n+1}^{(v)}=- (\Bm_n \Delta v_n)/(2 v
\sqrt{v}a(n+1)r(n))$, then $\Bm_n / \sqrt{v_n} \approx \Bm_n / \sqrt{v} +
a(n+1)r(n)\BM_{n+1}^{(v)}$ and
$\rE \left[\BM_{n+1}^{(v)} \right] = 0$, since $\rE \left[g_l^2 - v \right] = 0$.
For a stationary second moment of $\Bm_n$, $\{\BM^{(v)}_n\}$ is a martingale
difference sequence with a bounded second moment.
Therefore $\{\BM^{(v)}_{n+1}\}$ can be subsumed into  $\{\BM_{n+1}\}$ in update
rules Eq.~\eqref{eq:hbfADAM}. The factor $1 / \sqrt{v}$ can be
incorporated into $a(n+1)$ and $r(n)$.

\section{Experiments: Additional Information}
\label{sec:apx_experiments}

\subsection{WGAN-GP on Image Data.}

\begin{table}[htp]
\begin{center}
\caption[Results WGAN-GP on Image Data]{The performance of WGAN-GP trained with
the original procedure and with TTUR on CIFAR-10 and LSUN Bedrooms. We compare
the performance with respect to the FID at the optimal number of
iterations during training and wall-clock time in minutes.}
\label{tab:wgan_gp_image}
\begin{tabular}{lllcccllccc}
 \toprule
 dataset & method & b, a & iter & time(m) & FID & method & b = a & iter & time(m) & FID \\
 \midrule
 CIFAR-10 & TTUR & 3e-4, 1e-4 & 168k & 700 &  {\bf 24.8} &
 orig & 1e-4 & 53k & 800 & 29.3 \\
 LSUN & TTUR & 3e-4, 1e-4 & 80k & 1900 &  {\bf 9.5} &
 orig & 1e-4 & 23k & 2010 & 20.5 \\
 \bottomrule
 \end{tabular}
 \end{center}
\end{table}

\subsection{WGAN-GP on the One Billion Word Benchmark.}

\begin{table}[H]
\centering
\caption[Samples of the One Billion Word benchmark generated by
WGAN-GP.]{Samples generated by WGAN-GP trained on fhe One Billion Word benchmark
with TTUR (left) the original method (right).
\label{tab:lang_samp}}
\begin{tabular}{ll}
\begin{minipage}{0.49\textwidth}
\begin{verbatim}

Dry Hall Sitning tven the concer
There are court phinchs hasffort
He scores a supponied foutver il
Bartfol reportings ane the depor
Seu hid , it 's watter 's remold
Later fasted the store the inste
Indiwezal deducated belenseous K
Starfers on Rbama 's all is lead
Inverdick oper , caldawho 's non
She said , five by theically rec
RichI , Learly said remain .````
Reforded live for they were like
The plane was git finally fuels
The skip lifely will neek by the
SEW McHardy Berfect was luadingu
But I pol rated Franclezt is the
\end{verbatim}
\end{minipage}
&
\begin{minipage}{0.49\textwidth}
\begin{verbatim}

No say that tent Franstal at Bra
Caulh Paphionars tven got corfle
Resumaly , braaky facting he at
On toipe also houd , aid of sole
When Barrysels commono toprel to
The Moster suprr tent Elay diccu
The new vebators are demases to
Many 's lore wockerssaow 2 2 ) A
Andly , has le wordd Uold steali
But be the firmoters is no 200 s
Jermueciored a noval wan 't mar
Onles that his boud-park , the g
ISLUN , The crather wilh a them
Fow 22o2 surgeedeto , theirestra
Make Sebages of intarmamates , a
Gullla " has cautaria Thoug ly t
\end{verbatim}
\end{minipage}
\end{tabular}
\end{table}

\begin{table}[htp]
\begin{center}
\caption[Results WGAN-GP on One Billion Word]{The performance of WGAN-GP trained
with the original procedure and with TTUR on the One Billion Word Benchmark. We
compare the performance with respect to the JSD at the optimal number
of iterations and wall-clock time in minutes during training.
WGAN-GP trained with TTUR exhibits consistently a better FID.}
\label{tab:wgan_gp_lang}
\begin{tabular}{lllcccllccc}
 \toprule
  n-gram & method & b, a & iter & time(m) & JSD &  method &
  b = a  & iter & time(m) & JSD \\
 \midrule
 4-gram & TTUR &   3e-4, 1e-4 & 98k & 1150 &  {\bf 0.35} &
 orig &   1e-4 & 33k & 1040 & 0.38   \\
 6-gram & TTUR &   3e-4, 1e-4 & 100k & 1120 &  {\bf 0.74} &
 orig &   1e-4 & 32k & 1070 & 0.77   \\
 \bottomrule
 \end{tabular}
 \end{center}
\end{table}

\subsection{BEGAN}
The Boundary Equilibrium GAN (BEGAN) \cite{Berthelot:17} maintains an
equilibrium between the discriminator and generator loss (cf. Section 3.3 in
\cite{Berthelot:17})
\begin{align}
\EXP[\mathcal{L}(G(\Bz))] = \gamma \EXP [\mathcal{L}(\Bx)]
\end{align}
which, in turn, also leads to a fixed relation
between the two gradients, therefore, a two time-scale update is not ensured by
solely adjusting the learning rates. Indeed, for stable learning rates, we see
no differences in the learning progress between orig and TTUR as depicted in Figure~\ref{fig:began}.

\begin{figure}[H]
\centering
\includegraphics[width=0.49\textwidth]{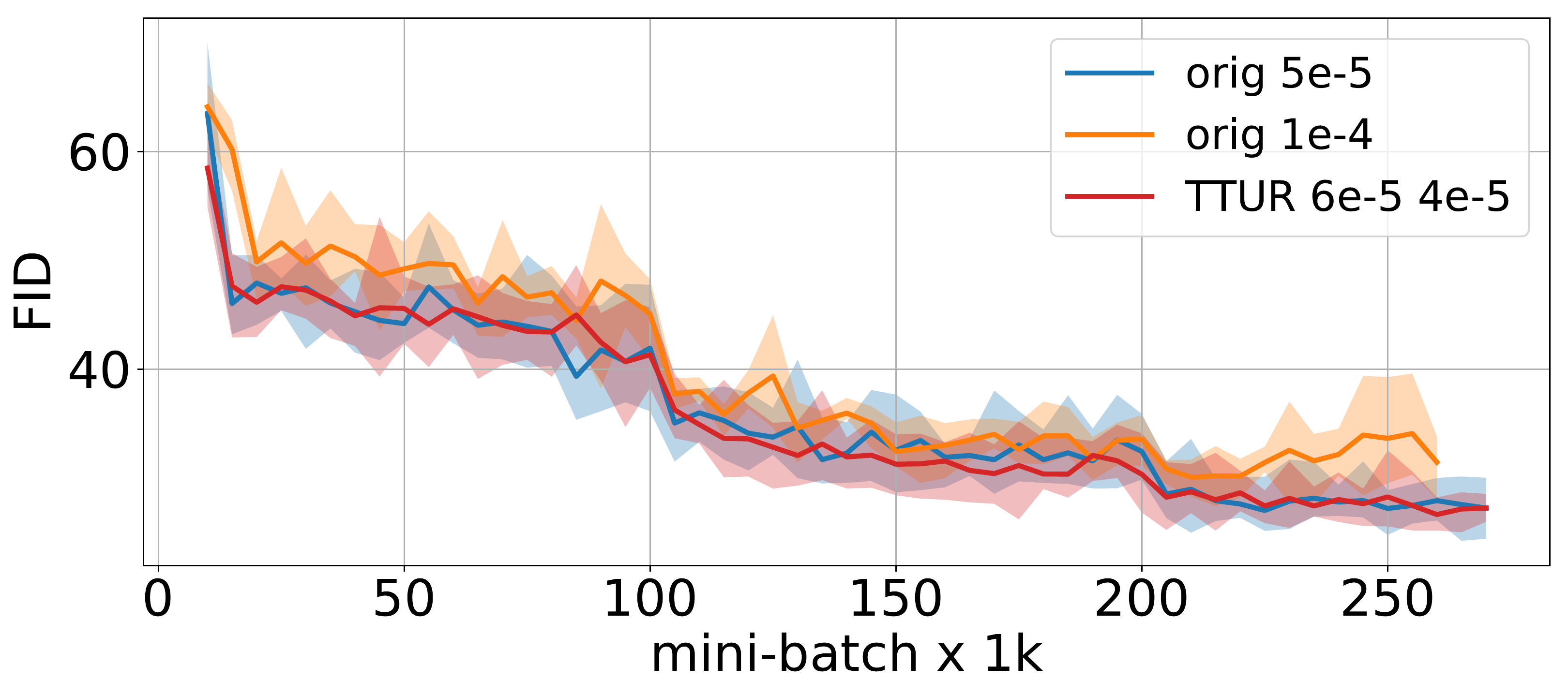}
\includegraphics[width=0.49\textwidth]{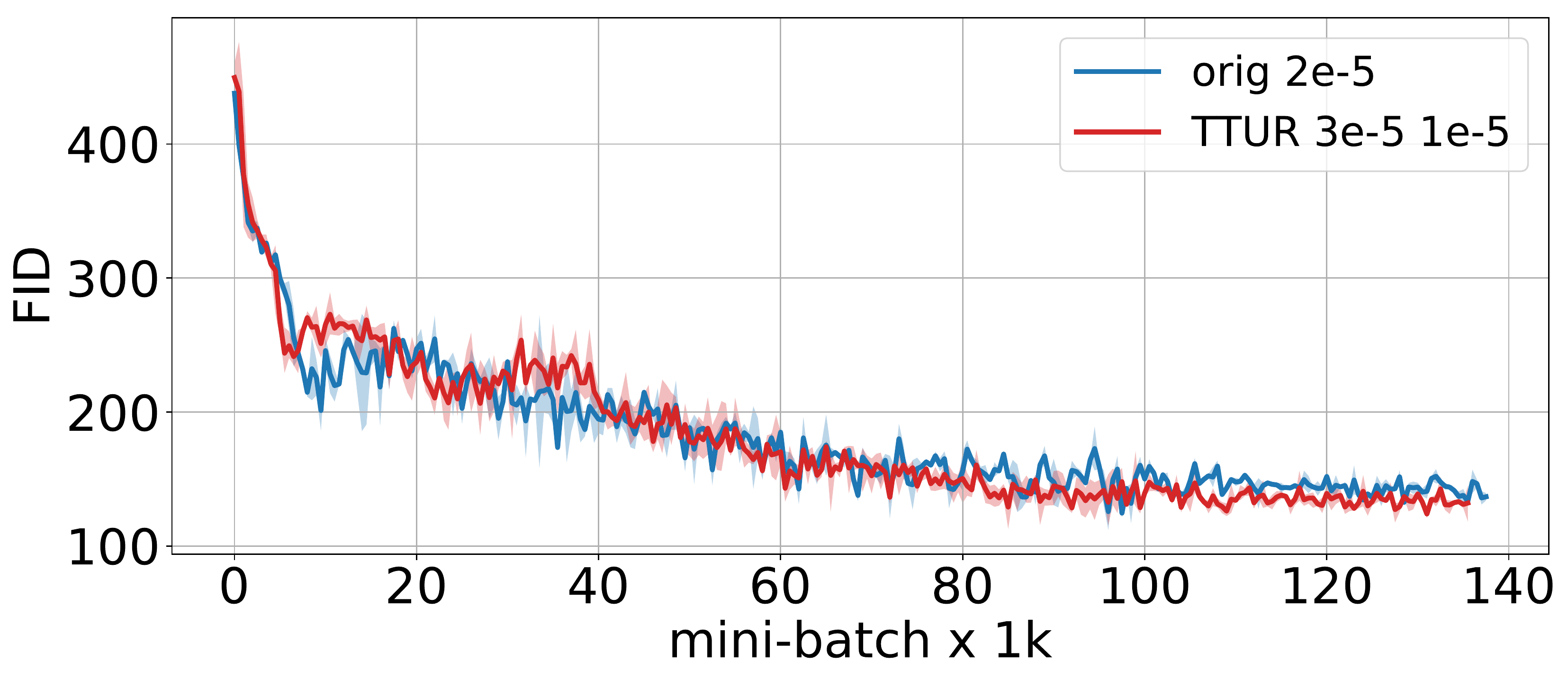} \caption[FID for
BEGAN trained on CelebA and LSUN Bedrooms.]{Mean, maximum and minimum FID over eight
runs for BEGAN training on CelebA and LSUN Bedrooms. TTUR learning rates are
given as pairs $(b,a)$ of discriminator learning rate $b$ and generator learning
rate $a$: ``TTUR $b$ $a$''.
{\bf Left:} CelebA, starting at mini-batch 10k for better visualisation. {\bf
Right:} LSUN Bedrooms.
Orig and TTUR behave similar. For BEGAN we cannot ensure TTUR by
adjusting learning rates.
  \label{fig:began} }
\end{figure}

\section{Discriminator vs. Generator Learning Rate}
\label{sec:lr}

The convergence proof for learning GANs with TTUR
assumes that the generator learning rate
will eventually become small enough to ensure
convergence of the discriminator learning.
At some time point, the perturbations of the discriminator updates
by updates of the generator parameters are sufficient small
to assure that the discriminator converges.
Crucial for discriminator convergence is the magnitude of the
perturbations which the generator induces into the
discriminator updates.
These perturbations are not only determined by the generator learning
rate but also by its loss function, current value of the loss
function, optimization method, size of the
error signals that reach the generator (vanishing or exploding
gradient), complexity of generator's learning task, architecture of
the generator, regularization, and others.
Consequently, the size of generator learning rate
does not solely determine how
large the perturbations of the discriminator updates are but serve to
modulate them.
Thus, the generator learning rate may be much larger than the
discriminator learning rate without inducing large perturbation into
the discriminator learning.

Even the learning dynamics of the generator is different from the
learning dynamics of the discriminator, though they both have the same
learning rate. Figure~\ref{fig:lr} shows the loss of the generator and
the discriminator for an experiment with DCGAN on
CelebA, where the learning rate was
0.0005 for both the discriminator and the generator.
However, the discriminator loss is decreasing while the generator loss
is increasing.
This example shows that the learning rate neither determines the
perturbations nor the progress in learning for two coupled update
rules.
The choice of the learning rate for the generator should be
independent from choice for the
discriminator.
Also the search ranges of discriminator and generator learning
rates should be independent from each other, but adjusted to
the corresponding architecture, task, etc.

\begin{figure}[H]
  \centering
  \includegraphics[width=1.0\linewidth]{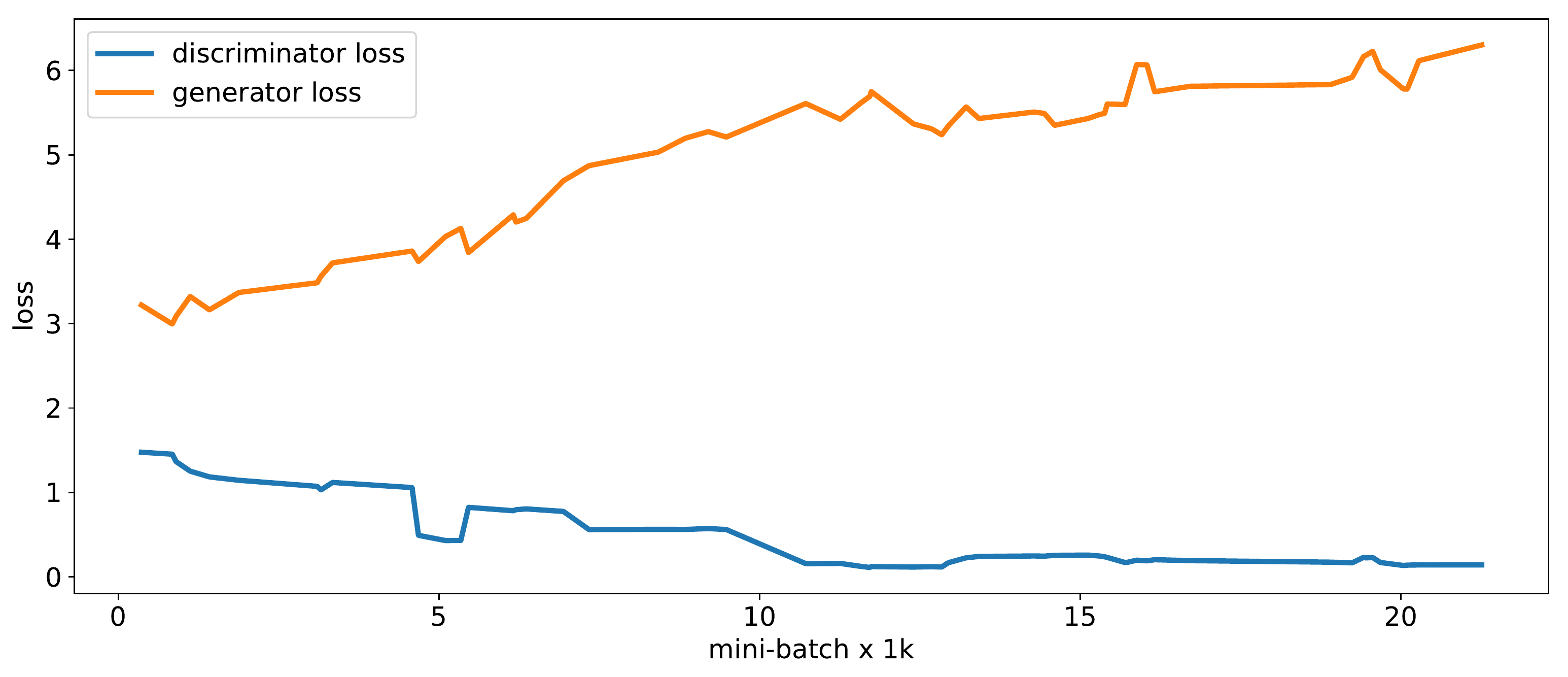}
  \caption[Learning dynamics of two networks.]{The respective losses of the
  discriminator and the generator show the different learning dynamics of the two networks.}
  \label{fig:lr}
\end{figure}%

\section{Used Software, Datasets, Pretrained Models, and Implementations}
\label{sec:soft}

We used the following datasets to evaluate GANs: The Large-scale CelebFaces
Attributes (CelebA) dataset, aligned and cropped \cite{Liu:15}, the training
dataset of the bedrooms category of the large scale image database (LSUN)
\cite{Yu:15}, the CIFAR-10 training dataset \cite{Krizhevsky:12}, the Street View
House Numbers training dataset (SVHN) \cite{Netzer:11}, and the One Billion Word
Benchmark \cite{Chelba:13}.

All experiments rely on the respective reference implementations for
the corresponding GAN model. The software framework for our experiments was
Tensorflow 1.3 \cite{Tensorflow:16, Tensorflow:16_2} and Python 3.6.
We used following software, datasets and pretrained models:
\begin{itemize}
  \item BEGAN in Tensorflow,
  \url{https://github.com/carpedm20/BEGAN-tensorflow}, Fixed random
  seeds removed. Accessed: 2017-05-30
  \item DCGAN in Tensorflow,
  \url{https://github.com/carpedm20/DCGAN-tensorflow}, Fixed random
  seeds removed. Accessed: 2017-04-03
  \item Improved Training of Wasserstein GANs, image model,
  \url{https://github.com/igul222/improved_wgan_training/blob/master/gan_64x64.py},
  Accessed: 2017-06-12
  \item Improved Training of Wasserstein GANs, language
  model, \url{https://github.com/igul222/improved_wgan_training/blob/master/gan_language.py}, Accessed: 2017-06-12
  %\item Improved Training of Wasserstein GANs, Small
  %ImageNet,
  % \url{https://github.com/igul222/improved_wgan_training/blob/master/gan_64x64.py}, Accessed: 2017-07-14
  \item Inception-v3 pretrained,
  \url{http://download.tensorflow.org/models/image/imagenet/inception-2015-12-05.tgz},
  Accessed: 2017-05-02
  %\item \label{soft:smallimg} Downsampled ImageNet,
  %\url{http://image-net.org/small/download.php}, Accessed: 2017-07-03
\end{itemize}

Implementations are available at
\begin{itemize}
  \item \url{https://github.com/bioinf-jku/TTUR}
\end{itemize}

%\section{References}
%\label{sec:references}

\bibliography{twoTime}
\bibliographystyle{plain}
\label{sec:references}

\addcontentsline{toc}{section}{List of Figures}
\listoffigures
%\newpage
\addcontentsline{toc}{section}{List of Tables}
\listoftables

\end{document}